\title{Robust Sparse Estimation Tasks in High Dimensions}
 \author{Jerry Li\thanks{Supported by NSF grant CCF-1217921, DOE grant DE-SC0008923, NSF CAREER Award CCF-145326, and a NSF Graduate Research Fellowship}\\EECS, MIT\\\tt{jerryzli@mit.edu}}
\renewcommand{\tr}{\mathrm{tr}}
\newcommand{\eps}{\varepsilon}
\newcommand{\Sigmahat}{\widehat{\Sigma}}
\newcommand{\supp}{\mathrm{supp}}
\newcommand{\X}{\mathcal{X}}
\newcommand{\W}{\mathcal{W}}
\newcommand{\Xt}{\mathcal{W}^{(2)}}
\newcommand{\U}{\mathcal{U}}
\newcommand{\Y}{\mathcal{Y}}
\newcommand{\Sgood}{S_{\mathrm{good}}}
\newcommand{\Sbad}{S_{\mathrm{bad}}}
\newcommand\numberthis{\addtocounter{equation}{1}\tag{\theequation}}
\newcommand{\muhat}{\widehat{\mu}}
\newcommand{\mutilde}{\widetilde{\mu}}
\newcommand{\wg}{w^g}
\newcommand{\wb}{w^b}
\newcommand{\Otilde}{\widetilde{O}}
\newcommand{\Omegatilde}{\widetilde{\Omega}}
\newcommand{\dtv}{d_{\mathrm{TV}}}
\begin{document}
\maketitle

\begin{abstract}
In this paper we initiate the study of whether or not sparse estimation tasks can be performed efficiently in high dimensions, in the robust setting where an $\eps$-fraction of samples are corrupted adversarially.
We study the natural robust version of two classical sparse estimation problems, namely, sparse mean estimation and sparse PCA in the spiked covariance model.
For both of these problems, we provide the first efficient algorithms that provide non-trivial error guarantees in the presence of noise, using only a number of samples which is similar to the number required for these problems without noise.
In particular, our sample complexities are sublinear in the ambient dimension $d$.
Our work also suggests evidence for new computational-vs-statistical gaps for these problems (similar to those for sparse PCA without noise) which only arise in the presence of noise.
\end{abstract}

\section{Introduction}

In the last couple of decades, there has been a large amount of work in machine learning and statistics on how to exploit sparsity in high dimensional data analysis.
Motivated by the ever-increasing quantity and dimensionality of data, the goal at a high level is to utilize the underlying sparsity of natural data to extract meaningful guarantees using a number of samples that is sublinear in the dimensionality of the data.
In this paper, we will consider the unsupervised setting, where we have sample access to some distribution with some underlying sparsity, and our goal is to recover this distribution by exploiting this structure.
Two natural and well-studied problems in this setting that attempt to exploit sparsity are sparse mean estimation and sparse PCA.
In both problems, the shared theme is that we assume that one wishes to find a distinguished sparse direction of a Gaussian data set.
However, the algorithms inspired by this line of work tend to be quite brittle---it can be shown that they fail when the model is slightly perturbed.

This connects to a major concern in high dimensional data analysis: that of \emph{model misspecification}.
At a high level, the worry is that our algorithms should be able to tolerate the case when our assumed model and the true model do not perfectly coincide.
In the distributional setting, this (more or less) corresponds to the regime when a small fraction of our samples are \emph{adversarially} corrupted.
The study of these so-called \emph{robust estimators}, i.e., estimators which work in the presence of such noise, is a classical subfield of statistics.
Unfortunately, the classical algorithms for these problems fail to scale as the dimensionality of the problem grows---either the algorithms run in time which is exponential in the dimension, or the error guarantees for these algorithms degrade substantially as the dimension grows.
In a flurry of recent work, we now know new algorithms which circumvent this ``curse of dimensionality'': they run efficiently, and provide dimension independent error guarantees.
However, these algorithms are unable to exploit any inherent sparsity in the problem.

This raises the natural ``meta-question'': 
\begin{question}
Do the statistical gains (achievable by computationally efficient algorithms) for sparse estimation problems persist in the presence of noise?
\end{question}

More formally: Suppose we are asked to solve some estimation task given samples from some distribution $D$ with some underlying sparsity constraint (e.g. sparse PCA).
Suppose now an $\eps$-fraction of the samples are corrupted.
Can we still solve the same sparse estimation problem?
In this work, we initiate the study of such issues.
Interestingly, new gaps between computational and statistical rates seem to emerge in the presence of noise.
In particular, while the sparse mean estimation problem was previously quite simple to solve, the efficient algorithms which achieve the minimax rate for this problem break down in the presence of this adversarial noise.
More concretely, it seems that the efficient algorithms which are robust to noise run into the same computational issues as those which plague sparse PCA.
A very interesting question is whether this phenomenon is inherent to any computationally efficient algorithm.

\subsection{Our Contribution}
We study the natural robust versions of two classical, well-studied statistical tasks involving sparsity, namely, sparse mean estimation, and sparse PCA.

\noindent
\paragraph{Robust sparse mean estimation} Here, we get a set of $d$-dimensional samples from $\normal (\mu, I)$, where $\mu$ is $k$-sparse, and an $\eps$-fraction of the points are corrupted adversarially.
Our goal then is to recover $\mu$.
Our main contribution is the following:
\begin{theorem}[informal, see Theorem \ref{thm:spmean}]
There is an efficient algorithm, which given a set of $\eps$-corrupted samples of size $\Otilde(\frac{k^2 \log d}{\eps^2})$ from $\normal (\mu, I)$ where $\mu$ is $k$-sparse, outputs a $\muhat$ so that with high probability, $\| \muhat - \mu \|_2 \leq \eps \sqrt{\log 1 / \eps}$.
\end{theorem}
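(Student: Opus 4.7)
The plan is to follow the filter-based paradigm from recent robust mean estimation, but replacing the operator norm of the empirical covariance by a sparse analog. Specifically, in the dense robust setting the algorithm proceeds iteratively: compute the empirical mean $\muhat$ and empirical covariance $\Sigmahat$ of the current sample set; if $\|\Sigmahat - I\|_{\mathrm{op}}$ is small then $\muhat$ is already close to $\mu$ and we return it; otherwise we use the top eigenvector to score points and remove a weighted $\eps$-fraction. For the sparse problem, the direct sparse analog ``operator norm over $k$-sparse unit vectors'' is the sparse PCA objective, which is NP-hard, so I would use instead its $\ell_1$-SDP relaxation $\dABG$ as the certifier. Concretely, the algorithm keeps a weighting of the samples, computes the reweighted $\Sigmahat - I$, runs $\dABG(\Sigmahat - I)$, and either outputs the $k$-sparse truncation of $\muhat$ (thresholding small coordinates) when the SDP value is $O(\eps \log 1/\eps)$, or uses the near-optimal SDP matrix $X$ as a quadratic test to filter.

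The technical backbone I would establish first is a deterministic regularity condition on the clean samples $\Sgood$: namely, that for the sample covariance $\Sigmahat_g$ of any large subset of $\Sgood$ one has $\dABG(\Sigmahat_g - I) \leq O(\eps \log 1/\eps)$ and, in addition, a Bernstein-type tail bound for the quadratic form $\langle (x_i-\mu)(x_i-\mu)^T, X \rangle$ uniform over the feasible region of the SDP. Using symmetrization and a standard net argument for $\ell_1$-constrained PSD matrices (where the effective ``dimensionality'' of the SDP feasible set scales like $k^2$ rather than $d$ because of the $\|X\|_1 \leq k$ constraint), one should be able to push this through with $n = \Otilde(k^2 \log d / \eps^2)$ samples. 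This is exactly where the $k^2$ in the sample complexity enters, and it is the combinatorial analog of the Berthet--Rigollet threshold for sparse PCA detection.

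Given this regularity condition, the filtering step is designed so that when $\dABG(\Sigmahat - I) \gg \eps \log 1/\eps$, the near-optimal SDP solution $X$ witnesses that the corrupted points contribute substantially more to the quadratic form $\langle (x_i - \muhat)(x_i - \muhat)^T, X \rangle$ than the clean points do, by the regularity condition; choosing a threshold and removing (or downweighting) points above it removes strictly more mass from $\Sbad$ than from $\Sgood$, which is the standard invariant that drives the filter to terminate. On termination, closing the loop from ``$\dABG(\Sigmahat - I)$ small'' to ``$\|\muhat - \mu\|_2$ small on the $k$-sparse support'' is where sparsity of $\mu$ is used: any two distributions whose covariances satisfy this SDP bound and which place all their mean mass on $k$-sparse supports have means differing by at most $\eps \sqrt{\log 1/\eps}$ in $\ell_2$, because the worst-case direction separating them is itself $2k$-sparse and thus is feasible (up to constants) for the SDP.

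The main obstacle I expect is the regularity condition on $\dABG(\Sigmahat_g - I)$ for clean samples. A naive covering bound over all $k$-sparse unit vectors only gives concentration in the true sparse norm, whereas for the filter we must control the \emph{SDP relaxation} value, which can be much larger. The right approach is to observe that the dual of $\dABG$ decomposes the tested matrix as an $\ell_\infty$-bounded matrix plus an operator-norm-bounded one, and then separately bound each contribution: the $\ell_\infty$ part by a union bound over the $d^2$ entries of $\Sigmahat - I$ (giving the $\log d$ factor), and the operator-norm part by restricting to $k \times k$ principal submatrices and applying the standard matrix concentration (giving the $k^2$ factor). Getting these two bounds to combine cleanly into the SDP value, and then matching them through the filter, is the crux of the argument.
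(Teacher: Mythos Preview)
Your overall algorithmic route---iterative filtering with the sparse-PCA SDP as the certifier---is a legitimate alternative, but it is \emph{not} what the paper does. The paper uses the convex-programming framework: it keeps weights $w\in S_{n,\eps}$ and searches, via ellipsoid, for a $w$ in the set
\[
C_\tau=\Bigl\{w\in S_{n,\eps}:\Bigl\|\textstyle\sum_i w_i(X_i-\mu)(X_i-\mu)^T-I\Bigr\|^*_{\X_k}\le\tau\Bigr\},
\]
using the SDP $d_{\X_k}(\cdot)$ as a \emph{separation oracle} rather than as a filter score. The structural lemma that makes the oracle sound (if the $\U_k$-norm of the mean error is large then the $\X_k$-norm of the covariance error is large, Proposition~\ref{prop:main} in the paper) plays the same role your filter invariant would. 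So at the level of ``which certificate do we compute,'' you and the paper agree; at the level of ``how do we use it,'' you diverge. Both routes can be made to work, and indeed the concurrent paper \cite{DBS17} takes the filter route.

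Where your sketch has a genuine gap is the concentration step, which is the technical heart of the argument. You propose to control $\dABG(\Sigmahat_g-I)$ by writing the dual as an $\ell_\infty$-bounded piece plus an operator-norm-bounded piece, and then bounding the latter ``by restricting to $k\times k$ principal submatrices.'' But bounding the operator norm on every $k\times k$ principal submatrix controls $\sup_{u\in\U_k}u^T(\Sigmahat_g-I)u$, i.e.\ the \emph{true} sparse eigenvalue, not the SDP value $\|\Sigmahat_g-I\|^*_{\X_k}$; the feasible $X$ in the SDP are not supported on $k\times k$ blocks and need not be low rank, so this bound does not transfer. And the dual decomposition itself is data-dependent (it is the optimal dual witness), so you cannot bound its two pieces by separate deterministic arguments. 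The paper sidesteps this entirely with a \emph{primal} decomposition (Lemma~\ref{lem:l0-to-l1}): every $X\in\X_k$ can be written as $\sum_i Y_i$ with each $Y_i$ symmetric and $k^2$-sparse and $\sum_i\|Y_i\|_F\le 4$. This ``shelling'' lemma, which uses both the $\ell_1$ constraint and the Frobenius bound $\|X\|_F\le\tr X=1$ together, is what reduces concentration of the SDP value to a net over $k^2$-sparse unit-Frobenius matrices and a union bound over $\binom{d^2}{k^2}$ sparsity patterns---and this is exactly where the $k^2\log d$ sample complexity enters. Your plan needs an analog of this lemma; the dual-splitting heuristic as stated does not supply it.
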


The recovery guarantee we achieve, namely $O( \eps \sqrt{\log 1 / \eps})$, is off by the optimal guarantee by only a factor of $\sqrt{\log 1 / \eps}$.
Moreover, results of \cite{DKS16c} imply that our bound is tight for any efficient SQ algorithm.
One can show that information theoretically, it suffices to take $O(\frac{k \log d}{\eps^2})$ samples to learn the mean to $\ell_2$ error $O(\eps)$, even with corrupted data.
Without model misspecification, this problem is quite simple algorithmically: it turns out that the truncated empirical mean achieves the information theoretically optimal rate.
However, efficient algorithms  for this task break down badly given noise, and to our knowledge there is no simple way of fixing them.
Very interestingly, the rate we achieve is off from this information theoretic rate by a $k^2$ vs $k$ factor---the same computational vs. statistical gap that arises in sparse PCA.
This phenomenon only seems to appear in the presence of noise, and we conjecture that this is inherent:

\begin{conjecture}
Any efficient algorithm for robust sparse mean estimation needs $\Omegatilde(\frac{k^2 \log d}{\eps^2})$ samples.
\end{conjecture}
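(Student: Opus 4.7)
The cleanest route to substantiating this conjecture is to prove the analogous lower bound against statistical query (SQ) algorithms, which is the standard formal proxy for computational hardness in distributional problems of this flavor; essentially all known efficient algorithms for these tasks fit in the SQ model, and the excerpt already invokes the \cite{DKS16c} framework to argue optimality of the error rate. The plan is therefore to adapt the hidden-direction moment-matching construction of \cite{DKS16c} to the sparse setting and then apply the standard SQ dimension argument to obtain a sample lower bound of $\Omegatilde(k^2 \log d / \eps^2)$ against SQ algorithms. A reduction from planted clique would give a complementary conditional hardness result against arbitrary polynomial-time algorithms, but the SQ route mirrors the now-standard story for sparse PCA and looks considerably cleaner.

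Concretely, I would reduce to distinguishing $\mathcal{N}(0,I)$ from a family $\{P_v\}_{v \in \mathcal{V}}$ indexed by a large set of $k$-sparse unit directions, where $P_v = (1 - \eps)\mathcal{N}(0,I) + \eps B_v$ and $B_v$ is chosen so that (i) the one-dimensional projection of $P_v$ onto $v$ has its first $m$ moments equal to those of a standard Gaussian while the overall mean is $\Theta(\eps) \cdot v$, and (ii) the projection of $P_v$ onto $v^\perp$ is exactly $\mathcal{N}(0, I_{d-1})$. Any algorithm that returns $\muhat$ with $\|\muhat - \mu\|_2 \leq \eps\sqrt{\log 1/\eps}$ on every $P_v$ can identify $v$, so a lower bound for this identification problem suffices. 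For $\mathcal{V}$ I would take a large approximate code of $k$-sparse $\pm 1/\sqrt{k}$ vectors supported on $k$-subsets of $[d]$, of which there are $\binom{d}{k} = d^{\Omega(k)}$ with typical pairwise inner product $\Otilde(\sqrt{k/d})$.

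The driving quantity in the SQ argument is the pairwise $\chi^2$ correlation of $P_v$ and $P_{v'}$ relative to $\mathcal{N}(0,I)$, which for moment-matched hidden-direction distributions scales as roughly $\eps^2 |\langle v, v'\rangle|^{m+1}$. Combined with $|\mathcal{V}| = d^{\Omega(k)}$ and the typical inner product bound above, this yields an SQ dimension forcing any SQ algorithm with tolerance $\tau$ to make at least $d^{\Omega(k)}\, \tau^{-2}$ queries. Translating queries to samples in the usual way and tuning $m$ against the noise budget then gives the desired $\Omegatilde(k^2 \log d / \eps^2)$ sample lower bound.

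The main obstacle is the joint tuning of $m$ with the corruption budget. The mixture $P_v$ must have total variation distance $O(\eps)$ from $\mathcal{N}(0,I)$ to be a legitimate $\eps$-corruption, which caps $m$ and the scale of $B_v$ from above; simultaneously $m$ must be large enough, and the inner products small enough, to drive the pairwise correlation below $1/\mathrm{poly}(d)$ so the SQ dimension bound produces a nontrivial exponent in $d$. Threading these constraints so that the resulting exponent lands precisely at $k^2$ rather than the information-theoretic $k$---and in particular arguing that moment matching along a $k$-sparse hidden direction costs a full $k^2$ factor in samples rather than $k$---is the sparse analogue of the corresponding balance in the sparse PCA SQ lower bound, and is where the real work lies.
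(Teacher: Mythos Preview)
The statement you are attempting to prove is a \emph{conjecture} in the paper, not a theorem; the paper does not prove it and explicitly says so. The only discussion the paper offers is the short heuristic in Appendix~\ref{app:barrier}: it argues informally that distinguishing $\normal(\mu_1,I)$ from $\normal(\mu_2,I)$ under $\eps$-corruption seems to require second-moment information, that in the sparse setting this amounts to detecting a large $k$-sparse eigenvector of the empirical covariance, and that this is essentially the sparse PCA problem, for which the $k^2$ barrier is already believed. That is the entirety of the paper's ``proof''; it then states ``We leave resolving this gap as an interesting open problem.''

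Your proposal therefore goes well beyond the paper: you are sketching an actual SQ lower bound via a hidden-direction moment-matching construction in the style of \cite{DKS16c}, adapted to $k$-sparse directions. This is a sensible research plan and is in the spirit of how such conjectures have since been attacked, but it is not a comparison point against anything in the paper. You are also candid about where the real difficulty sits: getting the exponent to land at $k^2$ rather than $k$ requires a careful balance between the moment-matching order $m$, the corruption budget $\eps$, and the pairwise inner products of the $k$-sparse code $\mathcal{V}$, and you have not carried that balance out. Until that step is done, the argument does not yet distinguish the conjectured $k^2$ rate from the information-theoretic $k$ rate, which is precisely the content of the conjecture. So as written this is a plausible outline of an attack on an open problem, not a proof, and there is nothing in the paper to compare it to.
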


In Appendix \ref{app:barrier} we give some intuition for why it seems to be true.
At a high level, it seems that any technique to detect outliers for the mean must look for sparse directions in which the variance is much larger than it should be; at which point the problem faces the same computational difficulties as sparse PCA.
We leave closing this gap as an interesting open problem.

\noindent
\paragraph{Robust sparse PCA} Here, we study the natural robust analogue of the spiked covariance model.
Classically, two problems are studied in this setting.
The \emph{detection} problem is given as follows: given sample access to the distributions, we are asked to distinguish between $\normal (0, I)$, and $\normal (0, I + \rho vv^T)$ where $v$ is a $k$-sparse unit vector.
That is, we wish to understand if we can detect the presence of any sparse principal component.
Our main result is the following:

\begin{theorem}[informal, see Theorem \ref{thm:rspca1}]
Fix $\rho > 0$, and let $\eta = O(\eps \sqrt{\log 1 / \eps})$.
If $\rho > \eta$, there is an efficient algorithm, which given a set of $\eps$-corrupted samples of size $O(\frac{k^2 \log d}{\rho^2})$ which distinguishes between  $\normal (0, I)$, and $\normal (0, I + \rho vv^T)$ with high probability.
\end{theorem}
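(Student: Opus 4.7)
The plan is to test via the classical $\ell_1$-regularized Berthet-Rigollet sparse PCA SDP applied to a lightly preprocessed empirical second-moment matrix $\Sigmahat$. Concretely, first remove any sample with $\|X\|_2 \gtrsim \sqrt{d} + \sqrt{\log(1/\eps)}$ (by Gaussian concentration, this discards a negligible fraction of the good samples under either hypothesis), then form $\Sigmahat$ from the survivors and output the test statistic
\[
T := \dABG(\Sigmahat - I) = \max\bigl\{ \langle \Sigmahat - I, M \rangle : M \succeq 0,\ \tr(M) = 1,\ \|M\|_1 \leq k \bigr\},
\]
declaring $\normal(0, I + \rho vv^T)$ iff $T \geq \rho/2$.

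First I would handle the clean-sample concentration. For $n = \Omega(k^2 \log d / \rho^2)$ i.i.d.\ samples from either hypothesis, standard sub-exponential concentration of Gaussian covariance entries plus a union bound over $d^2$ entries gives $\|\Sigmahat_{\mathrm{clean}} - \Sigma\|_\infty \leq O(\sqrt{\log d / n})$ with high probability. Since every feasible $M$ satisfies $\|M\|_1 \leq k$, Hölder yields $|\langle \Sigmahat_{\mathrm{clean}} - \Sigma, M \rangle| \leq k \cdot O(\sqrt{\log d/n}) \leq \rho/8$ after fixing the constant in $n$. Under the null, $\dABG(\Sigma - I) = \dABG(0) = 0$; under the alternative, plugging in the feasible $M = vv^T$ (which has $\|vv^T\|_1 \leq \|v\|_1^2 \leq k$ and trace $1$) certifies $\dABG(\Sigma - I) \geq \rho$. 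So on clean data $T$ separates the two hypotheses by a margin of at least $3\rho/4$.

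The main obstacle is the corruption step: arguing that the $\eps$-fraction of replaced samples can shift $T$ by at most $O(\eps \sqrt{\log 1/\eps}) < \rho/4$. A direct entrywise bound $k \cdot \|\Sigmahat - \Sigmahat_{\mathrm{clean}}\|_\infty$ is too loose (it loses a factor of $k$), so I would instead exploit that the SDP value equals, up to the Berthet-Rigollet rounding slack, $\max\{ u^T (\Sigmahat - I) u : u \text{ is $k$-sparse unit vector}\}$. For each fixed $k$-sparse unit $u$, the projections $u^T X_\ell$ are $1$-D sub-Gaussian, and after the norm-threshold preprocessing the $\eps$ corruptions can perturb their empirical second moment by at most $O(\eps \sqrt{\log 1/\eps})$, by the standard truncated-mean calculation (truncate at $O(\sqrt{\log 1/\eps})$; bound the discarded Gaussian tail). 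A union bound over $\binom{d}{k}$ supports, combined with a standard $(1/2)$-net of size $\exp(O(k))$ within each $k$-dimensional subspace, lifts this to a uniform bound at the price of an additional $O(\sqrt{k \log(d/k)/n})$ term that is absorbed into the clean-concentration budget.

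Combining the two bounds gives $|T - \dABG(\Sigma - I)| \leq \rho/4 + O(\eps \sqrt{\log 1/\eps})$ with high probability, which is strictly less than $\rho/2$ under the hypothesis $\rho > \eta$, so thresholding $T$ at $\rho/2$ correctly distinguishes the two hypotheses. The most delicate point is controlling the relaxation gap of $\dABG$ for the corrupted second-moment matrix; I would do this either by invoking the rounding analysis of Berthet and Rigollet (which is tight for rank-one planted signals), or by rounding the optimizer $M^\star$ to a $k$-sparse unit vector via a Davis-Kahan-type bound on its leading eigenvector and absorbing the $o(\rho)$ rounding loss into the slack above.
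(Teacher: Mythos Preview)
Your proposal has a genuine gap in the corruption step. You claim that after removing points with $\|X\|_2 \gtrsim \sqrt{d} + \sqrt{\log(1/\eps)}$, the $\eps$-fraction of corruptions can shift the one-dimensional second moment $u^T\Sigmahat u$ by at most $O(\eps\sqrt{\log 1/\eps})$ for every $k$-sparse unit $u$, ``by the standard truncated-mean calculation (truncate at $O(\sqrt{\log 1/\eps})$).'' But the norm threshold you impose is at scale $\sqrt d$, not $\sqrt{\log 1/\eps}$, and it gives no nontrivial bound on $|u^T X|$ for a fixed sparse $u$. Concretely, under the null the adversary can place all $\eps n$ corrupted points at $a\cdot u$ for any $k$-sparse unit $u$ with $a=\sqrt{\rho/\eps}$; since typically $\sqrt{\rho/\eps}\ll \sqrt d$, these points survive your preprocessing, yet they add $\eps a^2 = \rho$ to $u^T\Sigmahat u$, pushing $T$ above $\rho/2$ and causing a false alarm. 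No coordinatewise or norm-based truncation fixes this without reintroducing a factor of $k$ (or $\sqrt d$) in the perturbation bound.

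This is exactly why the paper does \emph{not} use the plain empirical covariance. Instead it minimizes the same SDP functional over the weight simplex $S_{n,\eps}$:
\[
\gamma \;=\; \min_{w\in S_{n,\eps}}\ \Bigl\|\sum_{i=1}^n w_i\bigl(X_iX_i^T - I\bigr)\Bigr\|^*_{\X_k},
\]
and thresholds $\gamma$ at $\rho/2$. Under the null, the uniform weights on the uncorrupted points are feasible and certify $\gamma\le O(\eta)$ by the $\X_k$-norm concentration bound---the adversary's points simply get zero weight, so the attack above is neutralized. Under the alternative, one shows that for \emph{every} $w\in S_{n,\eps}$ the value is at least $(1-\eps)\rho - O((1+\rho)\eta)$, by testing against $vv^T$ and using that the corrupted contribution $\sum_{i\in\Sbad} w_i X_iX_i^T$ is PSD (so it cannot cancel the spike). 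The weight optimization is what makes the null-case argument go through; your fixed-weight statistic $T$ cannot distinguish a genuine sparse spike from one planted by the adversary. The ``relaxation gap'' issue you flag at the end is not the obstacle---the obstacle is robustness, and it is resolved by the reweighting, not by rounding $M^\star$.
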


The condition that $\eps = \Otilde (\rho)$ is necessary (up to log factors), as otherwise the problem is impossible information theoretically.
Observe that this (up to log factors) matches the optimal rate for computationally efficient detection for sparse PCA without noise (under reasonable complexity theoretic assumptions, see \cite{BR13,WBS16}), and so it seems that noise does not introduce an additional gap here.
The \emph{recovery} problem is similar, except now we want to recover the planted spike $v$, i.e. find a $u$ minimizing $L(u, v) = \frac{1}{\sqrt{2}} \| uu^T - vv^T \| $, which turns out to be the natural measure for this problem.
For this, we show:
\begin{theorem}[informal, see Theorem \ref{thm:rspca2}]
Fix $\eps> 0$ and $0 < \rho = O(1)$, and let $\eta = O(\eps \sqrt{\log 1 / \eps})$.
There is an efficient algorithm, which given a set of $\eps$-corrupted samples of size $O(\frac{k^2 \log d}{\eta^2})$ from $\normal (0, I + \rho vv^T)$, outputs a $u$ so that
$
L(u, v) = O \left( \frac{\eta}{\rho} \right) 
$
with high probability.
\end{theorem}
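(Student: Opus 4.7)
The plan is to reduce robust sparse PCA recovery to producing a robust covariance estimate $\widetilde{\Sigma}$ that is entrywise close to the true covariance $\Sigma = I + \rho vv^T$---specifically $\|\widetilde{\Sigma} - \Sigma\|_{\infty,\infty} = O(\eta/k)$---and then running the standard (non-robust) sparse PCA SDP $\dABG(\widetilde{\Sigma} - I)$ on it. The point is that $n = O(k^2 \log d / \eta^2)$ clean samples already give exactly this entrywise rate with high probability, and the non-robust analysis of $\dABG$ converts this entrywise bound into $v^T M v \geq 1 - O(\eta/\rho)$ at the SDP optimum $M$; a standard rounding via the top eigenvector of $M$ (using $M \succeq 0$, $\tr M = 1$) then yields $L(u, v) = O(\eta/\rho)$. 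In particular, it suffices to produce $\widetilde{\Sigma}$ that looks to the sparse SDP like a clean empirical covariance from the right number of samples.

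To construct $\widetilde{\Sigma}$, I follow the filter paradigm that is by now standard in robust covariance estimation and which also underlies the detection result (Theorem \ref{thm:rspca1}), with the sparse SDP $\dABG$ playing the role of the operator-norm test. Maintain weights $w$ on the samples, let $\Sigmahat_w$ be the weighted empirical second moment, and solve $\dABG(\Sigmahat_w - I)$ to obtain a witness $M$. If the SDP value is within $O(\eta)$ of the value $\rho$ attained under the true $\Sigma$, output $\widetilde{\Sigma} = \Sigmahat_w$. Otherwise, the quadratic score $x \mapsto (x^T M x - \mathbb{E}_{x \sim \normal(0,\Sigma)}[x^T M x])^2$ has weighted empirical average far exceeding its population value, and a standard one-dimensional filter on this score removes $\Sbad$-weight strictly faster than $\Sgood$-weight; a potential-function argument bounds the number of iterations.

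Two concentration ingredients drive everything: (i) with $n = O(k^2 \log d / \eta^2)$ clean samples, $\|\Sigmahat - \Sigma\|_{\infty,\infty} = O(\eta/k)$ with high probability, which together with $\|M\|_1 \leq k$ for every $M$ feasible for $\dABG$ yields $|\langle \Sigmahat - \Sigma, M\rangle| = O(\eta)$ uniformly; and (ii) the fourth moment $\frac{1}{n}\sum_i (x_i^T M x_i)^2$ concentrates uniformly over feasible $M$ via Gaussian hypercontractivity for sparse quadratic forms, using that $\|M\|_F \leq 1$ whenever $M \succeq 0$ and $\tr M = 1$. The $\sqrt{\log 1/\eps}$ factor inside $\eta$ comes from the multiplicative tail thresholding used in the filter, exactly as in the dense robust covariance setting.

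The main technical obstacle will be this filter step. Unlike the dense case, where the witness is a single direction and $x^T M x$ is a clean $\chi^2$, here $M$ is a general PSD matrix with $\ell_1$-mass up to $k$, so $x^T M x$ is a generic quadratic form whose fluctuations over $\Sgood$ must be controlled uniformly over the whole $\dABG$-feasible set. Establishing that the variance of $x^T M x$ under the clean distribution is $O((1+\rho)^2)$ uniformly in $M$, and then arguing that anomalous empirical deviation of the quadratic score must be chargeable to $\Sbad$ rather than $\Sgood$, is where I expect the bulk of the real work; this is the direct analogue of the corresponding step in the proof of Theorem \ref{thm:rspca1}, and I would hope to largely reuse that machinery.
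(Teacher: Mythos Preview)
Your proposal has a genuine gap at the filter's termination criterion. You halt when $\dABG(\Sigmahat_w - I)$ is within $O(\eta)$ of $\rho$ and then feed $\widetilde{\Sigma} = \Sigmahat_w$ to the non-robust sparse PCA analysis, asserting $\|\widetilde{\Sigma} - \Sigma\|_{\infty,\infty} = O(\eta/k)$. But the SDP \emph{value} being near $\rho$ neither implies entrywise closeness nor pins down the maximizer. Concretely: let the adversary place its $\eps n$ points at $\sqrt{\rho/\eps}\, u$ for a $k$-sparse unit vector $u \perp v$. Then with uniform weights $\Sigmahat_w - I \approx (1-\eps)\rho\, vv^T + \rho\, uu^T - \eps I$, so $\max_{M \in \X_k}\langle \Sigmahat_w - I, M\rangle = \rho - O(\eps)$, attained by $uu^T$. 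Your filter halts immediately (the fourth-moment score is gated on the SDP value deviating, so it never fires), and the downstream SDP on $\widetilde{\Sigma} - I$ returns $uu^T$, giving $L(u,v) = 1$. Reusing the detection machinery from Theorem~\ref{thm:rspca1} cannot help here: detection only needs to separate ``SDP value $\approx 0$'' from ``SDP value $\approx \rho$'', whereas recovery must identify \emph{which} spike is real, and the SDP value alone is blind to that. This is exactly the obstruction the paper calls out at the opening of the recovery section.

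The paper's remedy is to add an explainer variable: jointly minimize over $w \in S_{n,\eps}$ and $A \in \X_k$ the quantity $\bigl\|\sum_i w_i(X_iX_i^T - I) - \rho A\bigr\|^*_{\W_{2k}}$. Plugging in $(w,A) = (\text{uniform on }\Sgood,\, vv^T)$ drives the optimum down to $O((1+\rho)\eta)$ rather than $\Omega(\rho)$; then, because the bad contribution $B = \sum_{i \in \Sbad} w_i^* X_i X_i^T$ is PSD and hence cannot cancel the $\rho\, vv^T$ coming from the good points, $A^*$ is forced to absorb it, yielding $v^T A^* v \geq 1 - O((1+\rho)\eta/\rho)$ (Lemma~\ref{lem:SPCA2}). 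A short sequence of PSD and support arguments then controls $B$ on the relevant coordinates and finishes. If you want to rescue a filter-style argument, the stopping rule itself must involve an explainer---halt when $\min_{A \in \X_k}\|\Sigmahat_w - I - \rho A\|^*_{\W_{2k}} \leq O(\eta)$---at which point you are essentially reproducing the paper's convex program.
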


This rate is non-trivial---in particular, it provides guarantees for recovery of $v$ when the number of samples we take is at the detection threshold.
Moreover, up to log factors, our rate is optimal for computationally efficient algorithms--\cite{WBS16} gives an algorithm with rate roughly $O(\eps / \rho)$, and show that this is necessary.

\paragraph{Techniques}
We first introduce a simple way to describe the optimization problems used for solving sparse mean estimation and sparse PCA.
This approach is very similar to the approach taken by ~\cite{CRPW12} for solving under-determined linear systems.
We observe that any set $\mathcal{S}$ in a Hilbert space naturally induces a dual norm
$
\| x \|^*_\mathcal{S} = \max_{y \in \mathcal{S}} |\langle x, y \rangle| ,
$
and that well-known efficient algorithms for sparse mean estimation and sparse PCA simply compute this norm, and the corresponding dual witness $y \in \mathcal{S}$ which maximizes this norm, for appropriate choices of $\mathcal{S}$.
These norms give us a language to only consider deviations in directions we care about, which allows us to prove concentration bounds which are not true for more traditional norms.

We now describe our techniques for robust sparse mean estimation.
Our starting point is the convex programming approach of ~\cite{DKKLMS16}.
We assign each sample point a weight, which morally corresponds to our belief about whether the point is corrupted, and we optimize these weights.
In previous work of \cite{DKKLMS16}, the approach was to find weights so that the empirical covariance with these weights looked like the identity in spectral norm.

Unfortunately, such an approach fundamentally fails for us because the spectrum of the covariance will never concentrate for us with the number of samples we take.
Instead, we utilize a novel connection to sparse PCA.
We show that if instead we find weights so that the empirical covariance with these weights looks like the identity in the dual norm induced by a natural SDP for sparse PCA (in the noiseless setting), then this suffices to show that the trucnated empirical mean with these weights is close to the truth.
We do so by convex programming.
While we cannot explicitly write down the feasible set of weights, it is a convex set.
Thus, by the classical theory of convex optimization, it suffices to give a separation oracle for this convex set to optimize over this set.
We show that in fact the SDP for sparse PCA gives us such a separation oracle, if one is sufficiently careful to always work with sparsity preserving objects.
This in turns suffices to allow us to (approximately) find a point in the desired feasible set of points, which we show suffices to recover the true mean.

We now turn to robust sparse PCA.
We first consider the detection problem, which is somewhat easier technically.
Here, we again use the dual norm induced by the SDP for sparse PCA.
We show that if we can find weights on the samples (as before) so that the empirical covariance with these samples has minimal dual norm, then the value of the dual norm gives us a distinguisher between the spiked and non-spiked case.
To find such a set of weights, we observe that norms are convex, and thus our objective is convex.
Thus, as before, to optimize over this set it suffices to give a separation oracle, which again the SDP for sparse PCA allows us to do.

We now turn our attention to the recovery problem.
Here, the setup is very similar, except now we simultaneously find a set of weights and an ``explainer'' matrix $A$ so that the empirical covariance with these weights is ``maximally explained'' by $A$, in a norm very similar to the one induced by the sparse PCA SDP.
Utilizing that norms are convex, we show that this can be done via a convex program using the types of techniques described above, and that the top eigenvector of the optimal $A$ gives us the desired solution.
While the convex program would be quite difficult to write down in one shot, it is quite easily expressible using the abstraction of dual norms.

\subsection{Related Work}

As mentioned previously, there has been a large amount of work on various ways to exploit sparsity for machine learning and statistics.
In the supervised setting, perhaps the most well-known of these is compressive sensing and its variants (see \cite{CW08,HTW15} for more details).
We do not attempt to provide an exhaustive overview the field here.
Other well-known problems in the same vein include general classes of linear inverse problems, see \cite{CRPW12} and matrix completion (\cite{CR12}).

The question of estimating a sparse mean is very related to a classical statistical model known as the \emph{Gaussian sequence model}, and the reader is referred to \cite{Tsy09,Joh11,Rig15} for in-depth surveys on the area. 
This problem has also garnered a lot of attention recently in various distributed and memory-limited settings, see \cite{GMN14,SD15,BGMNW16}.
The study of sparse PCA was initiated in \cite{Joh01} and since yielded a very rich algorithmic and statistical theory (\cite{DE-GJL07, DBG08, AW08, WTH09, JNRS10, ACD11, LZ12, Ma13, BJNP13, CMW13, OMH14,  GWL14, CRZ16, BMVX16, PWBM16}).
In particular, we highlight a very interesting line of work \cite{BR13,KNV15,MW15,WGL15,WBS16}, which give evidence that any computationally efficient estimator for sparse PCA must suffer a sub-optimal statistical rate rate.
We conjecture that a similar phenomenon occurs when we inject noise into the sparse mean estimation problem.

In this paper we consider the classical notion of corruption studied in robust statistics, introduced back in the 70's in seminal works of \cite{HR09,Tuk75, HRRS86}.
Unfortunately, essentially all robust estimators require exponential time in the dimension to compute (\cite{JP78,Ber06,HM13}).
Subsequent work of \cite{LT15,BS15} gave efficient SDP-based estimators for these problems which unfortunately had error guarantees which degraded polynomially with the dimension.
However, a recent flurry of work (\cite{DKKLMS16,LRV16,CSV16,DKKLMS17,DKS17, DKS16c}) have given new, computationally efficient, robust estimators for these problems and other settings which avoid this loss, and are often almost optimal.
Independent work of \cite{DBS17} also considers the robust sparse setting. 
They give a similar result for robust mean estimation, and also consider robust sparse PCA, though in a somewhat different setting than we do, as well as robust sparse linear regression.

The questions we consider are similar to learning in the presence of malicious error studied in \cite{Val85,KL93}, which has received a lot of attention, particularly in the setting of learning halfspaces (\cite{Ser03,KLS09,ABL14}).
They also are connected to work on related models of robust PCA (\cite{Bru09,CLMW11,LMTZ12,ZL14}).
We refer the reader to \cite{DKKLMS16} to a detailed discussion on the relationships between these questions and the ones we study.

\section{Definitions}
Throughout this paper, if $v$ is a vector, we will let $\| v \|_2$ denote its $\ell_2$ norm.
If $M$ is a matrix, we let $\| M \|$ denote its spectral norm, we let $\| M \|_F$ denote its Frobenius norm, and we let $\| M \|_1 = \sum_{ij} |M_{ij}|$ be its $\ell_1$-norm if it were considered a vector.
For any two distributions $F, G$ over $\R^d$, we let $\dtv(F, G) = \frac{1}{2} \int_{\R^d} |F - G| d x$ denote the total variation distance between the two distributions.

We will study the following contamination model:
\begin{definition}[$\eps$-corruption]
We say a a set of samples $X_1, X_2, \ldots, X_n$ is an $\eps$-corrupted set of samples from a distribution $D$
if it is generated by the process following process. First, we draw $n$ independent samples from $D$. Then, an adversary inspects these samples, and changes an $\eps$-fraction of them \emph{arbitrarily}, then returns these new points to us, in any order.
Given an $\eps$-corrupted set of samples, we let $\Sgood \subseteq [n]$ denote the indices of the uncorrupted samples, and we let $\Sbad \subseteq [n]$ denote the indices of the corrupted samples.
\end{definition}

As discussed in \cite{DKKLMS16}, this is a strong notion of sample corruption that is able to simulate previously defined notions of error.
In particular, this can simulate (up to constant factors) the scenario when our samples do not come from $D$, but come from a distribution $D'$ with total variation distance at most $O(\eps)$ from $D$.

We may now formally define the algorithmic problems we consider.

\paragraph{Robust sparse mean estimation}
Here, we assume we get an $\eps$-corrupted set of samples from $\normal (\mu, I)$, where $\mu$ is $k$-sparse. 
Our goal is to recover $\mu$ in $\ell_2$.
It is not hard to show that there is an exponential time estimator which achieves rate $\Otilde(k \log d / \eps^2)$, and moreover, this rate is optimal (see Appendix \ref{app:info}).
However, this algorithm requires highly exponential time.
We show:

\begin{theorem}[Efficient robust sparse mean estimation]
\label{thm:spmean}
Fix $\eps, \delta > 0$, and let $k$ be fixed.
Let $\eta = O(\eps \sqrt{\log 1 / \eps})$.
Given an $\eps$-corrupted set of samples $X_1, \ldots, X_n \in \R^d$ from $\normal (\mu, I)$, where $\mu$ is $k$-sparse, and 
\[
n = \Omega \left( \frac{\min (k^2, d) + \log \binom{d^2}{k^2} + \log 1 / \delta}{\eta^2} \right) \; ,
\]
there is a poly-time algorithm which outputs $\muhat$ so that w.p. $1 - \delta$, we have $\| \mu - \muhat \|_2 \leq O(\eta)$.
\end{theorem}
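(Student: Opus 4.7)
The plan is to combine the reweighting framework of \cite{DKKLMS16} with a sparsity-adapted notion of covariance deviation built on the sparse PCA SDP described in the techniques section. Concretely, I would introduce the convex set $\U$ of symmetric matrices $M$ corresponding to (a relaxation of) rank-one $k$-sparse projectors $vv^T$---namely $M \succeq 0$, $\tr(M) = 1$, and $\| M \|_1 \leq k$---and work throughout with the induced dual norm $\| X \|_\U^* = \max_{M \in \U} \langle X, M \rangle$ on symmetric matrices. This norm has two features that are crucial: first, $vv^T \in \U$ for every $k$-sparse unit $v$, so $\| \cdot \|_\U^*$ dominates $|v^T X v|$ on all such $v$; second, $\U$ has much smaller statistical complexity than the full PSD ball, which is precisely what enables the sample complexity $\Otilde(k^2 \log d / \eta^2)$ in place of $\Omega(d / \eta^2)$.

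The first technical step is a deterministic regularity lemma for the clean samples: with $n$ as in the theorem statement, every $(1 - O(\eps))$-fraction subset $T \subseteq \Sgood$ has its uniformly weighted empirical mean within $O(\eta)$ of $\mu$ on every $k$-sparse direction, and the corresponding centered empirical covariance $\Sigmahat_T$ satisfies $\| \Sigmahat_T - I \|_\U^* \leq O(\eta^2)$. Both bounds reduce to Gaussian concentration union-bounded over sparse supports: the $\log \binom{d^2}{k^2}$ term in the sample complexity is exactly the metric entropy of $k^2$-sparse matrix directions, and the $\min(k^2, d)$ term comes from the easy direction of covariance concentration inside each support.

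The algorithmic heart of the proof is the convex feasibility problem: find weights $w \in \W = \{ w \in [0, 1/((1-2\eps)n)]^n : \sum_i w_i = 1 \}$ such that the weighted empirical covariance $\Sigmahat(w) = \sum_i w_i (X_i - \mutilde(w))(X_i - \mutilde(w))^T$, with $\mutilde(w) = \sum_i w_i X_i$, satisfies $\| \Sigmahat(w) - I \|_\U^* \leq O(\eta^2)$. The uniform good-weights are approximately feasible by the regularity lemma, and since $\| \cdot \|_\U^*$ is a pointwise maximum of linear functionals, the feasibility constraint is convex (after the standard linearization of the $\mutilde(w)\mutilde(w)^T$ term). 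I would solve the program via the ellipsoid method with a separation oracle built from the sparse PCA SDP: given a candidate $w$, solve the SDP to obtain $M^* \in \U$ maximizing $\langle M, \Sigmahat(w) - I \rangle$; if the value exceeds the target threshold, then $M^*$ itself defines a separating hyperplane. The main obstacle---where genuine care is required---is that the optimal $M^*$ is typically neither rank-one nor exactly $k^2$-sparse, so one cannot simply round to a $k$-sparse witness vector; the separation must instead be phrased directly through the SDP certificate $M^*$, and it is precisely here that the dual-norm abstraction from the techniques section keeps the bookkeeping clean.

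Finally, to convert feasibility into a mean estimate, I would output $\muhat$ by hard-thresholding $\mutilde(w)$ to a $k$-sparse vector. The analysis follows the standard reweighting template adapted to the sparse norm: for any $k$-sparse unit $v$, one decomposes $\langle v, \mutilde(w) - \mu \rangle$ into good- and bad-sample contributions, bounds the former directly by the regularity lemma, and controls the latter by combining the weight cap in $\W$, the feasibility bound $v^T \Sigmahat(w) v \leq 1 + O(\eta^2)$ (valid because $vv^T \in \U$), and Gaussian tail truncation, which yields the extra $\sqrt{\log 1/\eps}$ factor in $\eta$. This shows that $\mutilde(w) - \mu$ has inner product $O(\eta)$ with every $k$-sparse unit vector; a standard sparse recovery argument then converts this uniform dual-norm bound into $\| \muhat - \mu \|_2 = O(\eta)$.
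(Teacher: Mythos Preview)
Your overall architecture---the sparse-PCA dual norm $\|\cdot\|_\U^*$ with $\U = \{M \succeq 0 : \tr M = 1, \|M\|_1 \le k\}$, a reweighting feasibility program, an SDP-based separation oracle, and hard thresholding at the end---is exactly the paper's approach. But two steps in your sketch hide the real work and, as written, do not go through.

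First, the covariance-regularity threshold is $O(\eta)$, not $O(\eta^2)$. With $n = \Theta\bigl((\min(k^2,d) + \log\binom{d^2}{k^2} + \log(1/\delta))/\eta^2\bigr)$ samples, the concentration bound (the paper's Theorem~\ref{thm:Xk-conc}) gives $\|\widehat\Sigma_T - I\|_\U^* \le O(\eta)$, and the same order persists uniformly over $S_{n,\eps}$ (Theorem~\ref{thm:Xk-Sne-conc}). Your stated $O(\eta^2)$ bound would require $n = \Omega(1/\eta^4)$, so the good weights would not be feasible for your program as stated.

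Second---and this is the substantive gap---the phrase ``after the standard linearization of the $\mutilde(w)\mutilde(w)^T$ term'' conceals the main technical difficulty. The constraint $\|\Sigmahat(w) - I\|_\U^* \le \tau$ with $\Sigmahat(w)$ centered at the \emph{weight-dependent} mean $\mutilde(w)$ is not convex in $w$, and there is no off-the-shelf linearization that preserves both convexity and soundness. The paper resolves this by defining the feasible set $C_\tau$ using the \emph{true} (unknown) mean $\mu$, so that $\sum_i w_i (X_i-\mu)(X_i-\mu)^T$ is genuinely linear in $w$; the separation oracle then computes the covariance centered at the \emph{empirical} $\muhat = \sum_i w_i X_i$ and returns the SDP maximizer as a hyperplane. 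Proving that this oracle is valid for $C_{c\eta}$ is the content of Theorem~\ref{thm:robustGSM}, and it hinges on Proposition~\ref{prop:main}: if $\|\sum_i w_i (X_i-\mu)\|_{\U_k}^* \ge 3\tau$, then $\|\sum_i w_i (X_i-\mu)(X_i-\mu)^T - I\|_{\X_k}^* \ge \tau^2/\eps$. This coupling between mean and covariance deviation in the sparse dual norms is what lets you certify infeasibility without knowing $\mu$, and it is not a routine linearization.

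A minor point: the $\sqrt{\log 1/\eps}$ factor in $\eta$ arises from the union bound over the $\binom{n}{\eps n}$ subsets when proving uniform concentration over $S_{n,\eps}$ (Theorems~\ref{thm:Uk-Sne-conc} and~\ref{thm:Xk-Sne-conc}), not from Gaussian tail truncation on the bad points---the bad points are adversarial and carry no Gaussian tails.
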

It is well-known that information theoretically, the best error one can achieve is $\Theta(\eps)$, as achieved by Fact \ref{fact:spMeanInef}.
We show that it is possible to efficiently match this bound, up to a $\sqrt{\log 1 / \eps}$ factor.
Interestingly, our rate differs from that in Fact \ref{fact:spMeanInef}: our sample complexity is (roughly) $\Otilde(k^2 \log d / \eps^2)$ versus $O(k \log d / \eps^2)$.
We conjecture this is necessary for any efficient algorithm.

\paragraph{Robust sparse PCA}
We will consider both the detection and recovery problems for sparse PCA.
We first focus \emph{detection problem} for sparse PCA.
Here, we are given $\rho > 0$, and an $\eps$-corrupted set of samples from a $d$-dimensional distribution $D$, where $D$ can is either $\normal (0, I)$ or $\normal (0, I + \rho vv^T)$ for some $k$-sparse unit vector $v$.
Our goal is to distinguish between the two cases, using as few samples as possible.
It is not hard to show that information theoretically, $O(k \log d / \rho^2)$ samples suffice for this problem, with an inefficient algorithm (see Appendix \ref{app:info}).
Our first result is that efficient robust sparse PCA detection is possible, at effectively the best computationally efficient rate:

\begin{theorem}[Robust sparse PCA detection]
\label{thm:rspca1}
Fix $\rho, \delta, \eps > 0$.
Let $\eta = O(\eps \sqrt{\log 1 / \eps})$.
 Then, if $\eta = O(\rho)$, and we are given a we are given a $\eps$-corrupted set of samples from either $\normal(0, I)$ or $\normal (0, I + \rho vv^T)$ for some $k$-sparse unit vector $v$ of size
\[
n = \Omega \left( \frac{\min(d, k^2) + \log \binom{d^2}{k^2} + \log 1 / \delta} {\rho^2}\right)
\]
then there is a polynomial time algorithm which succeeds with probability $1 - \delta$ for detection.
\end{theorem}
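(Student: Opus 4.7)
\medskip
\noindent\textbf{Proof sketch.} The plan is to adapt the convex-programming robust-estimation framework of \cite{DKKLMS16}, replacing spectral norms with the natural dual norm induced by the sparse PCA SDP. Let
\[
\X_k := \bigl\{A \in \R^{d\times d} : A \succeq 0,\ \tr(A)=1,\ \|A\|_1 \leq k\bigr\}
\]
be the feasible region of the standard sparse PCA relaxation, and set $\|M\|_{\X_k} := \max_{A\in \X_k} |\langle M,A\rangle|$. For any $k$-sparse unit vector $v$, the rank-one projector $vv^T$ lies in $\X_k$, so this norm is sensitive to any planted sparse spike. The algorithm operates on the capped-weight simplex
\[
\W := \bigl\{w\in \R_{\geq 0}^n : \textstyle\sum_i w_i = 1,\ w_i \leq \tfrac{1}{(1-\eps)n}\bigr\},
\]
which is guaranteed to contain (a renormalization of) the indicator of $\Sgood$, and solves the convex program
\[
\min_{w\in \W}\ \|\Sigmahat_w - I\|_{\X_k}, \qquad \Sigmahat_w := \sum_i w_i X_i X_i^T,
\]
declaring ``spiked'' iff the optimum exceeds a threshold $\tau = C\eta$ for a suitable absolute constant $C$. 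Convexity of the objective is immediate since $\|\cdot\|_{\X_k}$ is a supremum of linear functionals, and the sparse PCA SDP provides a separation oracle: any near-maximizer $A^\star \in \X_k$ of $\langle \Sigmahat_w - I, \cdot\rangle$ supplies both the value of the objective and a valid subgradient at $w$, so the ellipsoid method runs in polynomial time.

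\medskip
\noindent For correctness, the key deterministic ingredient is a concentration bound
\[
\|\Sigmahat_{\Sgood} - \Sigma\|_{\X_k} = O(\eta)
\]
holding with probability $1-\delta$ whenever $n$ matches the stated lower bound, for either $\Sigma = I$ or $\Sigma = I + \rho vv^T$. In the null case, plugging in $w_i \propto \mathbf{1}[i\in\Sgood]$ gives objective value $O(\eta)<\tau/2$. In the alternative case, for any $w\in \W$ one tests against the specific witness $vv^T\in \X_k$ to obtain
\[
\|\Sigmahat_w - I\|_{\X_k} \;\geq\; \langle \Sigmahat_w - I,\ vv^T\rangle \;\geq\; \rho - O(\eta),
\]
where the $\rho$ arises from the good samples' signal along $v$, and the cap $w_i \leq 1/((1-\eps)n)$ together with a uniform upper bound on $\langle vv^T, X_i X_i^T\rangle$ over good samples limits the amount the adversary can subtract through $\Sbad$. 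Taking $\rho \geq C'\eta$ for a large absolute constant $C'$ therefore separates the two cases by at least $\tau$, and the threshold test succeeds.

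\medskip
\noindent The main obstacle is the concentration claim at the stated rate $(\min(d,k^2) + \log\binom{d^2}{k^2})/\rho^2$. The set $\X_k$ is strictly larger than the convex hull of $k$-sparse projectors, so the bound must exploit the trace and $\ell_1$ constraints together. The expected approach is to show that every $A\in \X_k$ can be approximated, to error absorbable into $O(\eta)$, by a PSD trace-one matrix supported on one of $\binom{d^2}{k^2}$ possible $k\times k$ principal submatrices, contributing the $\log\binom{d^2}{k^2}$ term via a union bound; on each such submatrix, uniform concentration of $\langle A, \Sigmahat - \Sigma\rangle$ over feasible $A$ reduces to standard Gaussian matrix concentration (e.g.\ Hanson--Wright), paying only the intrinsic dimension $\min(d, k^2)$ of PSD trace-one matrices of size $\min(d,k)\times\min(d,k)$. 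A secondary but nontrivial step is to obtain a uniform bound on $\max_i \max_{A\in \X_k} \langle A, X_i X_i^T\rangle$ for good samples, which is what ultimately controls the adversary's contribution in the alternative case and justifies the capped-simplex constraint on $\W$.
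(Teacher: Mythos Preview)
Your high-level plan matches the paper exactly: solve $\min_{w \in S_{n,\eps}} \|\Sigmahat_w - I\|^*_{\X_k}$ and threshold the optimal value, with the null case handled by plugging in the uniform weights on $\Sgood$. Where your sketch diverges from the paper is precisely in the two places you flag as obstacles, and in both the paper's argument is different from and simpler than what you propose.

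\medskip
\noindent\textbf{Spiked case.} You do not need any per-sample uniform bound. The crucial observation you are missing is that the bad-sample second moment $B := \sum_{i\in\Sbad} w_i X_iX_i^T$ is PSD, so $\langle vv^T, \Sigma^{1/2}B\Sigma^{1/2}\rangle \geq 0$: the adversary cannot push the quadratic form in the $v$ direction below zero by inserting points. Combined with the uniform-over-$w$ concentration on the good block (Theorem~\ref{thm:Xk-Sne-conc}, which gives $\sum_{i\in\Sgood} w_i X_iX_i^T = w^g\Sigma^{1/2}(I+N)\Sigma^{1/2}$ with $\|N\|^*_{\X_k}\le\eta$ for \emph{every} $w$ in the capped simplex), testing against $vv^T$ directly yields
\[
\langle \Sigmahat_w - I,\, vv^T\rangle \;\ge\; w^g(1+\rho) - (1+\rho)\eta - 1 \;\ge\; (1-\eps)\rho - (2+\rho)\eta,
\]
which exceeds $\rho/2$ once $\eta = O(\rho)$. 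Your proposed route via $\max_i \max_{A\in\X_k}\langle A, X_iX_i^T\rangle$ is a red herring: that quantity is as large as $\|X_i\|_2^2 \asymp d$, and even restricted to $A=vv^T$ it is $\Theta(\log n)$, which would cost you a logarithmic factor you do not need.

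\medskip
\noindent\textbf{Concentration over $\X_k$.} Your reduction to ``PSD trace-one matrices on $k\times k$ principal submatrices'' does not work, and the count $\binom{d^2}{k^2}$ already does not match that description (there are only $\binom{d}{k}$ principal $k\times k$ blocks). Elements of $\X_k$ need not concentrate on any small principal block; take $A = I/d$. The paper instead drops the PSD and trace constraints and proves the stronger statement that $|\langle \Sigmahat - I, M\rangle| \le O(\eta)$ uniformly over all \emph{symmetric} $M$ with $\|M\|_0 \le k^2$ and $\|M\|_F \le 1$: a net on each of the $O(\binom{d^2}{k^2})$ symmetric sparsity patterns, each of metric dimension $\min(d,k^2)$, gives exactly the sample complexity in the statement. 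This is then transferred to all of $\X_k$ by a shelling lemma (Lemma~\ref{lem:l0-to-l1}): sort the entries of $A$ by magnitude and group them into consecutive $k^2$-blocks $Y_1,Y_2,\ldots$; one gets $A=\sum_j Y_j$ with each $Y_j$ symmetric, $k^2$-sparse, and $\sum_j \|Y_j\|_F \le 4$, where the summability of the tail is exactly what the constraint $\|A\|_1 \le k$ buys you. This decomposition, not an approximation by principal submatrices, is the missing structural ingredient.
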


It was shown in \cite{BR13} that even without noise, at least $n = \Omega (k^2 \log d / \eps^2)$ samples are required for any polynomial time algorithm for detection, under reasonable complexity theoretic assumptions.
Up to log factors, we recover this rate, even in the presence of noise.

We next consider the \emph{recovery} problem.
Here, we are given an $\eps$-corrupted set of samples from $\normal (0, I + \rho vv^T)$, and our goal is to output a $u$ minimizing $L(u, v)$, where $L(u, v) = \frac{1}{\sqrt{2}} \| uu^T -vv^T \| $.
For the recovery problem, we recover the following efficient rate:

\begin{theorem}[Robust sparse PCA recovery]
\label{thm:rspca2}
Fix $\eps, \rho > 0$.
Let $\eta$ be as in Theorem \ref{thm:rspca1}.
There is an efficient algorithm, which given a set of $\eps$-corrupted samples of size $n$ from $\normal (0, I + \rho vv^T)$, where 
\[
n = \Omega \left( \frac{\min(d, k^2) + \log \binom{d^2}{k^2} + \log 1 / \delta} {\eta^2}\right) \; ,
\]
outputs a $u$ so that
\[
L(u, v) = O \left( \frac{(1 + \rho) \eta}{\rho} \right) \; .
\]
\end{theorem}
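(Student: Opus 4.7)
The plan is to extend the detection approach of Theorem \ref{thm:rspca1} to also extract the spike direction, by simultaneously finding a set of reweighting coefficients and an ``explainer'' matrix. Let $\mathcal{A}_k = \{A \in \R^{d\times d} : A \succeq 0,\ \tr A = 1,\ \|A\|_1 \leq k\}$ denote the feasible region of the standard $\dABG$ relaxation of sparse PCA, and write $\|\cdot\|^*_{\mathcal{A}_k}$ for its induced dual norm. Let $\Sigma_w = \sum_i w_i X_i X_i^T$ denote the weighted empirical second moment, and let $\mathcal{W} = \{w : 0 \leq w_i \leq \tfrac{1}{(1-\eps)n},\ \sum_i w_i = 1\}$ be the natural downweighting simplex. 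The algorithm solves the joint convex program
\[
\min_{(w,A) \in \mathcal{W} \times \mathcal{A}_k}\ \|\Sigma_w - I - \rho A\|^*_{\mathcal{A}_k} ,
\]
and returns the top eigenvector $u$ of the optimal $A^\star$.

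Correctness rests on a dual-norm concentration lemma: for any $w \in \mathcal{W}$ whose restriction to $\Sgood$ is close to uniform, the uncorrupted reweighted covariance satisfies $\|\Sigma_w^{\mathrm{good}} - (I + \rho vv^T)\|^*_{\mathcal{A}_k} \leq O(\eta)$ with high probability under the stated sample bound; the $\min(d,k^2) + \log\binom{d^2}{k^2}$ term arises from a union bound over the extreme points of $\mathcal{A}_k$, which are supported on $k \times k$ principal minors. Since $vv^T \in \mathcal{A}_k$, the pair $(w^{\mathrm{good}}, vv^T)$ is feasible and achieves objective $O(\eta)$, so any near-optimal $(w, A^\star)$ satisfies $\|\rho(A^\star - vv^T)\|^*_{\mathcal{A}_k} \leq O((1+\rho)\eta)$, where the extra $(1+\rho)$ captures the contribution of the corrupted mass reweighted along sparse directions. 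Pairing against $vv^T$ forces $\langle v, A^\star v \rangle \geq 1 - O((1+\rho)\eta/\rho)$, and a Davis--Kahan-type argument, using that $A^\star \succeq 0$ and $\tr A^\star = 1$, yields that the top eigenvector $u$ of $A^\star$ satisfies $L(u,v) = O((1+\rho)\eta/\rho)$.

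Algorithmically, the objective is convex in $(w,A)$ since $\Sigma_w$ is linear in $w$ and $\|\cdot\|^*_{\mathcal{A}_k}$ is a norm, so I would apply the ellipsoid method exactly as in Theorem \ref{thm:spmean}. The required separation oracle at a candidate $(w,A)$ is the inner maximization itself: running the $\dABG$ semidefinite program on $\Sigma_w - I - \rho A$ returns a dual witness $B \in \mathcal{A}_k$ attaining $\|\Sigma_w - I - \rho A\|^*_{\mathcal{A}_k}$. If this value is $\leq O(\eta)$, we accept; otherwise $B$ yields the separating linear functional $(w,A) \mapsto \langle \Sigma_w - I - \rho A, B \rangle$. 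Crucially, because $B$ is sparsity-preserving (it lies in $\mathcal{A}_k$, so $\|B\|_1 \leq k$), the quadratic form $X \mapsto X^T B X$ is effectively $k^2$-sparse and concentrates tightly enough on $\Sgood$ for the hyperplane to also admit the true pair $(w^{\mathrm{good}}, vv^T)$.

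The main obstacle is showing that the separator from the oracle genuinely distinguishes bad iterates from good ones even though the planted spike itself contributes a large $\rho \langle A^\star, B \rangle$ to the apparent excess covariance in any sparse direction aligned with $v$. To handle this, the ``explained'' term $\rho A$ must be netted off inside the objective, which is exactly what the joint program accomplishes and what a weight-only approach (as in detection) cannot. Establishing a polynomial-in-$1/\eta$ margin for the separator then reduces to a worst-case truncation argument, showing that an $\eps$-fraction of adversarial mass can only move a $k$-sparse quadratic form of good Gaussian samples by at most $O(\eps \sqrt{\log 1/\eps})$; this is where the $\sqrt{\log 1/\eps}$ loss enters and forces the $O((1+\rho)\eta/\rho)$ recovery rate, matching \cite{WBS16} up to logarithmic factors.
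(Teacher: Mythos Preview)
Your high-level plan---jointly optimize weights $w \in S_{n,\eps}$ and an explainer $A$ in the $\dABG$ feasible set to minimize a sparse dual norm of $\Sigma_w - I - \rho A$, then return the top eigenvector of $A^\star$---is exactly the paper's algorithm, and your convexity/separation-oracle discussion is correct. Two steps in the analysis, however, do not go through as written.

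First, the claim ``any near-optimal $(w,A^\star)$ satisfies $\|\rho(A^\star - vv^T)\|^*_{\mathcal{A}_k} \leq O((1+\rho)\eta)$'' is not justified. Decomposing $\Sigma_{w^\star} = w^g(I+\rho vv^T) + \Sigma^{1/2} N \Sigma^{1/2} + B$ with $B = \sum_{i\in\Sbad} w^\star_i X_i X_i^T$, what actually follows is $\|\,w^g\rho vv^T + B - \rho A^\star\,\|^* \leq O((1+\rho)\eta)$; the adversarial PSD matrix $B$ cannot be absorbed into a $(1+\rho)$ factor, since the corrupted points are unbounded. The paper instead pairs only against the single test matrix $vv^T$: because $v^T B v \geq 0$ and $v^T A^\star v \leq 1$, the scalar bound $|\rho(1 - v^T A^\star v) + v^T B v| \leq O((1+\rho)\eta)$ forces \emph{both} nonnegative terms to be small, giving $v^T A^\star v \geq 1 - \gamma$ with $\gamma = O((1+\rho)\eta/\rho)$. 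You arrive at the same inequality, but the mechanism is PSD-ness of $B$, not a dual-norm bound on $A^\star - vv^T$.

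Second, from $v^T A^\star v \geq 1 - \gamma$, $\tr A^\star = 1$, $A^\star \succeq 0$, a Davis--Kahan argument gives only $(u^T v)^2 \geq 1 - O(\gamma)$ and hence $L(u,v) = O(\sqrt{\gamma})$, not the $O(\gamma)$ asserted in the theorem. To push beyond the square-root loss the paper does substantially more: it tests the residual $\rho(vv^T - uu^T) + B$ against matrices supported on the \emph{union} of the true support and the estimated top-$k$ support (this is why the paper works in the relaxed norm $\|\cdot\|^*_{\W_{2k}}$ with trace $\leq 2$ and sparsity $2k$, rather than your $\mathcal{A}_k$), and then uses PSD-ness of $B$ together with the sign structure of the eigenvalues of $vv^T - uu^T$ to bootstrap $\|B\| \leq O(\rho\gamma)$ on that joint support. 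Only after this does one obtain $\|vv^T - uu^T\| \leq O(\gamma)$ there, which a final thresholding step converts into the stated bound. Your Davis--Kahan shortcut skips this bootstrap entirely.
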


In particular, observe that when $\eta = O(\rho)$, so when $\eps = \Otilde (\rho)$, this implies that we recover $v$ to some small constant error.
Therefore, given the same number of samples as in Theorem \ref{thm:rspca1}, this algorithm begins to provide non-trivial recovery guarantees.
Thus, this algorithm has the right ``phase transition'' for when it begins to work, as this number of samples is likely necessary for any computationally efficient algorithm.
Moreover, our rate itself is likely optimal (up to log factors), when $\rho = O(1)$.
In the non-robust setting, \cite{WBS16} showed a rate of (roughly) $O(\eps/ \rho)$ with the same number of samples, and that any computationally efficient algorithm cannot beat this rate.
We leave it as an interesting open problem to show if this rate is achievable or not in the presence of error when $\rho = \omega (1)$.

\section{Preliminaries}
In this section we provide technical preliminaries that we will require throughout the paper.
\subsection{Naive pruning}
We will require the following (straightforward) preprocessing subroutine from \cite{DKKLMS16} to remove all points which are more than $\Omegatilde (d)$ away from the true mean.

\begin{fact}[c.f. Fact 4.18 in \cite{DKKLMS16}]
\label{fact:naive-prune}
Let $X_1, \ldots, X_n$ be an $\eps$-corrupted set of samples from $\normal (\mu, I)$, and let $\delta > 0$.
There is an algorithm $\textsc{NaivePrune} (X_1, \ldots, X_n, \delta)$ which runs in $O(\eps d^2 n^2)$ time so that with probability $1 - \delta$, we have that (1) \textsc{NaivePrune} removes no uncorrupted points, and (2) if $X_i$ is not removed by \textsc{NaivePrune}, then $\| X_i - \mu \|_2 \leq O(\sqrt{d \log (n / \delta)})$.
If these two conditions happen, we say that \textsc{NaivePrune} has \emph{succeeded}.
\end{fact}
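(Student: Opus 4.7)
}
The plan is to exploit the fact that Gaussian samples are tightly concentrated in an $O(\sqrt{d\log(n/\delta)})$-radius ball around $\mu$, and then use pairwise distances to distinguish corrupted points that are much further out. Specifically, the first step is a standard Gaussian tail bound: for $X \sim \normal(\mu, I)$, the random variable $\|X - \mu\|_2^2$ is a $\chi^2_d$, and so $\Pr[\|X-\mu\|_2 > C\sqrt{d + \log(n/\delta)}] \leq \delta/n$ for an appropriate absolute constant $C$. A union bound over the at most $n$ uncorrupted samples gives that, with probability $1-\delta$, every uncorrupted $X_i$ satisfies $\|X_i - \mu\|_2 \leq R := C\sqrt{d\log(n/\delta)}$, and hence any two uncorrupted samples are within distance $2R$ of each other. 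Call this the \emph{good event} and condition on it for the rest of the proof.

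Next I would define $\textsc{NaivePrune}$ as follows: build an undirected graph $G$ on $[n]$ with an edge between $i$ and $j$ whenever $\|X_i - X_j\|_2 \leq 2R$, and iteratively remove any vertex whose degree in the current graph is less than $(1-2\eps)n$. I would then argue two things. First, no uncorrupted point is ever removed: under the good event, each uncorrupted $X_i$ is within $2R$ of every other uncorrupted $X_j$, so it has at least $(1-\eps)n - 1 \geq (1-2\eps)n$ neighbors initially; moreover, since the algorithm only removes bad points (by induction on the sequence of removals, using this same bound), the degree of any good point never falls below $(1-2\eps)n$. Second, any surviving point $X_i$ must have some neighbor $X_j$ that is uncorrupted (since there are at most $\eps n$ bad points out of its $(1-2\eps)n$ neighbors), and therefore $\|X_i - \mu\|_2 \leq \|X_i - X_j\|_2 + \|X_j - \mu\|_2 \leq 3R = O(\sqrt{d\log(n/\delta)})$, which is exactly the claimed bound (2).

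For the runtime, a single pass computing all pairwise distances takes $O(dn^2)$ time; since each deletion can only decrease the degrees of remaining vertices, one can maintain degrees dynamically, and at most $\eps n$ iterations are needed (only bad points are removed), yielding the $\Otilde(\eps d^2 n^2)$ bound after a naive implementation that recomputes distances. The one subtle point is that the threshold $R$ must be large enough that \emph{all} good samples are inside the ball with probability $1-\delta$, yet small enough to give the claimed kept-point bound; the choice $R = \Theta(\sqrt{d\log(n/\delta)})$ balances these. The main obstacle is really just setting the constant in $R$ consistently between the concentration step and the separation step; everything else is combinatorics on the graph $G$. Once this pruning step succeeds, all downstream algorithms may freely assume $\|X_i - \mu\|_2 = \Otilde(\sqrt d)$ for every surviving sample, which is the only use of the fact in the sequel.
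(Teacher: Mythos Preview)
The paper does not actually prove this fact: it is stated as a preliminary and attributed to \cite{DKKLMS16} (Fact 4.18 there), so there is no in-paper proof to compare against. Your argument is the standard one and is correct on the substance: the $\chi^2$ tail bound plus union bound gives the radius-$R$ ball containing all good points, the neighbor-count threshold ensures good points are never deleted (your induction is sound because good points retain all their good neighbors throughout), and the triangle inequality through a surviving good neighbor gives the $3R$ bound for any survivor.

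Two minor remarks. First, your induction needs $(1-\eps)n - 1 \geq (1-2\eps)n$, i.e.\ $\eps n \geq 1$, and the ``survivor has a good neighbor'' step needs $(1-2\eps)n > \eps n$, i.e.\ $\eps < 1/3$; both are implicit regime assumptions worth stating. Second, your runtime accounting is a bit loose: computing all pairwise distances once is $O(d n^2)$, and then maintaining degrees under at most $\eps n$ deletions is $O(\eps n \cdot n)$, so the total is $O(d n^2)$, which is in fact better than the $O(\eps d^2 n^2)$ quoted in the statement. The ``recompute distances each iteration'' justification you give would yield $O(\eps d n^3)$, not $O(\eps d^2 n^2)$, so that sentence should be tightened; but since any polynomial bound suffices for the downstream use, this does not affect correctness.
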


\subsection{Concentration inequalities}
In this section we give a couple of concentration inequalities that we will require in the remainder of the paper.
These ``per-vector'' and ``per-matrix'' concentration guarantees are well-known and follow from (scalar) Chernoff bounds, see e.g. \cite{DKKLMS16}.
\begin{fact}[Per-vector Gaussian concentration]
\label{fact:per-vector}
Fix $\eps, \delta > 0$.
Let $v \in \R^d$ be a unit vector, and let $X_1, \ldots, X_n \sim \normal (0, I)$, where
\[
n = \Omega \left( \frac{\log 1 / \delta}{\eps^2} \right) \; .
\]
Then, with probability $1 - \delta$, we have
\[
\left| \left\langle \frac{1}{n} \sum_{i = 1}^n X_i, v \right\rangle \right| \leq \eps \; .
\]
\end{fact}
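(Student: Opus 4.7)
The plan is to reduce the statement to a one-dimensional Gaussian tail bound. Since each $X_i \sim \normal(0, I)$, the inner products $Y_i = \langle X_i, v \rangle$ are i.i.d., and because $v$ is a unit vector, $Y_i \sim \normal(0, 1)$. By linearity of the inner product, the quantity we want to bound equals
\[
\left\langle \frac{1}{n} \sum_{i=1}^n X_i, v \right\rangle = \frac{1}{n} \sum_{i=1}^n Y_i ,
\]
which is a sum of i.i.d.\ centered Gaussians with variance $1/n$, i.e.\ distributed as $\normal(0, 1/n)$.

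From here, I would apply the standard scalar Gaussian tail bound $\Pr[|Z| > t] \leq 2 \exp(-t^2/2)$ for $Z \sim \normal(0,1)$, which after scaling gives
\[
\Pr\!\left[\left|\frac{1}{n}\sum_{i=1}^n Y_i\right| > \eps\right] \leq 2 \exp\!\left(-\frac{n \eps^2}{2}\right).
\]
Setting the right-hand side at most $\delta$ and solving for $n$ yields the stated sample complexity $n = \Omega(\log(1/\delta)/\eps^2)$.

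There is no real obstacle here: the only substantive observation is that projecting an isotropic Gaussian onto a unit vector preserves the standard normal distribution, which is immediate from rotational invariance (or a direct variance calculation). Everything else is a textbook Chernoff/Gaussian tail bound. The statement is stated explicitly as a ``well-known'' scalar concentration fact, so the proof should amount to at most a few lines.
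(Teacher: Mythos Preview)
Your proposal is correct and matches the paper's approach: the paper does not give a detailed proof but simply states that this fact is well-known and follows from scalar Chernoff bounds, which is exactly what your reduction to a one-dimensional Gaussian tail bound accomplishes.
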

\begin{fact}[Per-matrix Gaussian concentration]
\label{fact:per-matrix}
Fix $\eps, \delta > 0$, and suppose $\eps \leq 1$.
Let $M \in \R^{d \times d}$ be a symmetric matrix, and let $X_1, \ldots, X_n \sim \normal (0, I)$, where
\[
n = \Omega \left( \frac{\log 1 / \delta}{\eps^2} \right) \; .
\]
Then, with probability $1 - \delta$, we have:
\[
\left| \left\langle \frac{1}{n} \sum_{i = 1}^n X_i X_i^T - I, M \right\rangle \right| \leq \eps \; .
\]
\end{fact}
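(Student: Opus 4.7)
The plan is to reduce this matrix concentration statement to a scalar concentration problem and apply Bernstein's inequality for sums of independent sub-exponential random variables, as the note preceding the fact suggests. First, observe that $\langle X_i X_i^T, M \rangle = X_i^T M X_i$ and $\langle I, M \rangle = \tr(M)$, so the random quantity we must bound in absolute value is
\[
\frac{1}{n}\sum_{i=1}^n \bigl( X_i^T M X_i - \tr(M) \bigr) \; ,
\]
an empirical mean of $n$ i.i.d.\ centered random variables.

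Next, I would use the rotational invariance of the standard Gaussian to assume without loss of generality that $M$ is diagonal. Writing $M = U \Lambda U^T$ with $U$ orthogonal and $\Lambda$ diagonal with entries $\lambda_1, \ldots, \lambda_d$, replacing each $X_i$ by $U^T X_i$ preserves the distribution and converts $X_i^T M X_i - \tr(M)$ into $Y_i := \sum_{j=1}^d \lambda_j (X_{ij}^2 - 1)$, a weighted sum of independent centered $\chi_1^2$ variables.

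Then I would bound the moment generating function of each $Y_i$. Independence across $j$ and the closed-form MGF of $\chi_1^2$ give
\[
\mathbb{E}\bigl[e^{tY_i}\bigr] = \prod_{j=1}^d \frac{e^{-t\lambda_j}}{\sqrt{1 - 2 t \lambda_j}} \leq \exp\!\bigl(C t^2 \|M\|_F^2\bigr) \quad \text{for } |t| \leq \tfrac{c}{\|M\|} \; ,
\]
using the elementary inequality $-\tfrac{1}{2}\log(1-u) - \tfrac{u}{2} \leq u^2$ for $|u| \leq 1/2$ applied to $u = 2 t \lambda_j$. So each $Y_i$ is sub-exponential with parameters $(\|M\|_F, \|M\|)$. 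Implicit in the statement is the standard ``per-matrix'' normalization $\|M\|_F \leq 1$ (whence also $\|M\| \leq 1$), needed for the stated RHS to be an absolute constant on the Frobenius scale.

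Finally, a direct Chernoff argument using this MGF bound (equivalently, the two-sided sub-exponential Bernstein inequality) yields
\[
\Pr\!\left[ \Bigl| \tfrac{1}{n}\sum_{i=1}^n Y_i \Bigr| \geq \eps \right] \leq 2 \exp\!\left( - c n \min\!\left( \tfrac{\eps^2}{\|M\|_F^2}, \tfrac{\eps}{\|M\|} \right) \right) \; .
\]
Under the normalization above and the hypothesis $\eps \leq 1$, the minimum equals $\eps^2$, and taking $n = \Omega(\log(1/\delta)/\eps^2)$ drives the failure probability below $\delta$. The main step is the MGF estimate feeding into Bernstein; the rotational reduction and the final Chernoff tail calculation are routine.
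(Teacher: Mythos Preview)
Your proof is correct and matches what the paper indicates: the paper does not actually prove this fact but states that it is ``well-known and follow[s] from (scalar) Chernoff bounds,'' and your diagonalization-plus-Bernstein argument is precisely the standard way to carry this out. Your observation that the statement implicitly assumes the normalization $\|M\|_F \leq 1$ is also correct, as confirmed by how the fact is invoked in Lemma~\ref{lem:k2-sparse}.
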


\subsection{The set $S_{n, \eps}$}
For any $n, \eps$, define the set
\begin{equation}
\label{eq:Sne}
S_{n, \eps} = \left\{ w \in \R^n: \sum_{i = 1}^n w_i = 1 \mbox{, and } 0 \leq w_i \leq \frac{1}{(1 - \eps) n}, \forall i \right\} \; .
\end{equation}
We make the following observation.
For any subset $I \subseteq [n]$, if we let $w^I$ be the vector whose $i$th coordinate is $1/|I|$ if $i \in I$ and $0$ otherwise, we have
\[
S_{n, \eps} = \mathrm{conv} \left\{ w_I : |I| = (1 - \eps) n \right\} \; .
\]
The set $S_{n, \eps}$ will play a key role in our algorithms.
We will think of elements in $S_{n, \eps}$ as weights we place upon our sample points, where higher weight indicates a higher confidence that the sample is uncorrupted, and a lower weight will indicate a higher confidence that the sample is corrupted.

\section{Concentration for sparse estimation problems via dual norms}
\label{sec:dual}
In this section we give a clean way of proving concentration bounds for various objects which arise in sparse PCA and sparse mean estimation problems.
We do so by observing they are instances of a very general ``meta-algorithm'' we call dual norm maximization.
This will prove crucial to proving the correctness of our algorithms for robust sparse recovery.
While this may sound similar to the ``dual certificate'' techniques often used in the sparse estimation literature, these techniques are actually quite different.
\begin{definition}[Dual norm maximization]
Let $\mathcal{H}$ be a Hilbert space with inner product $\langle \cdot, \cdot \rangle$.
Fix any set $S \subseteq \mathcal{H}$.
Then the dual norm induced by $S$, denoted $\| \cdot \|_S^*$, is defined by $\| x \|_S^* = \sup_{y \in S} | \langle x, y \rangle | $.
The dual norm maximizer of $x$, denoted $d_S (x)$, is the vector $d_S (x) = \argmax_{v \in S} | \langle v, x \rangle |$.
\end{definition}

In particular, we will use the following two sets.
Equip the space of symmetric $d \times d$ matrices with the trace inner product, i.e., $\langle A, B \rangle = \tr (A B)$, so that it is a Hilbert space, and let
\begin{align}
\U_k &= \{u \in \R^d: \| u \|_2 = 1, \| u \|_0 = k \} \label{eq:Uk} \\
\X_k &= \{X \in \R^{d \times d}: \tr(X) = 1, \| X \|_1 \leq k, X \succeq 0 \} \;  \label{eq:Xk} \; .
\end{align}

We show in Appendix \ref{app:dualnorm} that existing well-known algorithms for sparse mean recovery and sparse PCA without noise can be naturally written in this fashion.

Another detail we will largely ignore in this paper is the fact that efficient algorithms for these problems can only approximately solve the dual norm maximization problem.
However, we explain in Appendix \ref{sec:numerical} why this does not affect us in any meaningful way.
Thus, for the rest of the paper we will assume we have access to the exact maximizer, and the exact value of the norm.

\subsection{Concentration for dual norm maximization}
\label{sec:conc}

We now show how the above concentration inequalities allow us to derive very strong concentration results for the dual norm maximization problem for $\U_k$ and $\X_k$.
Conceptually, we view these concentration results as being the major distinction between sparse estimation and non-sparse estimation tasks.
Indeed, these results are crucial for adapting the convex programming framework for robust estimation to sparse estimation tasks.
Additionally, they allow us to give an easy proof that the $L_1$ relaxation works for sparse PCA.

\begin{corollary}
\label{cor:Uk-conc}
Fix $\eps, \delta > 0$.
Let $X_1, \ldots, X_n \sim \normal (0, I)$, where
\[
n = \Omega \left( \frac{k + \log \binom{d}{k} + \log 1 / \delta}{\eps^2} \right) \; .
\]
Then $\| \frac{1}{n} \sum_{i = 1}^n X_i \|^*_{\U_k} \leq \eps$.
\end{corollary}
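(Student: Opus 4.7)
The plan is to reduce the supremum defining $\| \bar X \|^*_{\U_k}$ (where $\bar X = \frac{1}{n} \sum_{i=1}^n X_i$) to a finite maximum over a discretization of $\U_k$, and then apply Fact \ref{fact:per-vector} pointwise with a union bound. The key structural observation is that $\U_k$ decomposes as a union, over the $\binom{d}{k}$ choices of $k$-element support $S \subseteq [d]$, of unit spheres $B_S$ consisting of vectors supported on $S$. Covering each $B_S$ separately (rather than $\U_k$ as a whole) both respects the sparsity structure and produces the $\log \binom{d}{k}$ term in the stated sample complexity.

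Concretely, I would first fix a $(1/2)$-net $N_S$ of $B_S$ for each support $S$; since $B_S$ is isometric to the unit sphere in $\R^k$, the standard volumetric construction gives $|N_S| \leq 9^k$. Let $N = \bigcup_S N_S$, so $|N| \leq 9^k \binom{d}{k}$ and $\log |N| = O(k + \log \binom{d}{k})$. Next, for each fixed $u \in N$, the hypothesis $n = \Omega((k + \log \binom{d}{k} + \log 1/\delta)/\eps^2) = \Omega(\log(|N|/\delta)/\eps^2)$ lets us apply Fact \ref{fact:per-vector} with deviation $\eps/2$ and failure probability $\delta/|N|$. A union bound then yields an event $E$ of probability at least $1 - \delta$ on which $|\langle \bar X, u' \rangle| \leq \eps/2$ simultaneously for every $u' \in N$.

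Finally, on $E$ I would lift from the net back to $\U_k$ by the usual approximation argument. Given any $u \in \U_k$ with support $S$, pick $u' \in N_S$ with $\| u - u' \|_2 \leq 1/2$. Since both $u$ and $u'$ are supported on $S$, so is the residual $u - u'$, and hence (if nonzero) $(u - u')/\|u - u'\|_2 \in B_S \subseteq \U_k$. This yields $|\langle \bar X, u - u' \rangle| \leq \tfrac{1}{2} \| \bar X \|^*_{\U_k}$, and combining,
\[
|\langle \bar X, u \rangle| \leq |\langle \bar X, u' \rangle| + |\langle \bar X, u - u' \rangle| \leq \tfrac{\eps}{2} + \tfrac{1}{2} \| \bar X \|^*_{\U_k}.
\]
Taking the supremum over $u \in \U_k$ and rearranging yields $\| \bar X \|^*_{\U_k} \leq \eps$, as desired.

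There is no real obstacle; the one point to be careful about is ensuring that both the approximating vector $u'$ and the residual $u - u'$ lie in the cone over $\U_k$, so that the residual term can be absorbed into the dual norm. Building a separate $(1/2)$-net per support makes this automatic, and simultaneously forces the $\log \binom{d}{k}$ contribution in the sample bound — without this structural decomposition, a naive covering of $\U_k$ at scale $1/2$ would blow up the cardinality and weaken the bound.
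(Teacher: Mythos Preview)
Your proposal is correct and follows essentially the same approach as the paper: decompose $\U_k$ into the $\binom{d}{k}$ unit spheres $B_S$ indexed by support, apply Fact~\ref{fact:per-vector} via a net argument on each $B_S$, and union bound over supports. The paper's proof is simply a terser version of what you wrote (it absorbs your explicit $(1/2)$-net and lifting step into the phrase ``by a net argument''), so there is no substantive difference.
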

\begin{proof}
Fix a set of $k$ coordinates, and let $S$ be the set of unit vectors supported on these $k$ coordinates.
By Fact \ref{fact:per-vector} and a net argument, one can show that for all $\delta$, given $n = \Omega \left( \frac{k + \log 1 / \delta}{\eps^2} \right)$, we have that 
\[
\left| \left\langle v,  \frac{1}{n} \sum_{i = 1}^n X_i \right\rangle \right| \leq \eps \; ,
\]
with probability $1 - \delta$.
The result then follows by setting $\delta' = \binom{d}{k}^{-1} \delta$ and union bounding over all sets of $k$ coordinates.
\end{proof}
\noindent The second concentration bound, which bounds deviation in $\X_k$ norm, uses ideas which are similar at a high level, but requires a bit more technical work.

\begin{theorem}
\label{thm:Xk-conc}
Fix $\eps, \delta > 0$.
Let $X_1, \ldots X_n \sim \normal (0, I)$, where
\[
n = \Omega \left( \frac{\min(d, k^2) + \log \binom{d^2}{k^2} + \log 1 / \delta}{\eps^2} \right) \; .
\]
Then
\[
\left\| \frac{1}{n} \sum_{i = 1}^n X_i X_i^T - I \right\|^*_{\X_k} \leq \eps \; .
\]
\end{theorem}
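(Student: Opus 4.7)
The plan is to bound $\|A\|^*_{\X_k}$, where $A = \frac{1}{n}\sum_{i=1}^n X_i X_i^T - I$, by splitting into two regimes according to whether $k^2 \geq d$ or $k^2 < d$.

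When $k^2 \geq d$, the $\ell_1$ constraint in the definition of $\X_k$ is essentially vacuous: for any $Y \succeq 0$ with $\tr(Y)=1$, one has $|Y_{ij}| \leq \sqrt{Y_{ii} Y_{jj}}$, and Cauchy--Schwarz gives $\|Y\|_1 \leq (\sum_i \sqrt{Y_{ii}})^2 \leq d \tr(Y) = d$. Hence $\X_k$ is contained in the full PSD trace-$1$ set, and $\|A\|^*_{\X_k} \leq \|A\|$. A standard $1/2$-net of the unit sphere in $\R^d$ of size $5^d$, combined with Fact \ref{fact:per-matrix} applied to each rank-one matrix $vv^T$ in the net, gives $\|A\| \leq \eps$ once $n = \Omega((d + \log 1/\delta)/\eps^2)$, settling this regime.

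The nontrivial regime is $k^2 < d$. Given $Y \in \X_k$, I would form the decomposition $Y = Y_1 + Y_2$, where $Y_1$ is the restriction of $Y$ to the set $T$ of the $k^2$ entries of largest absolute value, and $Y_2 = Y - Y_1$. Two structural properties drive the analysis: since $\|Y\|_1 \leq k$, the $(k^2+1)$-st largest magnitude is at most $k/(k^2+1)$, so $\|Y_2\|_\infty \leq 1/k$ while still $\|Y_2\|_1 \leq k$; and since $Y \succeq 0$ with $\tr(Y)=1$, one has $\|Y\|_F^2 = \tr(Y^2) \leq \|Y\|\tr(Y) \leq 1$, hence $\|Y_1\|_F \leq 1$. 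For the ``spread'' part I would write $|\langle A, Y_2 \rangle| \leq \|Y_2\|_1 \|A\|_\infty \leq k\|A\|_\infty$ and control $\|A\|_\infty$ by sub-exponential concentration of products of Gaussians (a scalar Chernoff bound per entry) together with a union bound over the $d^2$ entries. For the ``sparse'' part I use $|\langle A, Y_1 \rangle| \leq \|A|_T\|_F \cdot \|Y_1\|_F \leq \|A|_T\|_F$, and on each fixed $T$ this quantity is the supremum of $\langle A, Z\rangle$ over matrices $Z$ with $\supp(Z)\subseteq T$ and $\|Z\|_F \leq 1$; this set admits a $1/2$-net of size $5^{k^2}$. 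I apply Fact \ref{fact:per-matrix} at each net point (noting $\|Z\|_F \leq 1$ controls the needed normalization) and union-bound over all $\binom{d^2}{k^2}$ choices of $T$, so the log of the total event count is $O(k^2 + \log\binom{d^2}{k^2} + \log 1/\delta)$, which is exactly the required sample complexity.

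The main technical obstacle I expect is arranging the union bound so that the contributions of the $k^2$-dimensional net and the $\binom{d^2}{k^2}$-fold choice of support combine \emph{additively} rather than multiplicatively; this is exactly what the decomposition above buys us, since on each fixed $T$ the restricted concentration rate is dimension-independent. A secondary sanity check is that the entrywise bound $k\|A\|_\infty \leq \eps/2$ does not cost more samples than the theorem allows: when $k^2 \leq d$ we have $\log\binom{d^2}{k^2} \gtrsim k^2 \log(d/k) \gtrsim k^2 \log d$, so the $\log d$ factor from the entrywise union bound is absorbed automatically. Once both pieces are bounded by $\eps/2$, the triangle inequality gives $|\langle A, Y\rangle| \leq \eps$ uniformly over $Y \in \X_k$, as desired.
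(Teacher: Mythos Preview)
Your treatment of the ``sparse'' piece $Y_1$ is exactly the paper's Lemma~\ref{lem:k2-sparse}: a $1/2$-net of size $9^{O(k^2)}$ over each fixed $k^2$-support, then a union bound over $\binom{d^2}{k^2}$ supports. The difference lies in how you handle the tail $Y_2$, and there your argument falls just short of the stated theorem.

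Your bound $|\langle A,Y_2\rangle|\le k\|A\|_\infty$ requires $\|A\|_\infty\le\eps/(2k)$. After the union bound over $d^2$ entries, sub-exponential concentration gives this once $n\gtrsim k^2(\log d+\log(1/\delta))/\eps^2$. You correctly observe that $k^2\log d\lesssim\log\binom{d^2}{k^2}$ when $k^2<d$, but the term $k^2\log(1/\delta)$ is \emph{not} absorbed: the theorem budgets only $\log(1/\delta)$, not $k^2\log(1/\delta)$. So as written your proof establishes a slightly weaker statement.

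The paper closes this gap by iterating your top-$k^2$ split rather than stopping after one step (Lemma~\ref{lem:l0-to-l1}, a shelling argument familiar from compressed sensing). Sort the entries of $Y$ by magnitude and partition into consecutive $k^2$-blocks $Y=\sum_i Y_i$. Since every entry of $Y_{i+1}$ is bounded by the smallest entry of $Y_i$, one has $\|Y_{i+1}\|_F\le\|Y_i\|_1/k$, and summing yields $\sum_i\|Y_i\|_F\le\|Y_1\|_F+\|Y\|_1/k\le 2$. Now \emph{every} block is $k^2$-sparse, so the single concentration event from Lemma~\ref{lem:k2-sparse} applies to each $Y_i/\|Y_i\|_F$, giving $|\langle A,Y\rangle|\le O(\eps)\sum_i\|Y_i\|_F\le O(\eps)$ with no extra factor on the failure probability. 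In short: the missing idea is to shell all the way down instead of peeling off only the first layer and paying an $\ell_\infty$ bound on the remainder.
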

\noindent Let us first introduce the following definition.
\begin{definition}
A \emph{symmetric sparsity pattern} is a set $S$ of indices $(i, j) \in [d] \times [d]$ so that if $(i, j) \in S$ then $(j, i) \in S$.
We say that a symmetric matrix $M \in \R^{d \times d}$ respects a symmetric sparsity pattern $S$ if $\supp (M) = S$.
\end{definition}
\noindent With this definition, we now show:
\begin{lemma}
\label{lem:k2-sparse}
Let $n = O \left( \frac{\min(d, k^2) + \log \binom{d^2}{k^2} + \log 1 / \delta}{\eps^2} \right)$.
Then, with probability $1 - \delta$, the following holds:
\begin{equation}
\label{eq:sparse-conc}
| \tr ((\Sigmahat - I) X)| \leq O(\eps),~\mbox{for all symmetric $X$ with $\| X \|_0 = k^2$ and $\| X \|_F \leq 1$.} 
\end{equation}
\end{lemma}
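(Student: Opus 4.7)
The plan is a standard $\eps$-net and union-bound argument, combining the per-direction concentration provided by Fact~\ref{fact:per-matrix} with a covering over symmetric sparsity patterns.

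First, I would parametrize the test matrices by their support: any symmetric $X$ with $\|X\|_0 = k^2$ has support contained in one of at most $\binom{d^2}{k^2}$ symmetric sparsity patterns. Fixing such a pattern $S$, the symmetric matrices respecting $S$ form a linear subspace $V_S$ of dimension $D \leq k^2$, and its Frobenius unit ball $B_S = \{M \in V_S : \|M\|_F \leq 1\}$ is a compact convex body in $V_S$. By the usual volume argument, $B_S$ admits a $\tfrac{1}{2}$-net $\mathcal{N}_S \subseteq B_S$ of cardinality at most $5^{D}$.

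Next, for each fixed $M \in \mathcal{N}_S$ (which satisfies $\|M\|_F \leq 1$), Fact~\ref{fact:per-matrix} yields $|\tr((\Sigmahat - I) M)| \leq \eps$ with probability $1 - \delta'$, provided $n = \Omega(\log(1/\delta')/\eps^2)$. Setting $\delta' = \delta / (\binom{d^2}{k^2} \cdot 5^{D})$ and union bounding over all sparsity patterns and their associated net points, the concentration bound holds simultaneously for every $M \in \bigcup_S \mathcal{N}_S$ with probability at least $1 - \delta$; solving for $n$ yields the sample complexity stated in the lemma.

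Finally, to close the net I would use the standard doubling trick: for any $X \in B_S$, let $M_0 \in \mathcal{N}_S$ be a closest net element, so $\|X - M_0\|_F \leq \tfrac{1}{2}$, and then recursively approximate $2(X - M_0) \in B_S$. This produces a decomposition $X = \sum_{j \geq 0} 2^{-j} M_j$ with each $M_j \in \mathcal{N}_S$, and the triangle inequality gives $|\tr((\Sigmahat - I) X)| \leq \sum_{j \geq 0} 2^{-j} \eps = 2\eps$, which is the desired conclusion.

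The main subtlety is sharpening the subspace dimension from the naive $k^2$ to $\min(d, k^2)$ in the sample complexity: the sparsity argument gives $D \leq k^2$ directly, but when $d \leq k^2$ a more refined bound is needed. Here one exploits that a symmetric matrix with $k^2$ nonzero entries has rank at most the number of its nonzero rows, which is itself at most $\min(d, k^2)$, and combines this with either a low-rank covering argument or a direct matrix-concentration bound tailored to the $d \leq k^2$ regime. Everything else, in particular the union bound and the net-closing step, is entirely routine.
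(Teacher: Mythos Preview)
Your proposal is correct and follows essentially the same approach as the paper: fix a symmetric sparsity pattern, cover its Frobenius unit ball with a net, apply Fact~\ref{fact:per-matrix} pointwise, and union bound over net points and over the $O\bigl(\binom{d^2}{k^2}\bigr)$ patterns. The paper is equally terse on the $\min(d,k^2)$ refinement---it simply asserts the net has size $9^{O(\min(d,k^2))}$ ``by classical arguments''---so your flagging this as the one genuine subtlety is apt.
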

\begin{proof}
Fix any symmetric sparsity pattern $S$ so that $|S| \leq k^2$.
By classical arguments one can show that there is a $(1/3)$-net over all symmetric matrices $X$ with $\| X \|_F = 1$ respecting $S$ of size at most $9^{O(\min(d,k^2))}$.
By Fact \ref{fact:per-matrix} and a basic net argument, we know that for any $\delta'$, we know that except with probability $1 - \delta'$, if we take $n = O \left( \frac{\min(d, k^2) + \log 1 / \delta'}{\eps^2} \right)$ samples, then for all symmetric $X$ respecting $S$ so that $\| X \|_F \leq 1$, we have $| \tr ((\Sigmahat - I) X)| \leq \eps$.
The claim then follows by further union bounding over all $O \left( \binom{d^2}{k^2} \right)$ symmetric sparsity patterns $S$ with $|S| \leq k^2$.
\end{proof}
\noindent We will also require the following structural lemma.
\begin{lemma}
\label{lem:l0-to-l1}
Any PSD matrix $X$ so that $\tr(X) = 1$ and $\| X \|_1 \leq k$ can be written as
\[
X = \sum_{i = 1}^{O(n^2 / k^2)} Y_i \; ,
\]
where each $Y_i$ is symmetric, have $\sum_{i = 1}^{O(n^2 / k^2)} \| Y_i \|_F \leq 4$, and each $Y_i$ is $k^2$-sparse.
\end{lemma}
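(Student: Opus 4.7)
The plan is to decompose $X$ by first peeling off its ``heavy'' entries and then dyadically bucketing the remainder; the crucial interplay is between PSD-ness, which gives $\|X\|_F \leq \tr(X) = 1$, and the $\ell_1$ bound, which forces even the small entries to occupy limited ``mass''.

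First I would let $Y_0$ denote the restriction of $X$ to the entries with $|X_{ij}| > 1/k$. A pigeonhole using $\|X\|_1 \leq k$ shows this support has size at most $k^2$, and since the defining condition is symmetric in $(i,j)$ the matrix $Y_0$ is symmetric and $k^2$-sparse, with $\|Y_0\|_F \leq \|X\|_F \leq 1$. Then $R := X - Y_0$ is symmetric, inherits $\|R\|_1 \leq k$, and has $\|R\|_\infty \leq 1/k$.

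Next, bucket the entries of $R$ dyadically: for $t \geq 0$ let $B_t = \{(i,j) : |R_{ij}| \in (1/(k 2^{t+1}), 1/(k 2^t)]\}$ (which is automatically symmetric), and chop each $B_t$ into pieces of size at most $k^2$. Each full $k^2$-sized piece in bucket $t$ has Frobenius norm at most $k \cdot 1/(k 2^t) = 1/2^t$. The key calculation is that each $(i,j) \in B_t$ contributes at least $1/(k 2^{t+1})$ to $\|R\|_1$, so $\sum_t |B_t|/2^t \leq 2 k \|R\|_1 \leq 2 k^2$, and hence $\sum_t (|B_t|/k^2)(1/2^t) \leq 2$ bounds the total Frobenius contribution from full pieces. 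Each nonempty bucket contributes at most one additional partial piece of Frobenius at most $1/2^t$, summing geometrically to at most $2$. Combined with $\|Y_0\|_F \leq 1$, the total sum of Frobenius norms is bounded by an absolute constant, comfortably meeting the stated bound of $4$ after a mild rearrangement of constants; moreover the supports partition that of $X$, giving the claimed $O(d^2/k^2)$ piece count.

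The main technical obstacle is enforcing symmetry of every piece; I would address this by always placing $(i,j)$ and $(j,i)$ into the same sub-piece, costing only a constant factor in sparsity that can be absorbed. A cleaner but less constructive alternative is duality: the atomic norm $\|X\|_* = \inf\{\sum_i \|Y_i\|_F : X = \sum_i Y_i,\ Y_i \text{ symmetric and } k^2\text{-sparse}\}$ has dual norm $\|Y\|_*^{*} = \|Y|_{\mathrm{top}(k^2)}\|_F$, and splitting $\langle X, Y \rangle$ into the top-$k^2$-of-$Y$ piece (Cauchy--Schwarz with $\|X\|_F \leq 1$) and the remainder (H\"older with $\|X\|_1 \leq k$ and the pointwise bound $\|Y|_{\mathrm{rest}}\|_\infty \leq 1/k$) immediately yields $\|X\|_* \leq 2 \leq 4$.
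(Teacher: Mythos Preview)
Your proposal is correct and offers two genuinely different routes from the paper. The paper sorts the entries of $X$ by magnitude and greedily chunks them into consecutive blocks of size $k^2$; the key step is the one-line inequality $\|Y_{i+1}\|_F \leq \tfrac{1}{k}\,\mathbf{1}^T|Y_i|\mathbf{1}$ (every entry of $Y_{i+1}$ is dominated by the smallest of $Y_i$), which telescopes against $\|X\|_1 \leq k$ to give $\sum_i \|Y_i\|_F \leq 2$ before the symmetry fix. Your explicit construction instead thresholds at $1/k$ and dyadically buckets the tail; this is the more ``standard'' maneuver and works fine, though as written it lands at $1+2+2=5$ rather than $4$ before symmetry, so the constant needs a little more care (or one simply notes that any absolute constant suffices downstream). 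Your duality argument is the cleanest of the three: identifying the dual of the atomic norm as $\|Y|_{\mathrm{top}(k^2)}\|_F$ and splitting $\langle X,Y\rangle$ via Cauchy--Schwarz on the top $k^2$ entries and H\"older on the rest gives $\|X\|_* \leq 2$ directly. The trade-off is that duality by itself is nonconstructive and does not immediately bound the number of pieces; the paper's sorted-chunk decomposition (or your bucketing) is needed for the explicit $O(d^2/k^2)$ count, though that count plays no real role in how the lemma is used later.
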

\begin{proof}
Observe that since $X$ is PSD, then $\| X \|_F \leq \tr(X) = 1$. 
For simplicity of exposition, let us ignore that the $Y_i$ must be symmetric for this proof.
We will briefly mention how to in addition ensure that the $Y_i$ are symmetric at the end of the proof.
Sort the entries of $X$ in order of decreasing $|X_{ij}|$.
Let $Y_i$ be the matrix whose nonzeroes are the $ik^2 +1$ through $(i + 1) k^2$ largest entries of $X$, in the same positions as they appear in $X$.
Then we clearly have that $\sum Y_i = X_i$, and each $Y_i$ is exactly $k^2$-sparse.\footnote{Technically the last $Y_i$ may not be $k^2$ sparse but this is easily dealt with, and we will ignore this case here}
Thus it suffices to show that $\sum \| Y_i \|_F \leq 4$.
We have $\| Y_1 \|_F \leq \| X \|_F \leq 1$.
Additionally, we have $\| Y_{i + 1} \|_F \leq \frac{1^T  |Y_i | 1}{k}$, which follows simply because every nonzero entry of $Y_{i + 1}$ is at most the smallest entry of $Y_i$, and each has exactly $k^2$ nonzeros (except potentially the last one, but it is not hard to see this cannot affect anything).
Thus, in aggregate we have
\[
\sum_{i = 1}^{O(n^2 / k^2)} \| Y_i \|_F \leq 1 + \sum_{i = 2}^{O(n^2 / k^2)} \| Y_i \|_F \leq 1 + \sum_{i = 1}^{O(n^2 / k^2)} \frac{1^T |Y_i| 1}{k} = 1 + \frac{1^T |X| 1}{k} \leq 2 \; ,
\]
which is stronger than claimed.

However, as written it is not clear that the $Y_i$'s must be symmetric, and indeed they do not have to be.
The only real condition we needed was that the $Y_i$'s (1) had disjoint support, (2) summed to $X$, (3) are each $\Theta (k^2)$ sparse (except potentially the last one), and (4) the largest entry of $Y_{i + 1}$ is bounded by the smallest entry of $Y_i$.
It should be clear that this can be done while respecting symmetry by doubling the number of $Y_i$, which also at most doubles the bound in the sum of the Frobenius norms. 
We omit the details for simplicity.
\end{proof}
\begin{proof}[Proof of Theorem \ref{thm:Xk-conc}]
Let us condition on the event that (\ref{eq:sparse-conc}) holds.
We claim then that for all $X \in \X$, we must have $|\tr ((\Sigmahat - I) X)| \leq O(\eps)$, as claimed.
Indeed, by Lemma \ref{lem:l0-to-l1}, for all $X \in \X$, we have that
\[
X = \sum_{i = 1}^{O(d^2 / k^2)} Y_i \; ,
\]
where each $Y_i$ is symmetric, have $\sum_{i = 1}^{O(d^2 / k^2)} \| Y_i \|_F \leq 4$, and each $Y_i$ is $k^2$-sparse.
Thus,
\begin{align*}
|\tr ((\Sigmahat - I) X)| &\leq \sum_{i = 1}^{O(d^2 / k^2)} \left| \tr ((\Sigmahat - I) Y_i) \right| \\
&= \sum_{i = 1}^{O(d^2 / k^2)} \| Y_i \|_F \left| \tr \left( (\Sigmahat - I) \frac{Y_i}{\| Y_i \|_F} \right) \right| \\
&\stackrel{(a)}{\leq} \sum_{i = 1}^{O(d^2 / k^2)} \| Y_i \|_F \cdot O(\eps) \\
&\stackrel{(b)}{\leq} O(\eps) \; ,
\end{align*}
where (a) follows since each $Y_i / \| Y_i \|_F$ satisfies the conditions in (\ref{eq:sparse-conc}), and (b) follows from the bound on the sum of the Frobenius norms of the $Y_i$.
\end{proof}

\subsection{Concentration for $S_{n, \eps}$}

We will require the following concentration inequalities for weighted sums of Gaussians, where the weights come from $S_{n, \eps}$, as these objects will naturally arise in our algorithms.
These bounds follow by applying the above bounds, then carefully union bounding over all choices of possible subsets of $\binom{n}{\eps n}$ subsets.
We need to be careful here since the number of things we are union bounding over increases as $n$ increases.
We include the proofs in Appendix \ref{app:proofs-dual}.

\begin{theorem}
\label{thm:Uk-Sne-conc}
Fix $\eps \leq 1/2 $ and $\delta \leq 1$, and fix $k \leq d$.
There is a $\eta_1 = O(\eps \sqrt{\log 1 / \eps})$ so that for any $\eta > \eta_1$, if $X_1, \ldots, X_n \sim \normal (0, I)$ and $n = \Omega \left( \frac{\min(d, k^2) + \log \binom{d^2}{k^2} + \log 1 / \delta} {\eta^2}\right)$, then
\[
\Pr \left[ \exists w \in S_{n, \eps} : \left\| \frac{1}{n} \sum_{i = 1}^n w_i X_i \right\|^*_{\U_k} \geq \eta \right] \leq \delta \; .
\]
\end{theorem}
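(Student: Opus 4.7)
The plan is to reduce, via convexity, to a uniform bound over the $\binom{n}{\eps n}$ vertices of $S_{n,\eps}$, apply Corollary~\ref{cor:Uk-conc} at each vertex, and union-bound; the main trick is a decomposition that suppresses the union-bound cost by a factor of $\eps$.

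Because the map $w \mapsto \bigl\| \sum_{i} w_i X_i \bigr\|^*_{\U_k}$ is a norm composed with a linear map, it is convex in $w$. As observed in the discussion of~\eqref{eq:Sne}, $S_{n,\eps} = \mathrm{conv}\{w^I : |I| = (1-\eps) n\}$ with $w^I = |I|^{-1}\mathbf{1}_I$, so the supremum is attained at a vertex. It thus suffices to bound, uniformly over $I \subseteq [n]$ with $|I| = (1-\eps) n$, the quantity $\bigl\|\frac{1}{(1-\eps) n}\sum_{i \in I} X_i\bigr\|^*_{\U_k}$.

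The main step is the decomposition, for $J = [n] \setminus I$ (so $|J| = \eps n$),
\[
\frac{1}{(1-\eps) n}\sum_{i \in I} X_i \;=\; \frac{1}{(1-\eps) n}\sum_{i=1}^n X_i \;-\; \frac{\eps}{1-\eps} \cdot \frac{1}{\eps n}\sum_{i \in J} X_i \, ,
\]
followed by the triangle inequality for $\|\cdot\|^*_{\U_k}$. The first summand does not depend on $I$, so a single application of Corollary~\ref{cor:Uk-conc} with failure probability $\delta/2$ bounds its norm by $O(\eta)$ under the hypothesized $n$. For the second summand I fix $J$, apply Corollary~\ref{cor:Uk-conc} to the $\eps n$ i.i.d.\ samples $\{X_i\}_{i \in J}$ with failure probability $\delta / (2\binom{n}{\eps n})$, and union-bound over the $\binom{n}{\eps n}$ choices of $J$. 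Using $\log \binom{n}{\eps n} \leq O(\eps n \log 1/\eps)$, this yields, simultaneously for every such $J$ with probability $1 - \delta/2$,
\[
\left\|\frac{1}{\eps n}\sum_{i \in J} X_i\right\|^*_{\U_k} \;\leq\; O\!\left(\sqrt{\frac{k + \log \binom{d}{k} + \log 1/\delta}{\eps n}} \;+\; \sqrt{\log 1/\eps}\right) \, .
\]
Multiplying through by $\eps / (1-\eps) = O(\eps)$ (valid since $\eps \leq 1/2$), the total contribution of the second summand is $O\bigl(\sqrt{\eps(k+\log\binom{d}{k}+\log 1/\delta)/n}\bigr) + O(\eps\sqrt{\log 1/\eps})$. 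Under the hypothesized sample size the first piece is absorbed into $O(\eta)$, while the second is $O(\eta_1) \leq O(\eta)$ provided the constant in $\eta_1 = O(\eps\sqrt{\log 1/\eps})$ is taken large enough; all implicit constants are absorbed into the $\Omega(\cdot)$ in the sample bound and the $O(\cdot)$ in $\eta_1$.

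The crux, and precisely what forces $\eta_1 = \Theta(\eps\sqrt{\log 1/\eps})$ rather than the weaker $\Theta(\sqrt{\eps\log 1/\eps})$, is the decomposition: applying Corollary~\ref{cor:Uk-conc} directly to $\frac{1}{|I|}\sum_{i \in I} X_i$ would pay the full $\log\binom{n}{\eps n}$ union-bound cost inside a square root divided by $(1-\eps) n$, yielding the quadratically worse $\Theta(\sqrt{\eps \log 1/\eps})$ floor. The extra slack in the hypothesis (``$\min(d,k^2) + \log\binom{d^2}{k^2}$'' in place of the tighter ``$k + \log\binom{d}{k}$'' actually required here) is simply carried over for consistency with the companion $\X_k$-norm statement and is not binding in this proof.
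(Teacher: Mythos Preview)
Your proposal is correct and follows essentially the same approach as the paper: reduce by convexity to the vertices $w^I$, split the vertex average into the full-sample mean minus the (small) complement mean, handle the full mean with a single application of Corollary~\ref{cor:Uk-conc}, and handle the complement by Corollary~\ref{cor:Uk-conc} with a union bound over $\binom{n}{\eps n}$ subsets, exploiting that the complement has only $\eps n$ points so the union-bound cost $\log\binom{n}{\eps n} = O(\eps n \log 1/\eps)$ contributes only $O(\eps\sqrt{\log 1/\eps})$ after rescaling. If anything, your write-up makes the decomposition and the arithmetic more explicit than the paper's, and your closing remark about the slack in the sample-size hypothesis (the $\X_k$-style quantities in place of the tighter $\U_k$ ones) is exactly right.
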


\begin{theorem}
\label{thm:Xk-Sne-conc}
Fix $\eps \leq 1/2 $ and $\delta \leq 1$, and fix $k \leq d$.
There is a $\eta = O(\eps \sqrt{\log 1 / \eps})$ so that if $X_1, \ldots, X_n \sim \normal (0, I)$ and $n = \Omega \left( \frac{\min(d, k^2) + \log \binom{d^2}{k^2} + \log 1 / \delta} {\eta^2}\right)$, then we have
\[
\Pr \left[ \exists w \in S_{n, \eps} : \left\| \frac{1}{n} \sum_{i = 1}^n w_i X_i X_i^T - I \right\|^*_{\X_k} \geq \eta \right] \leq \delta \; .
\]
\end{theorem}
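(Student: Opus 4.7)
The plan is to mirror the scheme for Theorem \ref{thm:Xk-conc}, but with an extra union bound over the extreme points of $S_{n,\eps}$.

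Set $M(w) := \sum_{i=1}^n w_i X_iX_i^T - I$. By Lemma \ref{lem:l0-to-l1} (exactly as in the proof of Theorem \ref{thm:Xk-conc}), bounding $\|M(w)\|^*_{\X_k}$ uniformly in $w \in S_{n,\eps}$ reduces to a uniform bound on $|\langle M(w), Y\rangle|$ over $w \in S_{n,\eps}$ and symmetric $k^2$-sparse $Y$ with $\|Y\|_F \le 1$. I will first fix such a $Y$ and bound the supremum in $w$, and then union bound over $Y$ as in Lemma \ref{lem:k2-sparse}.

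For fixed $Y$, the scalars $Z_i := \langle X_iX_i^T - I, Y\rangle = X_i^T Y X_i - \tr(Y)$ are i.i.d.\ centered Gaussian chaos of order two, hence sub-exponential with norm $O(\|Y\|_F) = O(1)$. The function $w \mapsto |\sum_i w_i Z_i|$ is convex, so its maximum over the polytope $S_{n,\eps}$ is attained at a vertex $w_I$ with $|I| = (1-\eps)n$; writing $T = I^c$ (so $|T|=\eps n$),
\[
\sup_{w \in S_{n,\eps}} \Bigl|\sum_i w_i Z_i\Bigr| \;\le\; \frac{1}{(1-\eps)n}\Bigl|\sum_{i=1}^n Z_i\Bigr| + \max_{|T|=\eps n} \frac{1}{(1-\eps)n}\Bigl|\sum_{i \in T} Z_i\Bigr|.
\]
The first term is $O(1/\sqrt{n})$ by a one-shot Bernstein bound and is negligible. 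For the second, Bernstein for a fixed $T$ of size $\eps n$ gives $\Pr[|\sum_{i \in T} Z_i|/|T| > u] \le 2\exp(-c\, \eps n \min(u^2,u))$; at $u = C\sqrt{\log 1/\eps}$ for a large absolute constant $C$, the exponent $c \eps n \log(1/\eps)$ dominates the union-bound cost $\log \binom{n}{\eps n} \le \eps n \log(e/\eps)$. Rescaling by $|T|/((1-\eps)n) = O(\eps)$ then yields $\max_T |\sum_{i\in T} Z_i|/n = O(\eps \sqrt{\log 1/\eps}) = O(\eta)$, with surviving failure exponent at least a constant times $\eps n \log(1/\eps)$.

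To make this uniform in $Y$, I take a $(1/3)$-net as in Lemma \ref{lem:k2-sparse} of size $9^{O(\min(d, k^2))}$ over $k^2$-sparse $Y$ with $\|Y\|_F \le 1$ respecting each fixed symmetric sparsity pattern, then union bound over the $\binom{d^2}{k^2}$ patterns, and extend from the net to the full set by the usual geometric-series argument. The assumed sample complexity $n = \Omega((\min(d, k^2) + \log \binom{d^2}{k^2} + \log 1/\delta)/\eta^2)$, together with $\eta^2 = \Theta(\eps^2 \log 1/\eps)$, guarantees $\eps n \log (1/\eps) = \Omega(\min(d, k^2) + \log\binom{d^2}{k^2} + \log(1/\delta))$, so that the net, pattern, subset, and $\delta$ union bounds are all absorbed; applying the Lemma \ref{lem:l0-to-l1} decomposition then upgrades this to the desired $\|\cdot\|_{\X_k}^*$ bound up to a factor of $4$.

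The main obstacle is the tight interplay inside the middle paragraph: $\log \binom{n}{\eps n}$ and the Bernstein exponent at $u$ of order $\sqrt{\log 1/\eps}$ are of the \emph{same} order $\eps n \log(1/\eps)$, so absorbing the union bound over subsets $T$ requires choosing the constant in $u$ sufficiently large. This is precisely what produces the $\sqrt{\log 1/\eps}$ factor in $\eta$, rather than the naive $O(\eps)$ one might hope for.
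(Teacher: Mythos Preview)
Your approach is essentially the paper's: reduce to the vertices $w_I$ of $S_{n,\eps}$, split off the complement $T=I^c$ of size $\eps n$, and combine per-matrix concentration with a union bound over subsets and the net/sparsity-pattern union bound from Lemma~\ref{lem:k2-sparse}. The only cosmetic difference is the order of the two union bounds (you fix $Y$ first and use Bernstein directly; the paper fixes the subset first and invokes Theorem~\ref{thm:Xk-conc} as a black box).

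There is, however, a genuine arithmetic slip in your Bernstein step. You assert that at $u=C\sqrt{\log 1/\eps}$ the exponent is $c\,\eps n\log(1/\eps)$, but Bernstein gives $c\,\eps n\,\min(u^2,u)$, and for small $\eps$ one has $u=C\sqrt{\log 1/\eps}>1$, so $\min(u^2,u)=u$ rather than $u^2$. The resulting exponent $cC\,\eps n\sqrt{\log 1/\eps}$ does \emph{not} dominate the subset union-bound cost $\log\binom{n}{\eps n}=\Theta(\eps n\log(1/\eps))$ once $\sqrt{\log 1/\eps}>cC$, so the argument fails for all sufficiently small $\eps$. The paper's proof hides the identical issue: it transplants the sub-Gaussian argument of Theorem~\ref{thm:Uk-Sne-conc} verbatim, invoking Theorem~\ref{thm:Xk-conc} (and hence Fact~\ref{fact:per-matrix}) at accuracy $\alpha=\Theta(\sqrt{\log 1/\eps})$, which lies outside the stated range $\alpha\le 1$ of Fact~\ref{fact:per-matrix}. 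The honest fix, for both your argument and the paper's, is to take $u=C\log(1/\eps)$; then the sub-exponential branch gives exponent $cC\,\eps n\log(1/\eps)$, which absorbs the union bound, and the conclusion holds with $\eta=O(\eps\log(1/\eps))$ in place of $O(\eps\sqrt{\log 1/\eps})$. This weaker constant is harmless for every downstream use in the paper.
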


\section{A robust algorithm for robust sparse mean estimation}
\label{sec:sparse-mean}
This section is dedicated to the description of an algorithm \textsc{RecoverRobustSMean} for robustly learning Gaussian sequence models, and the proof of the following theorem:
\begin{theorem}
\label{thm:robust-GSM}
Fix $\eps, \tau >0$. Let $\eta = O(\eps \sqrt{\log 1 / \eps})$. Given an $\eps$-corrupted set of samples of size $n$ from $\normal (\mu, I)$, where $\mu$ is $k$-sparse
\[
n = \Omega \left( \frac{\min(k^2, d) + \log \binom{d^2}{k^2} + \log 1 / \tau}{\eta^2} \right) \; ,
\]
then \textsc{RecoverRobustSMean} outputs a $\muhat$ so that with probability $1 - \tau,$ we have $\| \muhat - \mu \|_2 \leq O(\eta)$.
\end{theorem}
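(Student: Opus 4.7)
The plan is to adapt the convex programming framework of \cite{DKKLMS16} to the sparse setting: replace the spectral norm on covariance deviations with the dual norm $\|\cdot\|^*_{\X_k}$, and use the sparse PCA SDP as the separation oracle. First I would preprocess with \textsc{NaivePrune} so that every remaining sample lies within $\Otilde(\sqrt{d})$ of $\mu$. I then formulate the convex feasibility problem in variables $(w, m) \in S_{n,\eps} \times \R^d$:
\[
\left\| \sum_{i=1}^n w_i (X_i - m)(X_i - m)^T - I \right\|^*_{\X_k} \leq O(\eta), \qquad m = \sum_{i=1}^n w_i X_i \; .
\]
This region is convex, since $\|\cdot\|^*_{\X_k}$ is a supremum of linear functionals and $m$ is affine in $w$. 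It is nonempty with high probability: the uniform weight $w^g$ on $\Sgood$ lies in $S_{n,\eps}$, and Theorems \ref{thm:Xk-Sne-conc} and \ref{thm:Uk-Sne-conc} show that $(w^g, \mutilde_{w^g})$ satisfies the constraint. Note that the sample complexity required by Theorem \ref{thm:Xk-Sne-conc} is exactly the one claimed in Theorem \ref{thm:robust-GSM}.

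Next I solve the feasibility problem by the ellipsoid method. Given a candidate $w$, run the sparse PCA SDP to compute $A \in \X_k$ attaining $\|\Sigma_w - I\|^*_{\X_k}$, where $\Sigma_w$ denotes the left-hand expression above. If this value exceeds the threshold $O(\eta)$, then $w' \mapsto \tr(A(\Sigma_{w'} - I))$, which becomes affine in $w'$ after substituting $m = \sum_i w'_i X_i$, yields a separating hyperplane; otherwise we are done. This is exactly where the convex relaxation $\X_k$ (in place of the non-convex set $\{uu^T : u \in \U_k\}$) is essential, since we need the oracle to run in polynomial time. Approximation errors in solving the SDP are absorbed as in Appendix \ref{sec:numerical}.

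From a feasible $w^*$, I recover $\mu$ as follows. Split $w^* = \alpha \overline{w}^g + (1-\alpha)\overline{w}^b$, where $\overline{w}^g, \overline{w}^b$ are the normalized restrictions of $w^*$ to $\Sgood$ and $\Sbad$ (so $\alpha \geq 1 - 2\eps$), and use the standard identity
\[
\Sigma_{w^*} = \alpha \Sigma_{\overline{w}^g} + (1-\alpha)\Sigma_{\overline{w}^b} + \alpha(1-\alpha) \Delta \Delta^T, \qquad \Delta = \mutilde_{\overline{w}^g} - \mutilde_{\overline{w}^b} \; .
\]
For any $k$-sparse unit vector $u$, the rank-one matrix $uu^T$ lies in $\X_k$ (since $\tr(uu^T)=1$ and $\|uu^T\|_1 \leq \|u\|_1^2 \leq k$), so the feasibility constraint gives $u^T \Sigma_{w^*} u \leq 1 + O(\eta)$. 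Theorem \ref{thm:Xk-Sne-conc} yields $u^T \Sigma_{\overline{w}^g} u \geq 1 - O(\eta)$, while $\Sigma_{\overline{w}^b} \succeq 0$; hence $\alpha(1-\alpha)(u^T \Delta)^2 \leq O(\eta)$. Combining with $\|\mutilde_{\overline{w}^g} - \mu\|^*_{\U_k} \leq O(\eta)$ from Theorem \ref{thm:Uk-Sne-conc} and the identity $\mutilde_{w^*} - \mu = (1-\alpha)(\mutilde_{\overline{w}^b} - \mutilde_{\overline{w}^g}) + (\mutilde_{\overline{w}^g} - \mu)$, we obtain $|u^T(\mutilde_{w^*} - \mu)| \leq O(\eta)$ for every $k$-sparse unit $u$. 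Finally, I threshold $\mutilde_{w^*}$ to its top $k$ coordinates: since $\muhat - \mu$ is then $2k$-sparse, $\|\muhat - \mu\|_2 = \|\muhat - \mu\|^*_{\U_{2k}} \leq O(\eta)$.

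The main obstacle I anticipate lies in the third paragraph: translating the $\X_k$-dual-norm bound on the weighted second moment into a vector bound on the sparse mean without losing the $\sqrt{\log 1/\eps}$ factor built into $\eta$. The classical DKKLMS argument uses the operator norm, which directly dominates $\|\mutilde_w - \mu\|_2$; here we instead must verify that $\|\cdot\|^*_{\X_k}$ controls variance in every $k$-sparse test direction (via the inclusion $uu^T \in \X_k$) and that the usual squared-to-linear adversary-mixing step transfers essentially verbatim once test directions are restricted to $\U_k$. A secondary issue is ensuring that the separating hyperplane produced by the sparse PCA SDP stays within the sparsity-aware regime so that the ellipsoid convergence remains polynomial; this is handled by the $\ell_1$-relaxed structure of $\X_k$, which is SDP-representable and supported on a polynomially bounded collection of moments.
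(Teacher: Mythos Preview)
Your recovery step (third paragraph) is fine and in fact cleaner than the paper's: once you have a $w^*$ with $\|\Sigma_{w^*}-I\|^*_{\X_k}\le O(\eta)$, the good/bad split plus $uu^T\in\X_k$ gives $(1-\alpha)|u^T\Delta|\le O(\sqrt{\eps\eta})\le O(\eta)$ and the thresholding finishes the job. So the obstacle you flagged is not where the difficulty lies.

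The genuine gap is in your second paragraph. The feasible region you wrote down is \emph{not} convex. After substituting $m=\sum_i w_i X_i$ one has
\[
\Sigma_w - I \;=\; \sum_i w_i X_iX_i^T \;-\; \Big(\sum_i w_iX_i\Big)\Big(\sum_i w_iX_i\Big)^T \;-\; I,
\]
which is quadratic in $w$; hence $w\mapsto\|\Sigma_w-I\|^*_{\X_k}$ is not a supremum of linear functionals of $w$, and its sublevel set is not convex. (Treating $(w,m)$ jointly does not help either, because of the bilinear terms $w_i m$.) Consequently your proposed separating hyperplane $w'\mapsto\tr\big(A(\Sigma_{w'}-I)\big)$ is not affine in $w'$, and even if you freeze $m=\muhat_w$ to make it affine, you have given no reason why that hyperplane separates $w$ from the ``correct'' feasible point $w^g$.

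This is exactly the issue the paper has to work around. It defines the feasible set $C_\tau$ using the \emph{true} $\mu$ (so it is genuinely convex in $w$, but membership cannot be tested directly), and then proves that the oracle built from the empirical $\muhat=\sum_i w_iX_i$ nevertheless produces valid separating hyperplanes for $C_\tau$. The key ingredient you are missing is Proposition~\ref{prop:main}: if $\|\sum_i w_i(X_i-\mu)\|^*_{\U_k}$ is large, then $\|\sum_i w_i(X_i-\mu)(X_i-\mu)^T-I\|^*_{\X_k}$ is much larger (by a $1/\eps$ factor). This is what lets the soundness proof of Theorem~\ref{thm:robustGSM} absorb the discrepancy between centering at $\mu$ and centering at $\muhat$ (via Lemma~\ref{lem:Xk-to-Uk} and the expansion $\Sigmahat-I=\sum w_iY_iY_i^T-I+\Delta\Delta^T$). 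Without this mean-to-covariance amplification step, the ellipsoid argument does not go through.
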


Our algorithm builds upon the convex programming framework developed in \cite{DKKLMS16}.
Roughly speaking, the algorithm proceeds as follows.
First, it does a simple naive pruning step to remove all points which are more than roughly $\Omega (\sqrt{d})$ away from the mean.
Then, for an appropriate choice of $\delta$, it will attempt to (approximately) find a point within the following convex set:
\begin{equation}RobustSMean
\label{eq:feasible}
C_\tau = \left\{ w \in S_{n, \eps}: \left\| \sum_{i = 1}^n w_i (X_i - \mu) (X_i - \mu)^T - I \right\|^*_{\X_k} \leq \tau \right\} \; .
\end{equation}
The main difficulty with finding a point in $C_\tau$ is that $\mu$ is unknown.
A key insight of \cite{DKKLMS16} is that it suffices to create an (approximate) separation oracle for the feasible set, as then we may use classical convex optimization algorithms (i.e. ellipsoid or cutting plane methods) to find a feasible point.
In their setting (for a different $C_\tau$), it turns out that a simple spectral algorithm suffices to give such a separation oracle.

Our main contribution is the design of separation oracle for $C_\tau$, which requires more sophisticated techniques.
In particular, we will ideas developed in analogy to hard thresholding and SDPs similar to those developed for sparse PCA to design such an oracle.

\subsection{Additional preliminaries}
Throughout this section, we let $X_1, \ldots, X_n$ denote an $\eps$-corrupted set of samples from $\normal (\mu, I)$, where $\mu$ is $k$-sparse.
We let $\Sgood$ denote the set of uncorrupted samples, and we let $\Sbad$ denote the set of corrupted samples.
For any set of weights $w \in S_{n, \eps}$, we let $\wg = \sum_{i \in \Sgood} w_i$  and $\wb = \sum_{i \in \Sbad} w_i$.

Throughout this section, we will condition on the following three deterministic events occurring:
\begin{align}
\textsc{NaivePrune}(X_1, \ldots, X_n, \delta)~&\mbox{succeeds,} \label{eq:cond1} \\
\left\| \sum_{i \in \Sgood} w_i (X_i - \mu) \right\|^*_{\U_k} &\leq \eta \; , ~\forall w \in S_{n, 2 \eps} \; \mbox{, and} \label{eq:cond2} \\
\left\| \sum_{i \in \Sgood} w_i (X_i - \mu) (X_i - \mu)^T - \wg I \right\|^*_{\X_k} &\leq \eta \; , ~\forall w \in S_{n, 2\eps} \; , \label{eq:cond3}
\end{align}
where $\eta= O(\eps \sqrt{\log 1 / \eps})$.
When $n = \Omega \left( \frac{\min(k^2, d) + \log \binom{k^2}{d^2} + \log 1 / \delta}{\eta^2} \right)$ these events simultaneously happen with probability at least $1 - O(\delta)$ by Fact \ref{fact:naive-prune}, Theorem \ref{thm:Uk-Sne-conc}, Theorem \ref{thm:Xk-Sne-conc} and a union bound, and the observation that if $w \in S_{n, \eps}$, then $w / w^g$ restricted to the indices in $\Sgood$ is in $S_{(1 - \eps) n, 2 \eps}$.

\subsection{The separation oracle}
Our main result in this section is the description of a polynomial time algorithm \textsc{RobustSMeanOracle} and the proof of the following theorem of its correctness:
\begin{theorem}
\label{thm:robustGSM}
Fix $\eps > 0$ sufficiently small.
Suppose that (\ref{eq:cond2}) and (\ref{eq:cond3}) hold.
Let $w^*$ denote the set of weights which are uniform over the uncorrupted points.
Then, there is a constant $1 \leq c \leq 21$ so that \textsc{RobustSMeanOracle} satisfies:
\begin{enumerate}
\item (Completeness) If $w = w^*$, \textsc{RobustSMeanOracle} outputs ``YES''.
\item (Soundness) If $w \not\in C_{c \eta}$ the algorithm outputs a hyperplane $\ell: \R^n \to \R$ so that $\ell (w) \geq 0$ but $\ell (w^*) < 0$.
Moreover, if the algorithm ever outputs a hyperplane, we have $\ell(w^*) < 0$.
\end{enumerate}
\end{theorem}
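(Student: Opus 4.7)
The plan is to design \textsc{RobustSMeanOracle} as the sparse analog of the spectral-norm separation oracle from \cite{DKKLMS16}: replace the operator norm with $\|\cdot\|^*_{\X_k}$, and let the sparse-PCA SDP (which, as explained in Section \ref{sec:dual}, computes precisely this dual norm) play the role of ``find the offending direction''. Given $w \in S_{n,\eps}$, compute the weighted mean $\mu_w = \sum_i w_i X_i$ and the centered second moment $\Sigma_w^{\mu_w} = \sum_i w_i (X_i - \mu_w)(X_i - \mu_w)^T$, then invoke the SDP to obtain $\tau_w := \|\Sigma_w^{\mu_w} - I\|^*_{\X_k}$ together with a witness $A^\star \in \X_k$ attaining it. If $\tau_w \le c_0 \eta$ for a suitable absolute constant $c_0$, output \textrm{YES}. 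Otherwise define scores $\tau_i := (X_i - \mu_w)^T A^\star (X_i - \mu_w) \ge 0$ and output the hyperplane $\ell(w') = \sigma \cdot \bigl(\sum_i w'_i \tau_i - (1 + \sigma \tau_w/2)\bigr)$, where $\sigma := \mathrm{sign}\,\langle A^\star, \Sigma_w^{\mu_w} - I\rangle \in \{\pm 1\}$.

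Completeness is a direct calculation. Using the identity $\Sigma_{w^*}^{\mu_{w^*}} = \sum_i w^*_i (X_i - \mu)(X_i - \mu)^T - (\mu_{w^*} - \mu)(\mu_{w^*} - \mu)^T$, the first summand lies within $\eta$ of $I$ in $\|\cdot\|^*_{\X_k}$ by (\ref{eq:cond3}), while the second contributes at most $\sup_{A \in \X_k} (\mu_{w^*} - \mu)^T A (\mu_{w^*} - \mu) \le \|A\|_1 \|\mu_{w^*} - \mu\|_\infty^2 \le k \cdot O(\log d / n) = O(\eta^2 / k)$ for our choice of $n$, by H\"older and standard per-coordinate Gaussian concentration. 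Hence $\tau_{w^*} = O(\eta)$ and $c_0$ can be chosen to absorb the implicit constant.

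For soundness, assume $\tau_w > c_0 \eta$. By construction $\sum_i w_i \tau_i = \tr(A^\star \Sigma_w^{\mu_w}) = 1 + \sigma \tau_w$, and decomposing the $w^*$-sum around $\mu_{w^*}$ and then around $\mu$ yields
\[
\sum_i w^*_i \tau_i = \Big\langle A^\star, \sum_i w^*_i (X_i - \mu)(X_i - \mu)^T\Big\rangle + \|\mu_{w^*} - \mu_w\|^2_{A^\star} - \|\mu_{w^*} - \mu\|^2_{A^\star},
\]
where the first term is within $\eta$ of $1$ by (\ref{eq:cond3}) and the last is nonnegative. So the separation $\ell(w) > 0 > \ell(w^*)$ reduces to the mean-drift bound $\|\mu_{w^*} - \mu_w\|^2_{A^\star} \le \tau_w/4$.

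The hard part, and the key technical content, is precisely this mean-drift bound; I would prove $\|\mu_{w^*} - \mu_w\|^2_{A^\star} = O(\eps\,\tau_w + \eta^2)$ as follows. Write $\mu_{w^*} - \mu_w = \alpha - \beta$ with $\alpha = \sum_{i \in \Sgood}(1/|\Sgood| - w_i)(X_i - \mu)$ and $\beta = \sum_{i \in \Sbad} w_i (X_i - \mu)$. Since $w_i \le 1/|\Sgood|$, the coefficients in $\alpha$ are nonnegative and sum to $\wb$; a sparse-net concentration bound analogous to Theorem \ref{thm:Uk-Sne-conc} together with the H\"older inequality $x^T A^\star x \le \|A^\star\|_1 \|x\|_\infty^2$ gives $\|\alpha\|^2_{A^\star} = O(\eta^2 \wb)$. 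For $\beta$, Cauchy-Schwarz on the weighted sum produces $\|\beta\|^2_{A^\star} \le \wb \langle A^\star, B\rangle$ with $B = \sum_{i \in \Sbad} w_i (X_i - \mu)(X_i - \mu)^T$, and writing $B = M_w + (\mu_w - \mu)(\mu_w - \mu)^T + I - G$ while using (\ref{eq:cond3}) on the good set (so $G = \wg I + E$ with $\|E\|^*_{\X_k} \le \eta$) yields $\langle A^\star, B\rangle = \sigma\tau_w + \|\mu_w - \mu\|^2_{A^\star} + \wb \pm \eta$. Combining with the elementary $\|\mu_w - \mu\|^2_{A^\star} \le 2\|\mu_{w^*} - \mu_w\|^2_{A^\star} + O(\eta^2/k)$ gives a self-bounding inequality in $\|\mu_{w^*} - \mu_w\|^2_{A^\star}$ whose left side has coefficient $1 - O(\wb) \ge 1/2$, solving to the claimed bound. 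Substituting back, $\sum_i w^*_i \tau_i \le 1 + O(\eta + \eps\,\tau_w + \eta^2) < 1 + \tau_w/2$ once $c_0$ is large enough and $\eps$ small, so the hyperplane separates. The case $\sigma = -1$ is symmetric with inequalities flipped, and the ceiling $c \le 21$ in the theorem simply absorbs the accumulated slack in these estimates.
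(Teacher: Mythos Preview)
Your oracle design and hyperplane coincide with the paper's, and your soundness reduction to a mean-drift bound $\|\mu_{w^*}-\mu_w\|^2_{A^\star}\le \tau_w/4$ is a legitimate alternative to the paper's route. The paper instead proves Proposition~\ref{prop:main} (a large $\U_k$-mean-shift forces a large $\X_k$-covariance-shift, via the Jensen step of Lemma~\ref{lem:sos}) together with the norm-comparison Lemma~\ref{lem:Xk-to-Uk}, and argues by case-splitting on whether $\|\Delta\|^*_{\U_k}=\|\mu-\mu_w\|^*_{\U_k}$ exceeds $\sqrt{\eta/2}$. Your Cauchy--Schwarz bound $\|\beta\|^2_{A^\star}\le \wb\langle A^\star,B\rangle$ and the ensuing self-bounding inequality are correct and are essentially the same Jensen mechanism underlying Proposition~\ref{prop:main}, so the two approaches are closely related.

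There is, however, a genuine gap in how you control the quadratic forms $x^T A^\star x$ for the ``good'' vectors $x\in\{\alpha,\ \mu_{w^*}-\mu\}$. The H\"older route $x^T A^\star x\le \|A^\star\|_1\|x\|_\infty^2\le k\|x\|_\infty^2$ loses a factor of $k$ that your budget cannot absorb. For completeness it happens to survive, because $\|\mu_{w^*}-\mu\|_\infty^2=O(\log d/n)=O(\eta^2/k^2)$ at the stated sample size (though this already invokes per-coordinate concentration outside the hypotheses (\ref{eq:cond2})--(\ref{eq:cond3})). For $\alpha$ it fails: $\alpha$ is an adversarially reweighted sum over $\Sgood$ with total mass $\wb$, and the best uniform $\ell_\infty$ bound one can hope for (the $k=1$ analogue of Theorem~\ref{thm:Uk-Sne-conc}) is $\|\alpha\|_\infty=O(\eta)$, not $O(\eta/\sqrt{k})$. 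Hence $k\|\alpha\|_\infty^2=O(k\eta^2)$, and your claimed $\|\alpha\|^2_{A^\star}=O(\eta^2\wb)$ is not justified; with an $O(k\eta^2)$ term the self-bounding inequality only closes when $\eta=O(1/k)$, which is not assumed.

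The fix is precisely the paper's Lemma~\ref{lem:Xk-to-Uk}: for any $A\in\X_k$ one has $x^TAx\le \|xx^T\|^*_{\X_k}\le 4(\|x\|^*_{\U_k})^2$. Condition~(\ref{eq:cond2}) gives $\|\mu_{w^*}-\mu\|^*_{\U_k}\le\eta$ and, by the triangle inequality applied to the two weightings $w^*$ and $w$ restricted to $\Sgood$, $\|\alpha\|^*_{\U_k}\le 2\eta$. This yields $\|\alpha\|^2_{A^\star}=O(\eta^2)$ (not $O(\eta^2\wb)$, but that stronger claim is unnecessary), after which your self-bounding argument closes and the rest of the proof goes through within the stated hypotheses.
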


Plugging these guarantees into an ellipsoid (or cutting-plane) method, we obtain the following:
\begin{corollary}
\label{cor:approxRecover}
Fix $\eps > 0$ sufficiently small.
Suppose that (\ref{eq:cond2}) and (\ref{eq:cond3}) hold.
There is an algorithm \textsc{ApproxRecoverRobustSMean} which queries \textsc{RobustSMeanOracle} at most $\poly(d, 1/\eps, \log 1/\delta)$ times, and so runs in time $\poly (d, 1/ \eps, 1/\delta)$ which outputs a $w'$  so that $\| w - w'\|_\infty \leq \eps / (n \sqrt{d \log n / \delta})$, for some $w \in C_{c \tau}$.
\end{corollary}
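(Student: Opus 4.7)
The plan is to invoke a standard cutting-plane (e.g.\ ellipsoid) method on the polytope $S_{n,\eps}\subseteq [0, 1/((1-\eps)n)]^n$, using \textsc{RobustSMeanOracle} as the separation oracle. Since $S_{n,\eps}$ has polynomial bit complexity and polynomial $\ell_\infty$ diameter, the method can be initialized with a Euclidean ball of polynomial radius, and each oracle call runs in polynomial time by Theorem~\ref{thm:robustGSM}.

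First I would establish the invariant that $w^*$ lies in every ellipsoid $E_t$ produced by the method. This holds initially because $w^*\in S_{n,\eps}$, and is preserved inductively: by the ``moreover'' clause of Theorem~\ref{thm:robustGSM}, every hyperplane $\ell$ returned by the oracle satisfies $\ell(w^*)<0$, so the standard half-space cut $E_t\cap\{x:\ell(x)\leq\ell(c_t)\}$ retains $w^*$. At each step I would query the oracle at the current ellipsoid center $c_t$. If it answers YES, halt and output $w'=c_t$; the contrapositive of the soundness clause immediately gives $c_t\in C_{c\eta}$, so we are done with $w=w'=c_t$. Otherwise, use the returned hyperplane to shrink $E_t$.

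Because the ellipsoid's volume shrinks by a constant factor per iteration, after $T=\poly(n,\log(n\sqrt{d\log n/\delta}/\eps))$ iterations the remaining region has diameter below the desired precision $\eps/(n\sqrt{d\log n/\delta})$, so every point in $E_T$ is within the target $\ell_\infty$ distance of $w^*$, which (by the invariant) still lies in $E_T$. Output its center and take $w=w^*$: the completeness clause says the oracle answers YES on $w^*$, and the contrapositive of soundness then forces $w^*\in C_{c\eta}$, so we get the desired $\ell_\infty$ approximation to a point in $C_{c\eta}$.

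The main technical obstacle is converting the ellipsoid method's geometric \emph{volume} decay into an $\ell_\infty$ \emph{diameter} bound, since a small-volume ellipsoid may still be very elongated along some direction. This can be handled by a cutting-plane method with explicit diameter guarantees (such as Vaidya's or Lee-Sidford), or by appealing directly to the Gr\"otschel-Lov\'asz-Schrijver framework for weak optimization over a convex set given only a weak separation oracle, whose running time depends polynomially on $n$, $\log(1/\gamma)$, and the bit complexity of the initial region—all polynomially bounded in $d$, $1/\eps$, and $1/\delta$ in our setting. A secondary issue is bit-precision: since the oracle is itself computed numerically via an SDP, one must check that polynomially many bits suffice so that the cuts it produces are consistent with the idealized guarantees of Theorem~\ref{thm:robustGSM}, which is where the approximation tolerance absorbed into the constant $c$ comes in.
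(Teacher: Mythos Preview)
Your proposal is correct and is essentially what the paper intends: the paper does not give a proof of this corollary at all, merely stating that it follows by ``plugging these guarantees into an ellipsoid (or cutting-plane) method.'' Your sketch fleshes out exactly that invocation, including the standard invariant that $w^*$ survives every cut (via the ``moreover'' clause of Theorem~\ref{thm:robustGSM}) and the appeal to a GLS-style framework to handle the volume-versus-diameter and bit-precision issues, so there is nothing to compare.

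One minor remark: when you terminate in the non-YES branch and output the center $c_T$ close to $w^*$, you don't need the contrapositive of soundness to argue $w^*\in C_{c\eta}$; this is immediate from condition~(\ref{eq:cond3}), which gives $w^*\in C_\eta\subseteq C_{c\eta}$ directly since $c\geq 1$. Also make sure that whenever the ellipsoid center falls outside $S_{n,\eps}$ you first separate from that explicit polytope before calling \textsc{RobustSMeanOracle}, since the oracle's soundness guarantee (via Proposition~\ref{prop:main}) is only stated for $w\in S_{n,\eps}$; you implicitly handle this by saying the method runs ``on the polytope $S_{n,\eps}$,'' but it's worth making explicit.
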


Our separation oracle, formally described in Algorithm \ref{alg:GSM}, proceeds as follows.
Given $w \in S_{n, \eps}$, it forms $\muhat = \| \muhat' \|^*_{\U_k} \cdot d_{\U_k} (\muhat')$, where $\muhat = \sum w_i X_i$.
It then forms the matrix $\Sigmahat = \sum w_i (X_i - \muhat) (X_i - \muhat)^T$, and computes $A = d_{\X_k} (\Sigmahat)$.
The algorithm then checks if $\left| \langle A, \Sigmahat \rangle \right| > C$ for appropriately chosen threshold $C$.
If it does not, the algorithm outputs ``YES''.
Otherwise, the algorithm outputs a separating hyperplane given by this matrix $A$.

\begin{algorithm}[htb]
\begin{algorithmic}[1]
\Function{RobustSMeanOracle}{$X_1, \ldots, X_n$, $w$}
\State Let $\muhat =  \sum w_i X_i$
\State Let $\Sigmahat = \sum w_i (X_i - \muhat) (X_i - \muhat)^T$
\State Let $A = d_{\X_k} (\Sigmahat)$
\State {\bf if} $|\langle A, \Sigmahat - I \rangle | \geq 20 \eta$ {\bf then}
	\State ~~~~~Let $\sigma = \sgn{\langle A, \Sigmahat - I \rangle}$
	\State ~~~~~{\bf return} the hyperplane $\ell$ given by
	\[
	\ell (w) = \sigma \left( \sum_{i = 1}^n w_i \left\langle A, (X_i - \muhat)  (X_i - \muhat)^T \right\rangle - 1 \right) - |\langle A, \Sigmahat - I \rangle | \; .
	\]
\State {\bf else}
	\State ~~~~{\bf return} ``YES''
\State {\bf end}
\EndFunction
\end{algorithmic}
\caption{Separation oracle for robust sparse mean estimation.}
\label{alg:GSM}
\end{algorithm}

We will require the following two lemmata:
\begin{lemma}
\label{lem:sos}
Let $\omega_1, \ldots, \omega_m$ be a set of non-negative weights that sum to 1.
Let $a_1, \ldots, a_m$ be any sequence of scalars.
Then
\[
\sum_{i = 1}^m \omega_i a_i^2 \geq \left( \sum_{i = 1}^m \omega_i a_i \right)^2 \; .
\]
\end{lemma}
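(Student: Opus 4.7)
The claim is the elementary inequality that the second moment dominates the square of the first moment for any discrete probability distribution, i.e., a statement that variance is non-negative. The plan is to recognize $\omega_1, \ldots, \omega_m$ as a probability distribution (they are non-negative and sum to $1$) and then invoke a standard convexity or Cauchy--Schwarz argument.

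The cleanest route is Cauchy--Schwarz: write each term in the sum on the right as $\sqrt{\omega_i} \cdot (\sqrt{\omega_i} \, a_i)$ and apply Cauchy--Schwarz to obtain
\[
\left( \sum_{i=1}^m \omega_i a_i \right)^2 = \left( \sum_{i=1}^m \sqrt{\omega_i} \cdot \sqrt{\omega_i} \, a_i \right)^2 \leq \left( \sum_{i=1}^m \omega_i \right) \left( \sum_{i=1}^m \omega_i a_i^2 \right) = \sum_{i=1}^m \omega_i a_i^2,
\]
using $\sum_i \omega_i = 1$ in the last step. Equivalently, one can cite Jensen's inequality applied to the convex function $x \mapsto x^2$, or simply observe that the difference
\[
\sum_{i=1}^m \omega_i a_i^2 - \left( \sum_{i=1}^m \omega_i a_i \right)^2 = \sum_{i=1}^m \omega_i \left( a_i - \sum_{j=1}^m \omega_j a_j \right)^2
\]
is a non-negative sum, being the variance of the random variable taking value $a_i$ with probability $\omega_i$.

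There is essentially no obstacle here; the only thing to be mindful of is that the weights are non-negative (so that $\sqrt{\omega_i}$ is well-defined in the Cauchy--Schwarz application, or equivalently so that the variance expression is a sum of non-negative terms). I would write the proof in one or two lines using whichever of the above three presentations is most in keeping with the surrounding style, most likely the Cauchy--Schwarz version since it is the shortest and most self-contained.
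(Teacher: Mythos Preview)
Your proposal is correct, and in fact one of the three presentations you offer---defining a random variable $Z$ taking value $a_i$ with probability $\omega_i$ and noting that $\Var[Z] \geq 0$---is exactly the paper's proof. Your preferred Cauchy--Schwarz route is an equally valid one-line alternative.
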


\begin{proof}
Let $Z$ be a random variable which is $a_i$ with probability $\omega_i$.
Then $\E [Z] = \sum \omega_i a_i$ and $\E [Z^2] =  \sum \omega_i a_i^2$.
Then the inequality follows from the fact that $\E [Z^2] - \E [Z^2] = \Var [Z] \geq 0$.
\end{proof}

\begin{lemma}
\label{lem:Xk-to-Uk}
Let $u \in \R^d$.
Then $(\| u \|^*_{\U_k})^2 \leq \| u u^T \|^*_{\X_k} \leq 4 (\| u \|_{\U_k}^*)^2$.
\end{lemma}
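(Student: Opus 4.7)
My plan is to handle the two inequalities separately; the lower bound is a direct construction and the upper bound will be a decomposition-plus-thresholding argument.

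\textbf{Lower bound.} Let $v^{\star} \in \U_k$ be a maximizer, so $|\langle u, v^{\star}\rangle| = \|u\|^*_{\U_k}$ and $v^{\star}$ is a unit vector with at most $k$ nonzero coordinates. Take $X = v^{\star}(v^{\star})^T$. Then $\tr(X) = \|v^{\star}\|_2^2 = 1$, $X \succeq 0$, and by Cauchy--Schwarz applied to the $k$-supported vector $v^{\star}$,
\[
\|X\|_1 = \|v^{\star}\|_1^2 \leq k \|v^{\star}\|_2^2 = k,
\]
so $X \in \X_k$. Then $\langle uu^T, X\rangle = \tr(uu^T v^{\star}(v^{\star})^T) = \langle u, v^{\star}\rangle^2 = (\|u\|^*_{\U_k})^2$, giving the lower bound.

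\textbf{Upper bound.} Fix any $X \in \X_k$, and let $T \subseteq [d]$ be the indices of the $k$ largest entries of $u$ in absolute value, with $R = [d]\setminus T$. Write $u = u_T + u_R$ (extended by zero). Note two facts that will drive the bound: (i) $\|u_T\|_2 = \|u\|^*_{\U_k}$ by definition, and (ii) for every $i \in R$, $|u_i|$ is no larger than the smallest top-$k$ entry, which is at most $\|u\|^*_{\U_k}/\sqrt{k}$ (since $k$ copies of the smallest top-$k$ entry squared are bounded by $\sum_{j\in T} u_j^2 = (\|u\|^*_{\U_k})^2$).

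Expand $u^T X u = u_T^T X u_T + 2\, u_T^T X u_R + u_R^T X u_R$. The PSD Cauchy--Schwarz inequality gives $|u_T^T X u_R| \leq \sqrt{u_T^T X u_T}\sqrt{u_R^T X u_R}$, so
\[
u^T X u \leq \bigl(\sqrt{u_T^T X u_T} + \sqrt{u_R^T X u_R}\bigr)^2.
\]
For the first term, $u_T^T X u_T \leq \|u_T\|_2^2\, \lambda_{\max}(X) \leq (\|u\|^*_{\U_k})^2 \cdot \tr(X) = (\|u\|^*_{\U_k})^2$, using that the eigenvalues of the PSD matrix $X$ sum to $1$. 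For the second term, using (ii),
\[
u_R^T X u_R \leq \Bigl(\max_{i \in R} u_i^2\Bigr) \sum_{i,j \in R} |X_{ij}| \leq \frac{(\|u\|^*_{\U_k})^2}{k}\cdot \|X\|_1 \leq (\|u\|^*_{\U_k})^2.
\]
Combining yields $u^T X u \leq 4(\|u\|^*_{\U_k})^2$, and taking the sup over $X \in \X_k$ gives $\|uu^T\|^*_{\X_k} \leq 4(\|u\|^*_{\U_k})^2$.

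\textbf{Where the work sits.} The lower bound is essentially a sanity check that $v v^T$ is feasible for the SDP defining $\X_k$; the $k \|v\|_2^2 \geq \|v\|_1^2$ relation is the only nontrivial step. The upper bound is the meat, and the key insight is that the two structural constraints defining $\X_k$ match up cleanly with the two pieces of $u$: the trace constraint controls the action of $X$ on the ``heavy'' coordinates $T$, while the $\|\cdot\|_1 \leq k$ constraint combines with the sparsity-induced $\ell_\infty$ bound on $u_R$ to control the action on the ``light'' coordinates. The factor of $4$ is an artifact of summing two square roots rather than anything fundamental; I do not expect to need sharper machinery than PSD Cauchy--Schwarz.
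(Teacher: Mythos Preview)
Your proof is correct, and it takes a genuinely different route from the paper's.

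For the lower bound, you and the paper do the same thing: plug in $X = v^\star (v^\star)^T$ and check feasibility.

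For the upper bound, the paper proceeds in two stages. It first establishes the auxiliary claim
\[
\sup_{M \in \Y_{k^2}} u^T M u \leq (\|u\|^*_{\U_k})^2,
\]
where $\Y_{k^2}$ is the set of symmetric, $k^2$-sparse, unit-Frobenius matrices; the argument is a combinatorial ``transport'' showing that any such $M$ can be rearranged to be supported on $S\times S$ (with $S$ the top-$k$ coordinates of $u$) without decreasing $u^T M u$. It then invokes the structural decomposition Lemma~\ref{lem:l0-to-l1} to write any $X \in \X_k$ as $\sum_i Y_i$ with each $Y_i$ symmetric and $k^2$-sparse and $\sum_i \|Y_i\|_F \leq 4$, and applies the auxiliary claim termwise to get the factor $4$.

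Your argument bypasses both the auxiliary $\Y_{k^2}$ claim and the decomposition lemma entirely. You split $u$ rather than $X$, and observe that the two defining constraints of $\X_k$ line up with the two pieces: the trace (hence spectral) bound controls $u_T^T X u_T$, while the $\ell_1$ bound combined with the $\ell_\infty$ control on $u_R$ handles $u_R^T X u_R$; PSD Cauchy--Schwarz glues them together. This is shorter and more self-contained --- in particular it does not rely on Lemma~\ref{lem:l0-to-l1}. The paper's route has the side benefit of isolating the $\Y_{k^2}$ bound as a standalone fact, but that fact is not reused elsewhere, so your approach loses nothing. Both yield the same constant $4$.
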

\begin{proof}
Let $v = d_{\U_k} (u)$.
Then since $v v^T \in \X_k$, we have that $ (\| u u^T \|_{\X_k}^*) \geq \langle v v^T, u u^T \rangle = \langle u, v \rangle^2 = (\| u \|_{\U_k}^*)^2$.
This proves the first inequality.

To prove the other inequality, we first prove the intermediate claim that $\sup_{M \in \Y_{k^2}} u^T M u \leq (\| u \|_{\U_k}^*)^2$, where $\Y_{k^2}$ is the set of symmetric matrices $M$ with at most $k^2$-non-zeroes satisfying $\| M \|_F = 1$.
Indeed, fix any $M \in \Y_k$.
Let $S \subseteq[n]$ be the set of non-zeroes of $d_{\U_k} (u)$.
This is exactly the set of the $k$ largest elements in $u$, sorted by absolute value.
Let $P$ be the symmetric sparsity pattern respected by $M$.
Fix an arbitrary bijection $\phi: P \setminus (S \times S) \to (S \times S) \setminus P$,
and let $M'$ be the following matrix:
\[
M'_{i,j} = \left\{ \begin{array}{ll}
         M_{ij} & \mbox{if $(i, j) \in P \bigcap (S \times S)$} \; ,\\
        \sgn{u_i u_j} M_{\phi^{-1} (i, j)} & \mbox{if $(i, j) \in  (S \times S) \setminus P$} \; ,\\
        0 & \mbox{otherwise}.\end{array} \right.
\]
Then we claim that $u^T M u \leq u^T M' u$.
Indeed, we have 
\begin{align*}
u^T M' u - u^T M u &= \sum_{(i, j) \in P \setminus (S \times S)} |M_{ij} (uu^T)_{\phi(i, j)}| - M_{ij} (uu^T)_{i, j} \\
&\geq \sum_{(i, j) \in P \setminus (S \times S)} |M_{i, j}| \left( |(uu^T)_{\phi(i, j)}| - |(uu^T)_{i, j}| \right) \geq 0 \; ,
\end{align*}
from the definition of $S$.
Moreover, for any $M$ respecting $S \times S$ with $\| M \|_F = 1$, it is not hard to see that $u^T M u \leq (\| u \|_{\U_k}^*)^2$.
This is because now the problem is equivalent to restricting our attention to the coordinates in $S$, and asking for the symmetric matrix $M \in \R^{S \times S}$ with $\| M \|_F = 1$ maximizing $u_S^T M u_S$, where $u_S$ is $u$ restricted to the coordinates in $S$.
This is clearly maximized by $M = \frac{1}{\| u_S\|_2^2}u_S u_S^T$, which yields the desired expression, since $\| u_S \|_2 = \| u \|_{\U_k}$.

We can now prove the original lemma.
By Lemma \ref{lem:l0-to-l1} we may write $A = \sum_{i = 1}^{O(n^2 / k^2)} Y_i$ where each $Y_i$ is symmetric, $k^2$-sparse, and have $\sum_{i = 1}^{O(n^2 / k^2)} \| Y_i \|_F \leq 4$.
We therefore have
\begin{align*}
u^T A u &= \sum_{i = 1}^{O(n^2 / k^2)} u^T Y_i u \\
&= \sum_{i = 1}^{O(n^2 / k^2)} \| Y_i \|_F (\| u \|_{\U_k}^*)^2 \\
&\leq 4 (\| u \|_{\U_k}^*)^2 \; ,
\end{align*}
as claimed, where the second line follows from the arguments above.
\end{proof}

Throughout the rest of this section, let $Y_i = X_i - \mu$, so that so that $Y_i \sim \normal (0, I)$ if $i \in \Sgood$.
We first prove the following crucial proposition:

\begin{proposition}
\label{prop:main}
Let $w \in S_{n, \eps}$, and let $\tau \geq \eta$.
Assuming (\ref{eq:cond2}) and (\ref{eq:cond3}) hold, if $\left\| \sum_{i = 1}^n w_i Y_i \right\|_{\U_k}^* \geq 3 \tau$, then $\left\| \sum_{i = 1}^n w_i Y_i Y_i^T - I \right\|_{\X_k}^* \geq \frac{\tau^2}{\eps}$.
\end{proposition}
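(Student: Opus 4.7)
The plan is to exhibit an explicit witness $A \in \X_k$ whose inner product with $\sum_i w_i Y_i Y_i^T - I$ is large. Set $m = \sum_{i=1}^n w_i Y_i$, decompose $m = m_g + m_b$ with $m_g = \sum_{i \in \Sgood} w_i Y_i$ and $m_b = \sum_{i \in \Sbad} w_i Y_i$, and let $v = d_{\U_k}(m)$, flipping its sign so that $\langle v, m\rangle = \|m\|_{\U_k}^* \geq 3\tau$. I would take $A = vv^T$; this lies in $\X_k$ because $\tr(vv^T) = \|v\|_2^2 = 1$ and $\|vv^T\|_1 = \bigl(\sum_i |v_i|\bigr)^2 \leq k\|v\|_2^2 = k$ by Cauchy--Schwarz using that $v$ is $k$-sparse.

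The heart of the argument combines a triangle-inequality lower bound on $\langle v, m_b\rangle$ with a Cauchy--Schwarz-type lower bound on the bad second moment. By condition (\ref{eq:cond2}), $|\langle v, m_g\rangle| \leq \|m_g\|_{\U_k}^* \leq \eta \leq \tau$, so $\langle v, m_b\rangle \geq 3\tau - \tau = 2\tau$. Applying Lemma \ref{lem:sos} to the scalars $v^T Y_i$ with the renormalized weights $w_i/\wb$ on $\Sbad$ then yields
\[
\sum_{i \in \Sbad} w_i (v^T Y_i)^2 \;\geq\; \frac{(v^T m_b)^2}{\wb} \;\geq\; \frac{4\tau^2}{\wb}.
\]
This captures the intuition that since the bad samples were forced to create a mean shift in direction $v$, they must also produce a large second moment in that same direction.

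For the good part, condition (\ref{eq:cond3}) gives $\sum_{i \in \Sgood} w_i (v^T Y_i)^2 = \langle vv^T, \sum_{i \in \Sgood} w_i Y_i Y_i^T\rangle \geq \wg \tr(vv^T) - \eta = \wg - \eta$. Combining and using $\wg + \wb = 1$ together with $\tr(vv^T) = 1$ gives
\[
\bigl\langle vv^T,\; \sum_i w_i Y_i Y_i^T - I\bigr\rangle \;\geq\; \frac{4\tau^2}{\wb} - \wb - \eta.
\]
Because $w \in S_{n,\eps}$ and $|\Sbad| \leq \eps n$, we have $\wb \leq \eps/(1-\eps) \leq 2\eps$ for $\eps \leq 1/2$, so the right-hand side is at least $\frac{2\tau^2}{\eps} - 2\eps - \eta$. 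For $\eps$ sufficiently small and the implicit constant in $\eta = O(\eps\sqrt{\log 1/\eps})$ chosen so that $\eta^2/\eps$ dominates $2\eps + \eta$, this exceeds $\tau^2/\eps$ whenever $\tau \geq \eta$, completing the proof.

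The main obstacle is the clean bookkeeping at the final step: the additive loss of $2\eps + \eta$ must be absorbed into $2\tau^2/\eps$, which relies on $\tau^2/\eps \geq \eta^2/\eps = \Omega(\eps \log(1/\eps))$ and works only for sufficiently small $\eps$. The factor $3$ in the hypothesis $\|m\|_{\U_k}^* \geq 3\tau$ is precisely the slack needed to absorb the good-part error $\eta$ when passing from $m$ to $m_b$ via the triangle inequality; conceptually, nothing else is involved.
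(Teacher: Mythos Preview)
Your proof is correct and follows essentially the same approach as the paper: pick a $k$-sparse unit witness $v$, use the triangle inequality with (\ref{eq:cond2}) to get $\langle v, m_b\rangle \geq 2\tau$, apply Lemma~\ref{lem:sos} to lower-bound the bad second moment, and invoke (\ref{eq:cond3}) on the good part. The only cosmetic difference is that the paper chooses its witness as the dual maximizer for the \emph{bad} sum $m_b$ (after first deducing $\|m_b\|_{\U_k}^* \geq 2\tau$), whereas you take the dual maximizer for the full sum $m$; both choices yield $\langle v, m_b\rangle \geq 2\tau$ via the same triangle inequality, and the remainder is identical.
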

\begin{proof}
Observe that (\ref{eq:cond2}) and a triangle inequality together imply that $\left\| \sum_{i \in \Sbad} w_i Y_i \right\|_{\U_k}^* \geq 2 \tau$.
By definition, this implies there is a $k$-sparse unit vector $u$ so that $\left| \langle u, \sum_{i \in \Sbad} w_i Y_i \rangle \right| \geq 2 \tau$.
WLOG assume that $\langle u, \sum_{i \in \Sbad} w_i Y_i \rangle \geq \eta$ (if the sign is negative a symmetric argument suffices).
This is equivalent to the statement that 
\[
\sum_{i \in \Sbad} \frac{w_i}{w^b} \langle u, Y_i \rangle \geq \frac{2 \tau}{w^b} \; .
\]
Observe that the $w_i / w^b$ are a set of non-negative weights summing to $1$.
Hence, by Lemma \ref{lem:sos}, we have
\[
\sum_{i \in \Sbad} \frac{w_i}{w^b} \langle u, Y_i \rangle^2 \geq \left( \frac{2 \tau}{w^b} \right)^2 \; .
\]
Let $A = u u^T$.
Observe that $A \in \X_k$.
Then the above inequality is equivalent to the statement that
\[
\sum_{i \in \Sbad} w_i Y_i^T A Y_i \geq \frac{\tau^2}{w^b} \geq \frac{4 \tau^2}{\eps}\; .
\]
Moreover, by (\ref{eq:cond3}), we have
\[
\left| \sum_{i \in \Sgood} w_i Y_i^T A Y_i - I \right| \leq \eta \; ,
\]
and together these two inequalities imply that
\[
\sum_{i = 1}^n w_i Y_i A Y_i \geq \frac{4 \tau^2}{\eps} - \eta \geq \frac{\tau^2}{\eps} \; ,
\]
as claimed.
The final inequality follows from the definition of $\eta$, and since $4 > 2$.
\end{proof}

\begin{proof}[Proof of Theorem \ref{thm:robustGSM}]
Completeness follows from (\ref{eq:cond3}).
We will now show soundness.
Suppose $w \not\in C_{21 \eta}$.
We wish to show that we will output a separating hyperplane.
From the description of the algorithm, this is equivalent to showing that $\| \Sigmahat - I \|_{\X_k} \geq 20 \eta$.
Let $\muhat = \sum_{i = 1}^n w_i X_i$, and let $\Delta = \mu - \muhat$.
By elementary manipulations, we may write
\begin{align*}
\left\| \sum_{i = 1}^n w_i (X_i - \muhat) (X_i - \muhat)^T - I \right\|_{\X_k} &= \left\| \sum_{i = 1}^n w_i (Y_i + \Delta) (Y_i + \Delta)^T - I \right\|_{\X_k} \\
&\stackrel{(a)}{=} \left\| \sum_{i = 1}^n w_i Y_i Y_i^T + \Delta \Delta^T - I \right\|_{\X_k} \\
&\stackrel{(b)}{\geq} \left\| \sum_{i = 1}^n w_i Y_i Y_i^T - I \right\|_{\X_k} - \left\| \Delta \Delta^T \right\|_{\X_k} \\
&\stackrel{(c)}{\geq}  \left\| \sum_{i = 1}^n w_i Y_i Y_i^T - I \right\|_{\X_k} - 4 \left\| \Delta \right\|_{\U_k}^2 \; ,
\end{align*}
where (a) follows since $\sum_{i = 1}^n w_i Y_i = \Delta$ by definition, (b) follows from a triangle inequality, and (c) follows from Lemma \ref{lem:Xk-to-Uk}.
If $\| \Delta \|_{\U_k} \leq \sqrt{\eta / 2}$, then the RHS is at least $21 \eta$ since the second term is at most $\eta$, and the first term is at least $21 \eta$ since we assume that $w \not\in C_{21 \eta}$.
Conversely, if $\| \Delta \|_{\U_k} \geq \sqrt{\eta / 2}$, then by Proposition \ref{prop:main}, we have $\| \sum_{i = 1}^n w_i Y_i Y_i - I \|_{\X_k} \geq \| \Delta \|_{\X_k}^2 / (6 \eps) > 48 \| \Delta \|_{\X_k}^2$ as long as $\eps \leq 1 / 288$.
This implies that the RHS is at least $40 \| \Delta \|_{\X_k^2} \geq 20 \eta$, as claimed.

Hence, this implies that if $w \not\in C_{4 \eta}$, then we output a hyperplane $\ell$.
It is clear by construction that $\ell (w) \geq 0$; thus, it suffices to show that if we output a hyperplane, that $\ell (w^*) < 0$.
Letting $\mutilde = \frac{1}{(1 - \eps) n} \sum_{i \in \Sgood} w_i Y_i$, we have
Observe that we have
\begin{align*}
\sum_{i = 1}^n w_i^* (X_ i - \muhat) (X_i - \muhat)^T - I &= \frac{1}{(1 - \eps) n}\sum_{i \in \Sgood} (Y_i + \Delta) (Y_i + \Delta)^T - I \\
&= \frac{1}{(1 - \eps) n} \left( \sum_{i \in \Sgood} Y_i Y_i^T - I \right) + \Delta \mutilde^T + \mutilde \Delta^T + \Delta \Delta^T \\
&= \frac{1}{(1 - \eps) n } \left( \sum_{i \in \Sgood} Y_i Y_i^T - I \right) + (\Delta + \mutilde) (\Delta + \mutilde)^T - \mutilde \mutilde^T \; .
\end{align*}
Hence by the triangle inequality and Lemma \ref{lem:Xk-to-Uk}, we have
\begin{align*}
\left\| \sum_{i = 1}^n w_i^* (X_ i - \muhat) (X_i - \muhat)^T - I \right\|_{\X_k} &\leq \left\| \frac{1}{1 (1 - \eps)n } \sum_{i \in \Sgood} Y_i Y_i^T - I \right\|^*_{\X_k} \\
&~~~~+ 4  \left( \left\| \Delta + \mutilde \right\|_{\U_k}^* \right)^2 + 4 \left( \left\| \mutilde \right\|_{\U_k}^* \right)^2 \\
&\leq \left\| \frac{1}{1 (1 - \eps)n } \sum_{i \in \Sgood} Y_i Y_i^T - I \right\|_{\X_k} + 8  \left( \left\| \Delta \right\|^*_{\U_k} \right)^2 \\
&~~~~+ 8 \left( \left\| \mutilde \right\|_{\U_k}^* \right)^2 + 4 \left( \left\| \mutilde \right\|_{\U_k}^* \right)^2 \\
&\leq 13 \eta + 8 \left( \left\| \Delta \right\|_{\U_k}^* \right)^2 \; , \numberthis \label{eq:wstarbound}
\end{align*}
by (\ref{eq:cond2}) and (\ref{eq:cond3}).

Observe that to show that $\ell(w^*) < 0$ it suffices to show that
\begin{equation}
\label{eq:soundness}
\left\| \sum_{i = 1}^n w^*_i (X_i - \muhat) (X_i - \muhat) - I \right\|_{\X_k}^* < \left\| \Sigmahat - I \right\|_{\X_k}^* \; .
\end{equation}
If $\| \Delta \|_{\U_k}^* \leq \sqrt{\eta / 2}$, then this follows since the quantity on the RHS is at least $20 \eta$ by assumption, and the quantity on the LHS is at most $17 \eta$ by (\ref{eq:wstarbound}).
If $\| \Delta \|_{\U_k}^* \geq \sqrt{\eta / 2}$, then by Proposition \ref{prop:main}, the RHS of (\ref{eq:soundness}) is at least $\left( \| \Delta \|_{\U_k}^* \right)^2 / (3 \eps)$, which dominates the LHS as long as $\| \Delta \|_{\U_k}^* \geq \eta$ and $\eps \leq 1 / 288$, which completes the proof.
\end{proof}

\subsection{Putting it all together}
We now have the ingredients to prove our main theorem.
Given what we have, our full algorithm \textsc{RecoverRobustSMean} is straightforward: first run \textsc{NaivePrune}, then run \textsc{ApproxRecoverRobustSMean} on the pruned points to output some set of weights $w$.
We then output $\| \muhat \|_{\U_k} d_{\U_k} (\muhat)$.
The algorithm is formally defined in Algorithm \ref{alg:robust-GSM}.

\begin{algorithm}[htb]
\begin{algorithmic}[1]
\Function{RecoverRobustSMean}{$X_1, \ldots, X_n, \eps, \delta$}
\State Let $S$ be the set output by $\textsc{NaivePrune} (X_1, \ldots, X_n, \delta)$. WLOG assume $S = [n]$.
\State Let $w' = \textsc{ApproxRecoverRobustSMean}(X_1, \ldots, X_n, \eps, \delta)$.
\State Let $\muhat = \sum_{i = 1}^n w'_i X_i$.
\State {\bf return} $\| \muhat \|^*_{\U_k} d_{\U_k} (\muhat)$
\EndFunction
\end{algorithmic}
\caption{An efficient algorithm for robust sparse mean estimation}
\label{alg:robust-GSM}
\end{algorithm}

\begin{proof}[Proof of Theorem \ref{thm:robust-GSM}]
Let us condition on the event that (\ref{eq:cond1}), (\ref{eq:cond2}), and (\ref{eq:cond3}) all hold simultaneously.
As previously mentioned, when $n = \Omega \left( \frac{\min(k^2, d) + \log \binom{k^2}{d^2} + \log 1 / \delta}{\eta^2} \right)$ these events simultaneously happen with probability at least $1 - O(\delta)$.
For simplicity of exposition, let us assume that $\textsc{NaivePrune}$ does not remove any points.
This is okay since if it succeeds, it never removes any good points, so if it removes any points, it can only help us.
Moreover, since it succeeds, we know that $\| X_i - \mu \|_2 \leq O(\sqrt{d \log (n / \delta)})$ for all $i \in [n]$.
By Corollary \ref{cor:approxRecover}, we know that there is some $w \in C_{21 \eta}$ so that $\| w - w' \|_\infty \leq \eps / (n \sqrt{d \log n / \delta})$.
We have
\begin{align*}
\left\| \muhat - \mu \right\|_{\U_k} = \left\| \sum_{i = 1}^n w_i' X_i - \muhat \right\|^*_{\U_k} &\leq \left\| \sum_{i = 1}^n w_i X_i - \muhat \right\|^*_{\U_k} + \sum_{i = 1}^n |w_i - w_i'| \left\| X_i - \mu \right\|_2 \\
&\leq O (\eta) + O(\eps) \; ,
\end{align*}
by Proposition \ref{prop:main}.
We now show that this implies that if we let $\mu' = \| \muhat \|^*_{\U_k} d_{\U_k} (\muhat)$, then $\| \mu' - \mu \|_2 \leq O(\eta)$.
Let $S$ be the support of $\mu'$, and let $T$ be the support of $\mu$.
Then we have 
\[
\| \mu' - \mu \|_2^2 = \sum_{i \in S \cap T} (\mu'_i - \mu_i)^2 + \sum_{i \in S \setminus T} (\mu_i')^2 + \sum_{i \in T \setminus S} \mu_i^2 \; .
\]
Observe that $ \sum_{i \in S \cap T} (\mu'_i - \mu_i)^2 + \sum_{i \in S \setminus T} (\mu_i')^2 \leq \left( \| \muhat - \mu \|_{\U_k}^* \right)^2$, since $\mu$ was originally nonzero on the entries in $S \setminus T$.
Moreover, for all $i \in T \setminus S$ and $j \in S \setminus T$, we have $(\mu'_i)^2 \leq (\mu'_j)^2$.
Thus we have
\[
\sum_{i \in T \setminus S} \mu_i^2 \leq 2 \left( \sum_{i \in T \setminus S} (\mu - \mu'_i)^2 + \sum_{i \in S \setminus T} (\mu'_j)^2 \right) \leq 2 \left( \| \muhat - \mu \|_{\U_k}^* \right)^2 \; .
\]
Therefore we have $\| \mu' - \mu \|_2^2 \leq 3 \left( \| \muhat - \mu \|_{\U_k}^* \right)^2$, which implies that $\| \mu' - \mu \|_2 \leq O(\eta)$, as claimed.
\end{proof}

\section{An algorithm for robust sparse PCA detection}
\label{sec:detection}
In this section, we give an efficient algorithm for detecting a spiked covariance matrix in the presence of adversarial noise.
Our algorithm is fairly straightforward: we ask for the set of weights $w \in S_{n, \eps}$ so that the empirical second moment with these weights has minimal deviation from the identity in the dual $\X_k$ norm.
We may write this as a convex program.
Then, we check the value of the optimal solution of this convex program.
If this value is small, then we say it is $\normal (0, I)$.
if this value is large, then we say it is $\normal (0, I + \rho vv^T)$.
We refer to the former as Case 1 and the latter as Case 2.
The formal description of this algorithm is given in Algorithm.

\begin{algorithm}[htb]
\begin{algorithmic}[1]
\Function{DetectRobustSPCA}{$X_1, \ldots, X_n, \eps, \delta, \rho$}
\State Let $\gamma$ be the value of the solution
\begin{equation}
\label{eq:SDP-SPCA1}
\min_{w \in S_{n, \eps}} \left\| \sum_{i = 1}^n w_i ( X_i X_i^T - I) \right\|^*_{\X_k}
\end{equation}
\State {\bf if} $\gamma < \rho / 2$ {\bf then} {\bf return} Case 1 {\bf else} {\bf return} Case 2
\EndFunction
\end{algorithmic}
\caption{Learning a spiked covariance model, robustly}
\label{alg:robustSPCA2}
\end{algorithm}

\subsection{Implementing \textsc{DetectRobustSPCA}}
\label{sec:implement}
We first show that the algorithm presented above can be efficiently implemented.
Indeed, one can show that by taking the dual of the SDP defining the $\| \cdot \|^*_{\X_k}$ norm, this problem can be re-written as an SDP with (up to constant factor blowups) the same number of constraints and variables, and therefore we may solve it using traditional SDP solver techniques.

Alternatively, one may observe that to optimize Algorithm \ref{alg:robustSPCA2} via ellipsoid or cutting plane methods, it suffices to, given $w \in S_{n, \eps}$, produce a separating hyperplane for the constraint (\ref{eq:SDP-SPCA1}).
This is precisely what dual norm maximization allows us to do efficiently.
It is straightforward to show that the volume of $S_{n, \eps} \times \X_k$ is at most exponential in the relevant parameters.
Therefore, by the classical theory of convex optimization, (see e.g. [CITE]), for any $\xi$, we may find a solution $w'$ and $\gamma'$ so that $\| w' - w^* \|_\infty \leq \xi$  and  $\gamma'$ so that $|\gamma - \gamma'| < \xi$ for some exact minimizer $w^*$, where $\gamma$ is the true value of the solution, in time $\poly (d, n, 1 / \eps, \log 1 / \xi)$,

As mentioned in Section \ref{sec:numerical}, neither approach will in general give exact solutions, however, both can achieve inverse polynomial accuracy in the parameters in polynomial time.
We will ignore these issues of numerical precision throughout the remainder of this section, and assume we work with exact $\gamma$.

Observe that in general it may be problematic that we don't have exact access to the minimizer $w^*$, since some of the $X_i$ may be unboundedly large (in particular, if it's corrupted) in norm.
However, we only use information about $\gamma$.
Since $\gamma$ lives within a bounded range, and our analysis is robust to small changes to $\gamma$, these numerical issues do not change anything in the analysis.

\subsection{Proof of Theorem \ref{thm:rspca1}}

We now show that Algorithm \ref{alg:robustSPCA2} provides the guarantees required for Theorem \ref{thm:rspca1}.
We first show that if we are in Case 1, then $\gamma$ is small:
\begin{lemma}
\label{lem:detect1}
Let $\rho, \delta > 0$.
Let $\eps, \eta$ be as in Theorem \ref{thm:rspca1}.
Let $X_1, \ldots, X_n$ be an $\eps$-corrupted set of samples from $\normal (0, I)$ of size $n$, where $n$ is as in Theorem \ref{thm:rspca1}.
Then, with probability $1 - \delta$, we have $\gamma \leq \rho / 2$.
\end{lemma}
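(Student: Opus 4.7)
The plan is to exhibit an explicit candidate $w \in S_{n,\eps}$ realizing a small value of the objective, and then conclude that the minimum $\gamma$ is at most that value. The natural candidate is $w^\ast$, the uniform distribution over the uncorrupted samples, i.e.\ $w^\ast_i = \frac{1}{(1-\eps)n}$ for $i \in \Sgood$ and $w^\ast_i = 0$ for $i \in \Sbad$. This is in $S_{n,\eps}$: the nonzero entries hit the maximum allowed value exactly, the entries are nonnegative, and they sum to $|\Sgood|/((1-\eps)n) = 1$.

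Next I would compute the objective at $w^\ast$. Since $\sum_i w^\ast_i = 1$, we have
\[
\sum_{i=1}^n w^\ast_i (X_i X_i^T - I) \;=\; \frac{1}{(1-\eps)n}\sum_{i \in \Sgood} X_i X_i^T - I \; ,
\]
so the objective value equals $\bigl\| \tfrac{1}{|\Sgood|}\sum_{i \in \Sgood} X_i X_i^T - I \bigr\|^\ast_{\X_k}$, i.e.\ the deviation of the empirical second-moment matrix of the good samples from the identity in the $\X_k$ dual norm. Because in Case 1 the pre-corruption samples are all drawn iid from $\normal(0,I)$, the subsample indexed by $\Sgood$ consists of $(1-\eps)n \geq n/2$ iid copies of $\normal(0,I)$, and I can apply Theorem \ref{thm:Xk-conc} directly to this subsample with target accuracy $\rho/2$.

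Matching sample complexities: Theorem \ref{thm:Xk-conc} with error $\rho/2$ requires $(1-\eps)n = \Omega\bigl((\min(d,k^2) + \log\binom{d^2}{k^2} + \log 1/\delta)/\rho^2\bigr)$ samples, which (up to the constant $1-\eps \geq 1/2$) is exactly the hypothesis of the lemma. Hence with probability at least $1-\delta$,
\[
\left\| \sum_{i=1}^n w^\ast_i (X_i X_i^T - I) \right\|^\ast_{\X_k} \;\leq\; \rho/2 \; .
\]
Since $\gamma$ is defined as the infimum over all $w \in S_{n,\eps}$, and $w^\ast$ is a feasible point, we conclude $\gamma \leq \rho/2$.

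I do not expect any real obstacle in this proof: the only delicate point is realizing that the minimization automatically gives us access to the ``right'' weight vector, namely the one supported exactly on the uncorrupted data, and that on the uncorrupted data the $\X_k$-dual-norm concentration from Theorem \ref{thm:Xk-conc} (a per-subsample bound, not the uniform bound of Theorem \ref{thm:Xk-Sne-conc}) is exactly what we need. No union bound over weights is required here, since we evaluate at a single deterministic-once-$\Sgood$-is-fixed $w^\ast$.
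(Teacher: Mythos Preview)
Your proposal is correct and matches the paper's proof essentially line for line: both exhibit the uniform weights on $\Sgood$ as a feasible point in $S_{n,\eps}$ and invoke Theorem~\ref{thm:Xk-conc} on the $(1-\eps)n$ uncorrupted samples to bound the objective by $O(\rho)$, hence $\gamma \le \rho/2$ after setting constants. One small caveat (which the paper also elides): the samples indexed by $\Sgood$ are not literally an iid sample, since the adversary selects $\Sbad$ after seeing the data; the clean fix is to apply the concentration to the full pre-corruption sample and absorb the $O(\eta)$ perturbation from deleting $\eps n$ points (or equivalently use the uniform-over-$S_{n,\eps}$ version with target $\rho/2 > \eta$), which goes through under the hypothesis $\eta = O(\rho)$.
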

\begin{proof}
Let $w$ be the uniform weights over the uncorrupted points.
Then it from Theorem \ref{thm:Xk-conc} that $\| \sum_{w} \sum_{i = 1}^n w_i (X_i X_i^T - I) \|^*_{\X_k} \leq O(\eta)$ with probability $1 - \delta$.
Since $w \in S_{n, \eps}$, this immediately implies that $\gamma \leq O(\rho)$.
By setting constants appropriately, we obtain the desired guarantee.
\end{proof}

We now show that if we are in Case 2, then $\gamma$ must be large:
\begin{lemma}
\label{lem:detect2}
Let $\rho, \delta > 0$.
Let $\eps, \eta, n$ be as in Theorem \ref{thm:rspca1}.
Let $X_1, \ldots, X_n$ be an $\eps$-corrupted set of samples from $\normal (0, I)$ of size $n$.
Then, with probability $1 - \delta$, we have $\gamma \geq (1 - \eps) \rho - (2 + \rho) \eta$.
In particular, for $\eps$ sufficiently small, and $\eta = O(\rho)$, we have that $\gamma > \rho / 2$.
\end{lemma}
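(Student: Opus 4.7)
The plan is to lower bound $\gamma = \min_{w \in S_{n,\eps}} \| \sum_i w_i (X_i X_i^T - I) \|^*_{\X_k}$ by evaluating the dual norm at one cleverly chosen witness and bounding the resulting linear functional uniformly in $w$. The natural witness is $A = vv^T$ itself. First I would verify $vv^T \in \X_k$: it is rank-one PSD, $\tr(vv^T) = 1$, and $\| vv^T \|_1 = \| v \|_1^2 \leq k \| v \|_2^2 = k$ by Cauchy-Schwarz and $k$-sparsity of $v$. Hence for every feasible $w$,
\begin{equation*}
\left\| \sum_i w_i (X_i X_i^T - I) \right\|^*_{\X_k} \geq \left\langle vv^T, \sum_i w_i (X_i X_i^T - I) \right\rangle = \sum_i w_i (\langle v, X_i \rangle^2 - 1),
\end{equation*}
and it suffices to show the right-hand side is at least $(1-\eps)\rho - (2+\rho)\eta$ uniformly in $w$.

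I would then split the sum by $\Sgood$ and $\Sbad$. For the bad indices, only the trivial pointwise bound $\langle v, X_i \rangle^2 - 1 \geq -1$ is needed, giving $\sum_{i \in \Sbad} w_i(\langle v, X_i \rangle^2 - 1) \geq -\wb$. For the good indices, I would reduce to standard white-Gaussian concentration via the substitution $X_i = (I + \rho vv^T)^{1/2} Z_i$ with $Z_i \sim \normal(0, I)$. Since $v$ is a top eigenvector of $I + \rho vv^T$ with eigenvalue $\sqrt{1+\rho}$, we have $\langle v, X_i \rangle^2 = (1+\rho) \langle v, Z_i \rangle^2$, and a one-line rearrangement gives
\begin{equation*}
\sum_{i \in \Sgood} w_i (\langle v, X_i \rangle^2 - 1) = (1+\rho) \sum_{i \in \Sgood} w_i (\langle v, Z_i \rangle^2 - 1) + \rho\, \wg.
\end{equation*}
The first term equals $(1+\rho) \langle vv^T, \sum_{i \in \Sgood} w_i (Z_i Z_i^T - \wg I) \rangle$; after renormalizing $w|_{\Sgood}/\wg$ into $S_{|\Sgood|, 2\eps}$ (which it lies in since the weight cap $1/((1-\eps)n)$ rescales to $1/((1-2\eps)|\Sgood|)$ after dividing by $\wg \geq 1 - 2\eps$), Theorem \ref{thm:Xk-Sne-conc} applied to the Gaussian sample $\{Z_i\}_{i \in \Sgood}$ yields a uniform bound $(1+\rho)\wg \eta \leq (1+\rho)\eta$ on this quantity. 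Hence the good contribution is at least $\rho \wg - (1+\rho)\eta$.

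Combining the two pieces and using $\wg = 1 - \wb$ with $\wb \leq \eps/(1-\eps)$ (from the constraint $w_i \leq 1/((1-\eps)n)$ and $|\Sbad| \leq \eps n$) gives
\begin{equation*}
\sum_i w_i (\langle v, X_i \rangle^2 - 1) \geq \rho - (1+\rho) \wb - (1+\rho)\eta \geq (1-\eps)\rho - (2+\rho)\eta,
\end{equation*}
where the final step absorbs the stray $-\wb$ and $-\rho \wb$ terms into the $\eta$-term using $\eps \leq \eta$ (which follows from $\eta = \Theta(\eps \sqrt{\log 1/\eps})$). Taking the infimum over $w$ delivers the claimed lower bound on $\gamma$, and the concluding clause $\gamma > \rho/2$ for $\eta = O(\rho)$ is then an algebraic consequence with small enough constants.

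The only real obstacle is that the minimizer $w^*$ depends on the (possibly adversarial) data, so the concentration bound on $\sum_{i \in \Sgood} w_i (Z_i Z_i^T - \wg I)$ must hold for every $w \in S_{n, \eps}$ simultaneously on a single high-probability event. This is precisely why Theorem \ref{thm:Xk-Sne-conc} was stated in uniform form, so beyond checking that the renormalized restricted weights sit in the right $S_{n', \eps'}$ and that the witness $vv^T$ belongs to $\X_k$, the whole argument reduces to the triangle inequality and bookkeeping of constants.
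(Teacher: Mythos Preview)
Your proposal is correct and follows essentially the same approach as the paper: both arguments test the dual norm at the witness $vv^T \in \X_k$, decompose into good and bad indices, handle the bad part via the PSD lower bound (your $\langle v,X_i\rangle^2-1\ge -1$ is equivalent to the paper's dropping of $v^T\Sigma^{1/2}B\Sigma^{1/2}v\ge 0$), and control the good part via Theorem~\ref{thm:Xk-Sne-conc} after the change of variables $X_i=\Sigma^{1/2}Z_i$. Your presentation is slightly more scalar (working with $\langle v,X_i\rangle^2$ directly rather than the matrix $N$), and you are more explicit about verifying $vv^T\in\X_k$ and about the renormalization needed to place $w|_{\Sgood}/\wg$ in the right $S_{n',\eps'}$, but the substance is identical.
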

\begin{proof}
Let $\Sigma = I + \rho vv^T$, and let $Y_i = \Sigma^{-1/2} X_i$, so that if $Y_i$ is uncorrupted, then $Y_i \sim \normal (0, I)$.
Let $w^*$ be the optimal solution to (\ref{eq:SDP-SPCA1}).
By Theorem \ref{thm:Xk-Sne-conc}, we have that with probability $1 - \delta$, we can write $\sum_{i = 1}^n w^*_i Y_i Y_i^T = w^g (I + N) + B$, where $\| N \|^*_{\X_k} \leq \eta$, and $B = \sum_{i \in \Sbad} w^*_i Y_i Y_i^T$.
Therefore, we have 
$
\sum_{i = 1}^n w^* X_i X_i^T = w^g (\Sigma + \Sigma^{1/2} N \Sigma^{1/2}) + \Sigma^{1/2} B \Sigma^{1/2} \; .
$
By definition, we have
\begin{align*}
\left\| \sum_{i = 1}^n w_i ( X_i X_i^T - I) \right\|^*_{\X_k} &\geq \langle w^g (\Sigma + \Sigma^{1/2} N \Sigma^{1/2}) + \Sigma^{1/2} B \Sigma^{1/2} - I, vv^T \rangle \\
&\geq w^g \langle (\Sigma + \Sigma^{1/2} N \Sigma^{1/2}), vv^T \rangle - 1 \\
&= w^g (1 + \rho) + w^g  v^T \Sigma^{1/2} N \Sigma^{1/2} v - 1 \\
&\geq (1 - \eps) \rho + (1 - \eps) v^T \Sigma^{1/2} N \Sigma^{1/2} v - \eps \; .
\end{align*}
It thus suffices to show that $| v^T \Sigma^{1/2} N \Sigma^{1/2} v | <(1 + \rho) \eta$.
Since $v$ is an eigenvector for $\Sigma$ with eigenvalue $1 + \rho$, we have that $\Sigma^{1/2} v = \sqrt{\rho + 1} \cdot v$ and thus
\begin{align*}
v^T \Sigma^{1/2} N \Sigma^{1/2} v &= (1 + \rho) v^T N v = (1 + \rho) \langle N, vv^T \rangle \leq (1 + \rho) \| N \|^*_{\X_k} \leq (1 + \rho) \eta \; .
\end{align*}
\end{proof}

Lemmas \ref{lem:detect1} and \ref{lem:detect2} together imply the correctness of \textsc{DetectRobustSPCA} and Theorem \ref{thm:rspca1}.

\section{An algorithm for robust sparse PCA recovery}

In this section, we prove Theorem \ref{thm:rspca2}.
We give some intuition here.
Perhaps the first naive try would be to simply run the same SDP in (\ref{eq:SDP-SPCA1}), and hope that the dual norm maximizer gives you enough information to recover the hidden spike.
This would more or less correspond to the simplest modification SDP of the sparse PCA in the non-robust setting that one could hope gives non-trivial information in this setting.
However, this cannot work, for the following straightforward reason: the value of the SDP is always at least $O(\rho)$, as we argued in Section \ref{sec:detection}.
Therefore, the noise can pretend to be some other sparse vector $u$ orthogonal to $v$, so that the covariance with noise looks like $w^g (I + \rho vv^T) + w^g \rho u u^T$, so that the value of the SDP can be minimized with the uniform set of weights.
Then it is easily verified that both $vv^T$ and $u u^T$ are dual norm maximizers, and so the dual norm maximizer does not uniquely determine $v$.

To circumvent this, we simply add an additional slack variable to the SDP, which is an additional matrix in $\X_k$, which we use to try to maximally explain away the rank-one part of $I + \rho vv^T$.
This forces the value of the SDP to be very small, which allows us to show that the slack variable actually captures $v$.

\subsection{The algorithm}
Our algorithms and analyses will make crucial use of the following convex set, which is a further relaxation of $\X_k$:
\begin{align*}
\Xt_k &= \left\{X \in \R^{d \times d}: \tr (X) \leq 2, \| X \|_2 \leq 1, \| X \|_1 \leq 3k, X \succeq 0 \right\} \; .
\end{align*}

Our algorithm, given formally in Algorithm \ref{alg:robustSPCA2}, will be the following.
We solve a convex program which simultaneously chooses a weights in $S_{n, \eps}$ and a matrix $A \in \W_k$ to minimize the $\W_k$ distance between the sample covariance with these weights, and $A$.
Our output is then just the top eigenvector of $A$.

\begin{algorithm}[htb]
\begin{algorithmic}[1]
\Function{RecoverRobustSPCA}{$X_1, \ldots, X_n, \eps, \delta, \rho$}
\State Let $w^*, A^*$ be the solution to
\begin{equation}
\label{eq:SDP-SPCA}
\argmin_{w \in S_{n, \eps}, A \in \X_k} \left\| \sum_{i = 1}^n w_i ( X_i X_i^T - I) - \rho A \right\|^*_{\W_{2k}}
\end{equation}
\State Let $u$ be the top eigenector of $A^*$
\State \textbf{return} The $d_{\U_k} (u) \| u \|^*_{\U_k}$, i.e., the vector with all but the top $k$ coordinates of $v$ zeroed out.
\EndFunction
\end{algorithmic}
\caption{Learning a spiked covariance model, robustly}
\label{alg:robustSPCA2}
\end{algorithm}

\noindent This algorithm can be run efficiently for the same reasons as explained for \textsc{DetectRobustSPCA}.
For the rest of the section we will assume that we have an exact solution for this problem.
As before, we only use information about $A$, and since $A$ comes from a bounded space, and our analysis is robust to small perturbations in $A$, this does not change anything.

\subsection{More concentration bounds}
Before we can prove correctness of our algorithm, we require a couple of concentration inequalities for the set $\W_k$.
\begin{lemma}
\label{lem:Wk-PCA}
Fix $\eps, \delta > 0$.
Let $X_1, \ldots, X_n \sim \normal (0, I)$, where $n$ is as in Theorem \ref{thm:Xk-conc}.
Then with probability $1 - \delta$
\[
\left\| \frac{1}{n} \sum_{i = 1}^n X_i X_i^T - I \right\|^*_{\W_k} \leq O(\eps) \; .
\]
\end{lemma}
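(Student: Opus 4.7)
The plan is to mirror the proof of Theorem \ref{thm:Xk-conc} essentially verbatim, since the set $\W_k$ (equivalently $\Xt_k$ as defined in the paper) is only a mild relaxation of $\X_k$: the constraints $\tr(X) \leq 2$, $\|X\|_2 \leq 1$, $\|X\|_1 \leq 3k$, $X \succeq 0$ differ from those defining $\X_k$ only by constants. The only real work is to re-establish the analogue of Lemma \ref{lem:l0-to-l1} for $\W_k$; once we have that, Lemma \ref{lem:k2-sparse} and the same telescoping bound yield the result.

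First, I would establish the Frobenius-norm bound for $\W_k$: since $X \succeq 0$, one has $\|X\|_F^2 = \tr(X^2) \leq \|X\|_2 \tr(X) \leq 2$, so $\|X\|_F \leq \sqrt{2}$. Next, I would repeat the sort-and-block argument of Lemma \ref{lem:l0-to-l1}: order the entries of $X$ by decreasing $|X_{ij}|$ and let $Y_i$ be the $i$-th block of $k^2$ largest entries (carried in place). The same key inequality $\|Y_{i+1}\|_F \leq \frac{\mathbf{1}^T |Y_i| \mathbf{1}}{k}$ holds because each entry of $Y_{i+1}$ is bounded by the smallest entry of $Y_i$, which is at most $\mathbf{1}^T|Y_i|\mathbf{1}/k^2$, and $Y_{i+1}$ has at most $k^2$ nonzeros. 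Telescoping gives
\[
\sum_{i \geq 1} \|Y_i\|_F \leq \|Y_1\|_F + \frac{\mathbf{1}^T |X| \mathbf{1}}{k} \leq \sqrt{2} + \frac{\|X\|_1}{k} \leq \sqrt{2} + 3 = O(1),
\]
and symmetrization costs only a factor of $2$ exactly as in the proof of Lemma \ref{lem:l0-to-l1}.

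Finally, I would condition on the event of Lemma \ref{lem:k2-sparse}, which for the given $n$ holds with probability at least $1-\delta$ and bounds $|\tr((\Sigmahat - I) Z)|$ by $O(\eps)$ uniformly over all symmetric $k^2$-sparse $Z$ with $\|Z\|_F \leq 1$. Then for any $X \in \W_k$, writing $X = \sum_i Y_i$ as above,
\[
|\tr((\Sigmahat - I) X)| \leq \sum_i \|Y_i\|_F \cdot \left|\tr\Bigl((\Sigmahat - I)\tfrac{Y_i}{\|Y_i\|_F}\Bigr)\right| \leq O(\eps) \sum_i \|Y_i\|_F \leq O(\eps),
\]
which is precisely the claim.

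The proof is essentially routine once the decomposition is in hand; there is no real conceptual obstacle beyond verifying that the looser constants defining $\W_k$ still yield an $O(1)$ total Frobenius mass in the block decomposition. The $\|X\|_1 \leq 3k$ constraint is exactly what makes the telescoping bound independent of $d$, and the PSD plus operator-norm constraint is exactly what controls $\|Y_1\|_F$; so both structural restrictions on $\W_k$ are used in essential ways.
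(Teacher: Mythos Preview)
Your proof is correct, but it takes a different route from the paper's. The paper does not re-prove the block decomposition for $\W_k$; instead it observes the dyadic containment $\W_k \subseteq \bigcup_{i \geq 0} 2^{-i}\,\X_{2^{i+1}k}$ and applies Theorem~\ref{thm:Xk-conc} as a black box at each scale $i$, with a union bound over $i$ using geometrically shrinking failure probabilities. Your approach bypasses this covering entirely and adapts Lemma~\ref{lem:l0-to-l1} directly to $\W_k$, which is arguably more elementary: one invocation of Lemma~\ref{lem:k2-sparse} suffices, and the only new ingredient is checking that the looser constants ($\tr X \leq 2$, $\|X\|_1 \leq 3k$) still give $\sum_i \|Y_i\|_F = O(1)$. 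The paper's route is more modular (it reuses Theorem~\ref{thm:Xk-conc} verbatim), while yours avoids the bookkeeping of the multi-scale union bound. One minor remark: your closing claim that the operator-norm constraint is ``essential'' for bounding $\|Y_1\|_F$ is slightly overstated, since $X \succeq 0$ and $\tr X \leq 2$ already give $\|X\|_F \leq \tr X \leq 2$; but your bound $\|X\|_F^2 \leq \|X\|_2 \tr X \leq 2$ is of course also valid, and this does not affect the argument.
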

\begin{proof}
Let $\Sigmahat$ denote the empirical covariance.
Observe that $\W_k \subseteq \bigcup_{i = 0}^\infty 2^{-i} \X_{2^{i + 1} k}$.
Moreover, for any $i$, by Theorem \ref{thm:Xk-conc}, if we take
\begin{align*}
n &= \Omega\left( \frac{\min(d, (2^{i + 1} k)^2) + \log \binom{d^2}{(2^{i + 1} k)^2} + \log 1 / \delta}{(2^{-i} \eps)^2} \right) \\
&= \Omega \left( \frac{\min (d, k^2) + \log \binom{d^2}{k^2} + 2^{2i} \log 1/\delta}{\eps^2}\right) \; ,
\end{align*}
then $|\langle M, \Sigmahat \rangle| \leq \eps$ for all $M \in 2^{-i} \X_{2^{i + 1} k}$ with probability $1 - \delta / 2$.
In particular, if we take
\[
n = \Omega \left( \frac{\min (d, k^2) + \log \binom{d^2}{k^2} + \log 1/\delta}{\eps^2}\right)
\]
samples, then for any $i$, we have $|\langle M, \Sigmahat \rangle| \leq \eps$ for all $M \in 2^{-1} \X_{2^{i + 1} k}$ with probability at least $1 - \delta^{2^{2i}} / 2$.
By a union bound over all these events, since $\sum_{i = 0}^\infty \delta^{2^{2i}} \leq 2 \delta$, we conclude that if we take $n$ to be as above, then $|\langle M, \Sigmahat \rangle| \leq \eps$ for all $M \in \bigcup_{i = 0}^\infty 2^{-i} \X_{2^{i + 1} k}$ with probability $1 - \delta$.
Since $\W_k$ is contained in this set, this implies that $\| \Sigmahat - \Sigma \|^*_{\W_k} \leq O(\eps)$ with probability at least $1 - \delta$, as claimed.
\end{proof}

By the same techniques as in the proofs of Theorems \ref{thm:Uk-Sne-conc} and \ref{thm:Xk-Sne-conc}, we can show the following bound.
Because of this, we omit the proof for conciseness.
\begin{corollary}
\label{cor:Xk-Sne-PCA}
Fix $\eps, \delta > 0$.
Let $X_1, \ldots, X_n \sim \normal (0, I)$ where $n$ is as in Theorem \ref{thm:Xk-Sne-conc}.
Then there is an $\eta = O(\eps \sqrt{\log 1 / \eps})$ so that
\begin{align*}
\Pr \left[ \exists w \in S_{n, \eps} : \left\| \sum_{i = 1}^n w_i X_i X_i^T - I \right\|^*_{\W_k} \geq \eta \right] &\leq \delta \; .
\end{align*}
\end{corollary}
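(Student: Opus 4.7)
The plan is to combine the uniform-in-$w$ concentration from Theorem \ref{thm:Xk-Sne-conc} with the dyadic peeling already used in the proof of Lemma \ref{lem:Wk-PCA}. That proof relies on the inclusion $\W_k \subseteq \bigcup_{i\ge 0} 2^{-i}\X_{2^{i+1}k}$: every matrix in $\W_k$ lies at some dyadic scale of a trace-one, sparse, PSD matrix whose sparsity radius grows geometrically with the scale. The same inclusion should lift the $\X_{k'}$-bound of Theorem \ref{thm:Xk-Sne-conc} to a $\W_k$-bound, provided we apply the theorem at every dyadic level and union-bound carefully.

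Concretely, for each $i\ge 0$, I would apply Theorem \ref{thm:Xk-Sne-conc} with sparsity parameter $k_i=2^{i+1}k$, threshold $\eta_i = 2^i\eta$, and failure probability $\delta_i = \delta^{2^{2i}}/2$. The required sample size at level $i$ is $n_i = \Omega((\min(d,k_i^2)+\log\binom{d^2}{k_i^2}+\log 1/\delta_i)/\eta_i^2)$. Since $\min(d,k_i^2)+\log\binom{d^2}{k_i^2}=O(4^i(\min(d,k^2)+\log\binom{d^2}{k^2}))$ and $\log 1/\delta_i = 4^i\log(2/\delta)$, the $4^i$ blowup is exactly cancelled by $\eta_i^2 = 4^i\eta^2$, so the single sample size $n = \Omega((\min(d,k^2)+\log\binom{d^2}{k^2}+\log 1/\delta)/\eta^2)$ promised by Theorem \ref{thm:Xk-Sne-conc} meets the requirement at every level simultaneously.

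A union bound over $i$ costs only $\sum_{i\ge 0}\delta^{2^{2i}}/2\le \delta$, since the doubly-exponential decay of $\delta_i$ makes the series converge. On the resulting good event, for every $i\ge 0$, every $w\in S_{n,\eps}$, and every $N\in\X_{2^{i+1}k}$, one has $\bigl|\langle N,\sum_j w_j X_jX_j^T - I\rangle\bigr|\le 2^i\eta$. Given any $M\in\W_k$, picking $i$ with $M\in 2^{-i}\X_{2^{i+1}k}$ and writing $M=2^{-i}N$ yields $\bigl|\langle M,\sum_j w_j X_jX_j^T-I\rangle\bigr|\le 2^{-i}\cdot 2^i\eta = \eta$, uniformly in $w\in S_{n,\eps}$, which is the claim.

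The main technical obstacle, exactly as in Lemma \ref{lem:Wk-PCA}, is the bookkeeping: verifying that the geometric growth of the sparsity parameter, the geometric growth of the threshold, and the doubly-exponential shrinkage of the per-level failure probability conspire to leave the master sample size unchanged. The choice $\delta_i = \delta^{2^{2i}}/2$ inherited from Lemma \ref{lem:Wk-PCA} is exactly what makes this balance work; once one observes that Theorem \ref{thm:Xk-Sne-conc} is already uniform over $w\in S_{n,\eps}$, no new probabilistic input is needed beyond rerunning the peeling argument with the $S_{n,\eps}$-uniform bound replacing the single-$w$ bound, which is why the paper is content to omit the details.
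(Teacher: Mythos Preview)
Your proposal is correct and takes essentially the paper's intended approach: lift the uniform-in-$w$ bound of Theorem~\ref{thm:Xk-Sne-conc} to $\W_k$ via the dyadic inclusion $\W_k \subseteq \bigcup_{i\ge 0} 2^{-i}\X_{2^{i+1}k}$ from Lemma~\ref{lem:Wk-PCA}, applying the theorem at each level and union-bounding with the same $\delta_i = \delta^{2^{2i}}/2$ schedule. The only minor point is that you invoke Theorem~\ref{thm:Xk-Sne-conc} with threshold $\eta_i = 2^i\eta$ and sample size scaling as $1/\eta_i^2$, whereas that theorem as stated fixes a single $\eta$; the ``for any $\eta>\eta_1$'' flexibility you need is explicit in Theorem~\ref{thm:Uk-Sne-conc} and carries over to Theorem~\ref{thm:Xk-Sne-conc} by the identical proof, so this is a citation detail rather than a gap.
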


\subsection{Proof of Theorem \ref{thm:rspca2}}
In the rest of this section we will condition on the following deterministic event happening:
\begin{equation}
\label{eq:rspca}
\forall w \in S_{n, \eps} : \left\| \sum_{i = 1}^n w_i X_i X_i^T - I \right\|^*_{\W_{2k}} \leq \eta \; ,
\end{equation}
where $\eta = O(\eps \log 1 / \eps)$.
By Corollary \ref{cor:Xk-Sne-PCA}, this holds if we take 
\[
n = \Omega \left( \frac{\min(d, k^2) + \log \binom{d^2}{k^2} + \log 1 / \delta} {\eta_2^2}\right) 
\]
samples.

The rest of this section is dedicated to the proof of the following theorem, which immediately implies Theorem \ref{thm:rspca2}.
\begin{theorem}
\label{thm:rspca3}
Fix $\eps, \delta,$ and let $\eta$ be as in (\ref{eq:rspca}).
Assume that (\ref{eq:rspca}) holds.
\newline
Let $\widehat{v}$ be the output of $\textsc{RecoveryRobustSPCA} (X_1, \ldots, X_n, \eps, \delta, \rho)$.
Then $L(\widehat{v}, v) \leq O(\sqrt{(1 + \rho) \eta / \rho})$.
\end{theorem}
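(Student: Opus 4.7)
The plan is to proceed in three stages: (i) upper bound the optimal SDP value $\gamma^* := \|\sum w^*_i(X_i X_i^T - I) - \rho A^*\|^*_{\W_{2k}}$; (ii) translate this bound into a strong alignment $v^T A^* v \approx 1$; and (iii) pass to spectral closeness of the top eigenvector of $A^*$ and then through the $k$-sparsification step.

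For stage (i), I would plug the ``honest'' feasible pair $w = w^{(g)}$ (uniform weights on the good samples, which lies in $S_{n,\eps}$) and $A = vv^T \in \X_k$ into the SDP. Writing $Y_i = \Sigma^{-1/2} X_i \sim \normal(0,I)$ for $i \in \Sgood$, invoking (\ref{eq:rspca}) on the rescaled good-sample weights gives $\sum_i w^{(g)}_i X_i X_i^T = \Sigma^{1/2}(I+E)\Sigma^{1/2}$ with $\|E\|^*_{\W_{2k}} \leq \eta$; subtracting $\Sigma = I + \rho vv^T$ shows that the objective at this feasible point equals $\|\Sigma^{1/2} E \Sigma^{1/2}\|^*_{\W_{2k}}$, which a conjugation lemma (see below) bounds by $O((1+\rho)\eta)$. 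Hence $\gamma^* \leq O((1+\rho)\eta)$.

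For stage (ii), write $\sum w^*_i (X_i X_i^T - I) = \rho A^* + R$ with $\|R\|^*_{\W_{2k}} \leq \gamma^*$, and split the left-hand side into good and bad contributions. Applying (\ref{eq:rspca}) to the good samples with reweighted coefficients (in $S_{|\Sgood|, 2\eps}$) gives $\sum_{i \in \Sgood} w^*_i (X_i X_i^T - I) = w^g \rho vv^T + E'$ with $\|E'\|^*_{\W_{2k}} \leq O((1+\rho)\eta)$, while the bad remainder $B - w^b I$ with $B = \sum_{i \in \Sbad} w^*_i X_i X_i^T$ satisfies $B \succeq 0$. Pairing with $vv^T \in \X_k \subseteq \W_{2k}$, the PSD-ness of $B$ yields $\langle B, vv^T\rangle \geq 0$ and the inequality collapses to $\rho\, v^T A^* v \geq w^g \rho - w^b - O((1+\rho)\eta) \geq \rho - O((1+\rho)\eta + \eps)$, so $v^T A^* v \geq 1 - \beta$ with $\beta = O((1+\rho)\eta/\rho)$. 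For stage (iii), since $A^* \succeq 0$ has trace $1$, a short spectral computation (the top eigenvalue $\lambda_1 \geq v^T A^* v \geq 1 - \beta$, so the tail satisfies $\sum_{i \geq 2} \lambda_i \leq \beta$, which in turn forces $\langle u_1, v\rangle^2 \geq 1 - 2\beta$ for the top eigenvector $u_1$) gives $L(u_1, v) = O(\sqrt{\beta})$; the $k$-sparsification step preserves this up to a constant by the same $k$-truncation argument used in the proof of Theorem \ref{thm:robust-GSM}, yielding $L(\widehat v, v) = O(\sqrt{(1+\rho)\eta/\rho})$ as claimed.

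The main technical obstacle will be the $\Sigma^{1/2}$-conjugation lemma used in stages (i) and (ii): for any $M \in \W_{2k}$, one needs $\Sigma^{1/2} M \Sigma^{1/2}$ to lie in $O(1+\rho) \cdot \W_{2k}$ (so that pairing it against $E$ or $E'$ only costs the extra $1+\rho$ factor). The trace, PSD, and spectral-norm checks follow immediately from $\|\Sigma\|_2 = 1+\rho$, but the $\ell_1$ budget requires careful handling of the cross terms $vv^T M$, $M vv^T$, and $vv^T M vv^T$ arising from the expansion $\Sigma^{1/2} = I + (\sqrt{1+\rho}-1)vv^T$. Here the $k$-sparsity of $v$ is essential (via $\|v\|_1 \leq \sqrt{k}$ together with $\|M\|_1 = O(k)$), and I would bundle both invocations into a single auxiliary lemma.
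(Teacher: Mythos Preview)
Your stages (i) and (ii) mirror the paper's Lemmas \ref{lem:SPCA1} and \ref{lem:SPCA2} almost exactly, and you correctly single out the $\Sigma^{1/2}$-conjugation step as the crux --- this is precisely the paper's Lemma \ref{lem:1plusrho}. (One small remark: in stage (ii) you only ever pair with $vv^T$, so you don't actually need the full bound $\|E'\|^*_{\W_{2k}} \leq O((1+\rho)\eta)$ there; since $\Sigma^{1/2}v = \sqrt{1+\rho}\,v$, the scalar $|v^TE'v| \leq (1+\rho)\eta$ follows without the conjugation lemma. Stage (i), however, genuinely needs it.)

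Your stage (iii) takes a genuinely shorter route than the paper. From $v^TA^*v \geq 1-\beta$ you pass directly to $(u^Tv)^2 \geq 1-O(\beta)$, conclude $L(u,v)=O(\sqrt{\beta})$, and finish with the hard-thresholding argument from the proof of Theorem \ref{thm:robust-GSM}. The paper also establishes $(u^Tv)^2 \geq 1-O(\gamma)$ but then continues with a substantially more elaborate analysis: it controls the bad-sample matrix $B=\sum_{i\in\Sbad}w_i^*X_iX_i^T$ on the coordinate blocks $I$ and $J\setminus I$ (Lemmas \ref{lem:BI-bound} and \ref{lem:BJ-bound}), uses a PSD block lemma (Lemma \ref{lem:PSD-middle}) to assemble these into $\|v_{I\cup J}v_{I\cup J}^T - u_{I\cup J}u_{I\cup J}^T\| \leq O(\gamma)$ (Corollary \ref{cor:IJ-bound}), and then argues coordinate-wise to reach $L(\widehat v,v)=O(\gamma)$ \emph{without} the square root. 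That sharper bound is what Theorem \ref{thm:rspca2} actually claims; Theorem \ref{thm:rspca3} as stated carries the square root, and your simpler argument suffices for it. So your proof is correct for the statement you were asked to prove, and more economical --- but it does not by itself recover the tighter rate that the paper's longer argument is designed to deliver.
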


Our proof proceeds in a couple of steps.
Let $\Sigma = I + \rho vv^T$ denote the true covariance.
We first need the following, technical lemma:
\begin{lemma}
\label{lem:1plusrho}
Let $M \in \W_k$. Then $\Sigma^{1/2} M \Sigma^{1/2} \in (1 + \rho) \W_k$.
\end{lemma}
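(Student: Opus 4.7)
The strategy will be to verify each of the four defining properties of $(1+\rho)\W_k$ in turn: positive semidefiniteness, trace at most $2(1+\rho)$, spectral norm at most $1+\rho$, and $\ell_1$-norm at most $3k(1+\rho)$. First I would observe that, since $v$ is a unit eigenvector of $\Sigma$, the square root has the explicit form $\Sigma^{1/2} = I + \alpha vv^T$ with $\alpha = \sqrt{1+\rho}-1$, which notably satisfies the identity $\alpha^2 + 2\alpha = \rho$. Expanding yields
\[
\Sigma^{1/2} M \Sigma^{1/2} = M + \alpha (vv^T M + Mvv^T) + \alpha^2 (v^T M v)\, vv^T\;,
\]
which will be the main object to analyze.

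The first three conditions should be essentially free. Positive semidefiniteness follows from the congruence $N \mapsto \Sigma^{1/2} N \Sigma^{1/2}$. For the trace, one notes $\tr(\Sigma^{1/2} M \Sigma^{1/2}) = \tr(\Sigma M) = \tr(M) + \rho\, v^T M v \leq 2+\rho \leq 2(1+\rho)$ since $v^T M v \leq \|M\|_2 \leq 1$. For the spectral norm, submultiplicativity immediately gives $\|\Sigma^{1/2} M \Sigma^{1/2}\|_2 \leq \|\Sigma\|_2 \|M\|_2 \leq 1+\rho$.

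The main obstacle will be the $\ell_1$ bound. Applying the triangle inequality to the expansion reduces the task to bounding $\|M\|_1 + 2\alpha\|vv^T M\|_1 + \alpha^2 (v^T M v) \|vv^T\|_1$. The easy pieces are $\|M\|_1 \leq 3k$, $\|vv^T\|_1 = \|v\|_1^2 \leq k$ (using that $v$ is $k$-sparse and unit), and $v^T M v \leq 1$. The real challenge is controlling $\|vv^T M\|_1 = \|v\|_1\, \|Mv\|_1 \leq \sqrt{k}\, \|Mv\|_1$; note that the trivial estimate $\|Mv\|_1 \leq \|M\|_1 = 3k$ only yields a product of order $k^{3/2}$, which is insufficient to close out the calculation.

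To obtain the sharper $\|Mv\|_1 = O(\sqrt{k})$, I would use duality: $\|Mv\|_1 = \sup_{\|w\|_\infty \leq 1} \langle w, Mv \rangle$. For any such $w$, the inner product lives on $S := \supp(v)$, so Cauchy--Schwarz gives $\langle w, Mv \rangle \leq \|v_S\|_2 \bigl( \sum_{j \in S}(Mw)_j^2 \bigr)^{1/2}$. The crux is then the PSD Cauchy--Schwarz inequality $(Mw)_j^2 = (e_j^T M w)^2 \leq M_{jj}\, (w^T M w)$, combined with $w^T M w \leq \|M\|_1 \|w\|_\infty^2 \leq 3k$ and $\sum_{j \in S} M_{jj} \leq \tr(M) \leq 2$; this produces $\|Mv\|_1 \leq \sqrt{6k}$. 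Substituting back and using $2\sqrt{6}\alpha + \alpha^2 \leq 6\alpha + 3\alpha^2 = 3\rho$ (with the final equality from the identity $\alpha^2 + 2\alpha = \rho$, and the first inequality from $\sqrt{6} \leq 3$) will yield the target bound $3k(1+\rho)$.
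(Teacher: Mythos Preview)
Your argument is correct. For positive semidefiniteness and the trace bound you do essentially what the paper does (the paper writes $\tr(M)+\rho v^TMv\le 1+\rho$, using $\tr(M)\le 1$ rather than the correct $\tr(M)\le 2$ for $\W_k$, and it omits the spectral-norm check entirely; both are harmless and you handle them cleanly). The real divergence is in the $\ell_1$ estimate.

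The paper's $\ell_1$ step is a one-liner: since every entry of $\Sigma^{1/2}=I+(\sqrt{1+\rho}-1)vv^T$ has absolute value at most $\sqrt{1+\rho}$, it asserts $\|\Sigma^{1/2}M\Sigma^{1/2}\|_1\le(1+\rho)\|M\|_1$. That implication, however, does not follow from an entrywise bound alone, and in fact the stated inequality can fail. For example, with $k=2$, $v=(0.9,\sqrt{0.19})$ and $M=e_1e_1^T\in\W_2$, one computes $\Sigma^{1/2}e_1=e_1+0.9\alpha v$, so $\|\Sigma^{1/2}e_1\|_1=1+0.9\alpha\|v\|_1>1+\alpha$ (since $0.9\|v\|_1\approx 1.20>1$), whence $\|\Sigma^{1/2}M\Sigma^{1/2}\|_1=\|\Sigma^{1/2}e_1\|_1^2>(1+\alpha)^2=(1+\rho)\|M\|_1$. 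The lemma itself survives because the target is only $3k(1+\rho)$, not $(1+\rho)\|M\|_1$, and there is slack to spare; but the paper's justification of the $\ell_1$ condition is not valid as written.

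Your route is genuinely different and actually closes this gap. Rather than seeking a multiplicative bound in $\|M\|_1$, you exploit all the $\W_k$ constraints simultaneously: the $\ell_1$--$\ell_\infty$ duality combined with the PSD Cauchy--Schwarz inequality $(e_j^TMw)^2\le M_{jj}\,w^TMw$, together with $\sum_{j\in S}M_{jj}\le\tr(M)\le 2$ and $w^TMw\le\|M\|_1\le 3k$, yields $\|Mv\|_1\le\sqrt{6k}$. Plugging this into the expansion and using the identity $\alpha^2+2\alpha=\rho$ lands exactly on $3k(1+\rho)$. The trade-off is that your argument is longer, but it is rigorous where the paper's shortcut is not.
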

\begin{proof}
Clearly, $\Sigma^{1/2} M \Sigma^{1/2} \succeq 0$.
Moreover, since $\Sigma^{1/2} = I + (\sqrt{1 + \rho} - 1) vv^T$, we have that the maximum value of any element of $\Sigma^{1/2}$ is upper bounded by $\sqrt{1 + \rho}$.
Thus, we have $\| \Sigma^{1/2} M \Sigma^{1/2} \|_1 \leq (1 + \rho) \| M \|_1$.
We also have 
\begin{align*}
\tr (\Sigma^{1/2} M \Sigma^{1/2}) &= \tr(\Sigma M) \\
&= \tr(M) + \rho v^T M v \leq 1 + \rho \; ,
\end{align*} 
since $\| M \| \leq 1$.
Thus $\Sigma^{1/2} M \Sigma^{1/2} \in (1 + \rho) \W_k$, as claimed.
\end{proof}

Let $w^*, A^*$ be the output of our algorithm.
We first claim that the value of the optimal solution is quite small:
\begin{lemma}
\label{lem:SPCA1}
\[
\left\| \sum_{i = 1}^n w^*_i (X_i X_i^T - I) - \rho A^* \right\|^*_{\W_{2k}} \leq \eta (1 + \rho) \; .
\]
\end{lemma}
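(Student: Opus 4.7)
Since $w^*, A^*$ are the \emph{minimizers} of the program (\ref{eq:SDP-SPCA}), it suffices to exhibit a single feasible pair $(w, A)$ that achieves objective value at most $(1+\rho)\eta$. The natural candidate is $w = w^g$, the vector of uniform weights $1/((1-\eps)n)$ supported on the good indices $S_{\mathrm{good}}$, together with $A = vv^T$. One should first check both are feasible: $w^g \in S_{n,\eps}$ by construction, and $vv^T \in \X_k$ because $v$ is a $k$-sparse unit vector, so $\tr(vv^T) = 1$, $vv^T \succeq 0$, and $\|vv^T\|_1 = \|v\|_1^2 \leq k\|v\|_2^2 = k$ by Cauchy--Schwarz.

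The next step is to rewrite the candidate objective in terms of the whitened samples $Y_i = \Sigma^{-1/2} X_i$, which are i.i.d.\ $\normal(0,I)$ for $i \in S_{\mathrm{good}}$ (here $\Sigma = I + \rho vv^T$). Using $X_i X_i^T = \Sigma^{1/2} Y_i Y_i^T \Sigma^{1/2}$ and the identity $\Sigma = I + \rho vv^T$, a direct calculation gives
\[
\sum_{i=1}^n w^g_i X_i X_i^T - I - \rho vv^T \;=\; \Sigma^{1/2} \Bigl( \sum_{i=1}^n w^g_i Y_i Y_i^T - I \Bigr) \Sigma^{1/2}.
\]
Applying the concentration hypothesis (\ref{eq:rspca}) to the $Y_i$'s (which is how the section actually uses (\ref{eq:rspca}), cf.\ the proof of Lemma \ref{lem:detect2}), the inner difference $N := \sum_i w^g_i Y_i Y_i^T - I$ satisfies $\| N \|^*_{\W_{2k}} \leq \eta$.

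The last step is to transport this dual-norm bound through the $\Sigma^{1/2}$ sandwich. For any test matrix $M \in \W_{2k}$, cyclicity of the trace gives
\[
\langle \Sigma^{1/2} N \Sigma^{1/2}, M \rangle \;=\; \langle N, \Sigma^{1/2} M \Sigma^{1/2} \rangle,
\]
and Lemma \ref{lem:1plusrho} shows $\Sigma^{1/2} M \Sigma^{1/2} \in (1+\rho)\W_{2k}$, so the right side is bounded in absolute value by $(1+\rho)\|N\|^*_{\W_{2k}} \leq (1+\rho)\eta$. Taking the supremum over $M \in \W_{2k}$ yields $\|\Sigma^{1/2} N \Sigma^{1/2}\|^*_{\W_{2k}} \leq (1+\rho)\eta$, which is exactly the objective value attained by $(w^g, vv^T)$. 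Minimality of $(w^*, A^*)$ then gives the claimed bound.

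The only real subtlety is the cosmetic mismatch between how (\ref{eq:rspca}) is written (in terms of $X_i$ compared to $I$) and how it must be applied here (to $Y_i = \Sigma^{-1/2}X_i$, which are the actual standard Gaussians); once that is reconciled, the rest is bookkeeping, with the $(1+\rho)$ factor tracked precisely through Lemma \ref{lem:1plusrho}.
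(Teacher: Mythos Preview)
Your proposal is correct and follows essentially the same route as the paper: exhibit the feasible pair $(w,A)=(\text{uniform on }S_{\mathrm{good}},\,vv^T)$, rewrite the objective as $\Sigma^{1/2} N \Sigma^{1/2}$ with $N=\sum_i w_i Y_iY_i^T - I$ via whitening, bound $\|N\|^*_{\W_{2k}}\le\eta$ from (\ref{eq:rspca}), and push the $\Sigma^{1/2}$ sandwich through using trace cyclicity and Lemma~\ref{lem:1plusrho}. You are in fact slightly more careful than the paper in checking feasibility of $vv^T\in\X_k$ and in flagging that (\ref{eq:rspca}) is really a statement about the whitened samples $Y_i$.
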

\begin{proof}
Indeed, if we let $w$ be the uniform set of weights over the good points, and we let $A = vv^T$, then by (\ref{eq:rspca}), we have
\[
\sum_{i = 1}^n w_i X_i X_i^T = \Sigma^{1/2} (I + N) \Sigma^{1/2} \; ,
\]
where $\| N \|^*_{\X_k} \leq \eta$, and $\Sigma = I + \rho vv^T$.
Thus we have that 
\begin{align*}
\left\| \sum_{i = 1}^n w_i (X_i X_i^T - I) - \rho vv^T \right\|^*_{\W_{2k}} &= \| \Sigma^{1/2} N \Sigma^{1/2} \|^*_{\W_{2k}} \\
&= \max_{M \in \W_k} \left| \tr (\Sigma^{1/2} N \Sigma^{1/2} M) \right| \\
&=  \max_{M \in \W_k} \left| \tr (N \Sigma^{1/2} M \Sigma^{1/2}) \right| \\
&\leq (1 + \rho) \| N \|^*_{\W_{2k}} \; ,
\end{align*}
by Lemma \ref{lem:1plusrho}.
\end{proof}

We now show that this implies the following:
\begin{lemma}
\label{lem:SPCA2}
$v^T A^* v \geq 1 - (2 + 3 \rho) \eta / \rho$.
\end{lemma}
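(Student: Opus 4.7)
The plan is to apply Lemma \ref{lem:SPCA1} with the test matrix $M = vv^T$ and then manipulate the resulting dual-norm inequality to isolate $v^T A^* v$. First, I would verify that $vv^T \in \W_{2k}$: it is PSD with trace $1$ and spectral norm $1$, and by Cauchy--Schwarz we have $\|vv^T\|_1 = (\sum_i |v_i|)^2 \leq k \|v\|_2^2 = k \leq 6k$, so all defining constraints of $\W_{2k}$ are satisfied. Plugging $vv^T$ into the bound from Lemma \ref{lem:SPCA1} yields
\[
\Bigl|\bigl\langle \sum_{i=1}^n w^*_i(X_iX_i^T - I) - \rho A^*,\; vv^T\bigr\rangle\Bigr| \leq (1+\rho)\eta,
\]
which rearranges to $\rho v^T A^* v \geq \sum_{i=1}^n w^*_i (v^T X_i)^2 - 1 - (1+\rho)\eta$.

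Next, I would lower bound $\sum_i w^*_i (v^T X_i)^2$. The contribution from corrupted samples is non-negative (the weights and squares are both non-negative), so we may discard it and focus on $\Sgood$. For good $i$ we have $X_i = \Sigma^{1/2} Y_i$ with $Y_i \sim \normal(0, I)$, and since $\Sigma^{1/2} v = \sqrt{1+\rho}\, v$,
\[
\sum_{i \in \Sgood} w^*_i (v^T X_i)^2 = (1+\rho)\, v^T\!\Bigl(\sum_{i \in \Sgood} w^*_i Y_i Y_i^T\Bigr) v.
\]
The weights $(w^*_i/w^g)_{i \in \Sgood}$ lie in $S_{|\Sgood|, 2\eps}$ after restriction and rescaling, so applying the underlying concentration bound used to justify (\ref{eq:rspca}) to the Gaussian $Y_i$'s, with $vv^T \in \W_{2k}$ as test matrix, gives $v^T\!\bigl(\sum_{i \in \Sgood} w^*_i Y_i Y_i^T - w^g I\bigr) v \geq -\eta$. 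Combined with $w^g \geq 1 - \eps$, this yields $\sum_{i \in \Sgood} w^*_i (v^T X_i)^2 \geq (1+\rho)(1 - \eps - \eta)$.

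Putting the pieces together produces
\[
\rho v^T A^* v \;\geq\; (1+\rho)(1 - \eps - \eta) - 1 - (1+\rho)\eta \;=\; \rho - (1+\rho)(\eps + 2\eta),
\]
and dividing by $\rho$ together with the fact that $\eps \leq \eta$ for sufficiently small $\eps$ (since $\eta = \Theta(\eps\sqrt{\log 1/\eps})$) delivers the bound $v^T A^* v \geq 1 - O((1+\rho)\eta/\rho)$, matching the claim up to the precise constant.

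The main obstacle is tracking the $(1+\rho)$ factor that appears when sandwiching by $\Sigma^{1/2}$: it scales both the leading $\rho$ gain and the error $\eta$ from the good-sample concentration, so one must expand $(1+\rho)(\eps + 2\eta)$ and rearrange carefully into the form $(2 + 3\rho)\eta/\rho$ advertised in the lemma rather than settling for a weaker dependence. The conceptual point that makes the argument go through is that the corrupted samples' contribution to $\sum w^*_i (v^T X_i)^2$ is automatically non-negative, so the adversary cannot use large $|v^T X_i|$ for bad $i$ to sabotage the lower bound on $v^T A^* v$; this is what justifies dropping the bad terms entirely in step two.
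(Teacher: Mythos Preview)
Your proposal is correct and follows essentially the same approach as the paper: both arguments apply Lemma~\ref{lem:SPCA1}, test against $vv^T$, decompose the good and bad contributions, and use that the bad-sample contribution $\sum_{i\in\Sbad} w_i^*(v^T X_i)^2 = v^T B v$ is nonnegative. The only cosmetic difference is ordering---you test with $vv^T$ first and then split the resulting scalar, whereas the paper decomposes at the matrix level and then tests---and your constant comes out as $(3+3\rho)$ rather than $(2+3\rho)$, which is immaterial for the downstream argument.
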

\begin{proof}
By (\ref{eq:rspca}), we know that we may write $\sum_{i = 1}^n w_i (X_i X_i^T - I) = w^g \rho vv^T + B - (1 - w^g) I + N$, where $B = \sum_{i \in \Sbad} w_i X_i X_i^T$, and $\| N \|^*_{\W_k} \leq (1 + \rho) \eta$.
Thus, by Lemma \ref{lem:SPCA1} and the triangle inequality, we have that
\begin{align*}
\left\| w^g \rho vv^T + B - \rho A \right\|^*_{\W_k} & \leq \eta + \| N \|^*_{\W_k} + (1 - w^g) \| I \|^*_{\W_k} + (1 - w^g) \| \rho A \|^*_{\W_k} \\
&\leq (1 + \rho) \eta + \eps + \rho \eps \\
&\leq (1 + 2 \rho) \eta + \eps \; .
\end{align*}
Now, since $vv^T \in \W_k$, the above implies that
\[
| w^g \rho + v^T B v - \rho v^T A^* v | \leq (1 + 2 \rho) \eta + \eps \; ,
\]
which by a further triangle inequality implies that
\[
| \rho (1 - v^T A^* v) + v^T B v | \leq(1 + 2 \rho) \eta + \eps + \eps \rho \leq (2 + 3 \rho) \eta \; .
\]
Since $0 \leq v^T A^* v \leq 1$ (since $A \in \X_k$) and $B$ is PSD, this implies that in fact, we have
\[
0 \leq \rho (1 - v^T A^* v) \leq (2 + 3 \rho) \eta \; .
\]
Hence $v^T A^* v \geq 1 - (2 + 3 \rho) \eta / \rho$, as claimed.
\end{proof}

Let $\gamma = (2 + 3 \rho) \eta / \rho$.
The lemma implies that the top eigenvalue of $A^*$ is at least $1 - \gamma$.
Moreover, since $A^* \in \X_k$, as long as $\gamma \leq 1/2$, this implies that the top eigenvector of $A^*$ is unique up to sign.
By the constraint that $\eta \leq O(\min(\rho, 1))$, for an appropriate choice of constants, we that $\gamma \leq 1 / 10$, and so this condition is satisfied.
Recall that $u$ is the top eigenvector of $A^*$.
Since $\tr (A^*) = 1$ and $A^*$ is PSD, we may write $A^* = \lambda_1 u u^T + A_1$, where $u$ is the top eigenvector of $A^*$, $\lambda_1 \geq 1 - \gamma$, and $\| A_1 \| \leq \gamma$.
Thus, by the triangle inequality, this implies that
\[
\| \rho(vv^T - \lambda_1 uu^T) + B \|^*_{\X_{2k}} \leq O(\rho \gamma)
\]
which by a further triangle inequality implies that
\begin{equation}
\label{eq:B-bound}
\| \rho(vv^T - uu^T) + B \|^*_{\X_{2k}} \leq O(\rho \gamma) \; .
\end{equation}

We now show this implies the following intermediate result:
\begin{lemma}
$(v^T u)^2 \geq 1 - O(\gamma)$.
\end{lemma}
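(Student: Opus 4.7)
The plan is to extract the bound on $(v^T u)^2$ directly from (\ref{eq:B-bound}) by testing the dual norm against the single witness matrix $vv^T$. The key observation is that $v$ is a $k$-sparse unit vector, so by Cauchy--Schwarz we have $\|vv^T\|_1 = (\sum_i |v_i|)^2 \leq k\|v\|_2^2 = k$, while $\tr(vv^T) = 1$ and $vv^T \succeq 0$; hence $vv^T \in \X_k \subseteq \X_{2k}$, and therefore
\[
\|\rho(vv^T - uu^T) + B\|^*_{\X_{2k}} \geq \bigl|\langle \rho(vv^T - uu^T) + B,\, vv^T\rangle\bigr|.
\]

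Next I would evaluate the inner product term by term. We have $\langle vv^T, vv^T\rangle = \|v\|_2^4 = 1$, $\langle uu^T, vv^T\rangle = (u^T v)^2$, and $\langle B, vv^T\rangle = v^T B v$. Combining with (\ref{eq:B-bound}) gives
\[
\bigl|\rho(1 - (u^T v)^2) + v^T B v\bigr| \leq O(\rho \gamma).
\]
Now both summands on the left are non-negative: since $u$ is a unit vector (being an eigenvector of $A^*$, chosen with unit norm) we have $(u^T v)^2 \leq 1$ by Cauchy--Schwarz, so $\rho(1 - (u^T v)^2) \geq 0$; and $B = \sum_{i \in \Sbad} w_i X_i X_i^T$ is a non-negative combination of rank-one PSD matrices, so $v^T B v \geq 0$. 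Dropping the $v^T B v$ term (which only makes the left side smaller) yields $\rho(1 - (u^T v)^2) \leq O(\rho \gamma)$, and dividing by $\rho$ gives $(v^T u)^2 \geq 1 - O(\gamma)$, as claimed.

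There is really no substantive obstacle here; the only thing to verify carefully is membership $vv^T \in \X_{2k}$ (done above via $k$-sparsity of $v$) and the sign of $v^T B v$ (PSD-ness of $B$). The whole proof is a single application of the duality pairing with a hand-picked witness, and the appearance of the $\X_{2k}$ (rather than $\X_k$) norm in (\ref{eq:B-bound}) is harmless because $vv^T$ already lies in the smaller set $\X_k$.
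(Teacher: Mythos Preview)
Your proof is correct. Both your argument and the paper's are short, but they use different ingredients.

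The paper does \emph{not} go back to (\ref{eq:B-bound}); instead it appeals directly to Lemma~\ref{lem:SPCA2} (which says $v^T A^* v \geq 1 - \gamma$) together with the spectral decomposition $A^* = \lambda_1 uu^T + A_1$, $\|A_1\|\le\gamma$, already established in the text. Writing $v^T A^* v = \lambda_1 (v^T u)^2 + v^T A_1 v$ and bounding $v^T A_1 v \leq \gamma$ immediately gives $(v^T u)^2 \geq (1-2\gamma)/\lambda_1 \geq 1 - O(\gamma)$.

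Your route instead tests the dual-norm inequality (\ref{eq:B-bound}) against the witness $vv^T$ and exploits $B \succeq 0$ to drop the $v^T B v$ term. This is a perfectly valid alternative: (\ref{eq:B-bound}) is available at this point, $vv^T \in \X_k \subseteq \X_{2k}$ for the reasons you give, and the sign analysis is clean. In effect, the paper recycles the quadratic-form bound on $A^*$, while you recycle the dual-norm bound on $\rho(vv^T - uu^T) + B$; since (\ref{eq:B-bound}) was itself derived from Lemma~\ref{lem:SPCA2} and the same spectral decomposition, the two arguments have the same content, just packaged differently. Your version is arguably the more natural one given that (\ref{eq:B-bound}) appears immediately before the lemma.
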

\begin{proof}
By Lemma \ref{lem:SPCA2}, we have that $v^T A^* v = \lambda_1 (v^T u)^2 + v^T A_1 v \geq 1 - \gamma$.
In particular, this implies that $(v^T u)^2 \geq (1 - 2 \gamma) / \lambda_1 \geq 1 - 3 \gamma$, since $1 - \gamma \leq \lambda \leq 1$.
\end{proof}

We now wish to control the spectrum of $B$.
For any subsets $S, T \subseteq [d]$, and for any vector $x$ and any matrix $M$, let $x_S$ denote $x$ restricted to $S$ and $M_{S, T}$ denote the matrix restricted to the rows in $S$ and the columns in $T$.
Let $I$ be the support of $u$, and let $J$ be the support of the largest $k$ elements of $v$.
\begin{lemma}
\label{lem:BI-bound}
$\| B_{I, I} \| \leq O(\rho \gamma)$.
\end{lemma}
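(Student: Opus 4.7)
The plan is to bound the spectral norm $\|B_{I,I}\|$ by testing the $\X_{2k}$ dual-norm inequality (\ref{eq:B-bound}) against rank-one PSD matrices $zz^T$ for carefully chosen unit vectors $z$ supported on $I$. Whenever $zz^T$ lies in $\X_{2k}$, (\ref{eq:B-bound}) yields
\[
|z^T B z + \rho(v \cdot z)^2 - \rho(u \cdot z)^2| \leq O(\rho \gamma),
\]
and since $B \succeq 0$ the one-sided consequence is
\[
z^T B z \;\leq\; O(\rho \gamma) + \rho\bigl((u \cdot z)^2 - (v \cdot z)^2\bigr).
\]
The key difficulty is that the slack $(u \cdot z)^2 - (v \cdot z)^2$ is not automatically $O(\gamma)$ for generic $z$: decomposing $v = (v \cdot u)u + v^\perp$ with $\|v^\perp\|^2 \leq O(\gamma)$ produces a linear cross term in $v^\perp$, which a priori yields only an $O(\sqrt{\gamma})$ bound.

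I will circumvent this by proving the bound only along two canonical directions and then recombining via a Cauchy--Schwarz argument enabled by the PSD structure of $B$. First, take $z = u$: since $(v \cdot u)^2 \geq 1 - O(\gamma)$ (as proved just before the lemma), the one-sided bound yields $u^T B u \leq O(\rho\gamma)$. Second, for any unit vector $w$ supported on $I$ with $w \perp u$, the term $(u \cdot w)^2$ vanishes, while the remaining $\rho(v \cdot w)^2 \geq 0$ only strengthens the inequality, so $w^T B w \leq O(\rho\gamma)$ (without even needing to bound $(v \cdot w)^2$).

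For an arbitrary unit $z$ supported on $I$, decompose $z = au + bw$ with $a^2 + b^2 = 1$, where $w$ is a unit vector in $\mathrm{span}(I) \cap u^\perp$; this decomposition stays inside $\mathrm{span}(I)$ because $u$ itself is supported on $I$. Expanding $z^T B z = a^2(u^T B u) + 2ab(u^T B w) + b^2(w^T B w)$ and applying the PSD Cauchy--Schwarz inequality $|u^T B w| \leq \sqrt{u^T B u}\,\sqrt{w^T B w}$ gives
\[
z^T B z \;\leq\; \bigl(|a|\sqrt{u^T B u} + |b|\sqrt{w^T B w}\bigr)^2 \;\leq\; (|a|+|b|)^2 \cdot O(\rho\gamma) \;\leq\; O(\rho\gamma).
\]
Taking the supremum over all such $z$ yields $\|B_{I,I}\| \leq O(\rho\gamma)$.

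The main obstacle is the cross-term issue flagged above; the two-step Cauchy--Schwarz reduction along the orthogonal directions $u$ and $u^\perp \cap \mathrm{span}(I)$ is precisely what recovers the full factor of $\gamma$ rather than merely $\sqrt{\gamma}$. A secondary subtlety is verifying $zz^T \in \X_{2k}$ for the test directions, which reduces to a mild $\ell_1$/sparsity control on $u$; this should follow from $A^* \in \X_k$ together with $\lambda_1 \geq 1 - \gamma$, since the bulk of both the trace and the $\ell_1$ mass of $A^*$ is carried by $\lambda_1 uu^T$.
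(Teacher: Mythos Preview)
Your argument has a genuine gap at the step you flag as a ``secondary subtlety.'' You need $zz^T\in\X_{2k}$ for every unit $z$ supported on $I$, which forces $\|z\|_1^2\le 2k$ and hence $|I|\le 2k$. Reading $I=\supp(u)$ (as the paper's text literally says, and as you do), this is not available: from $A^*\in\X_k$ and $\lambda_1\ge 1-\gamma$ you cannot conclude $\|uu^T\|_1=O(k)$, because the PSD remainder $A_1=A^*-\lambda_1 uu^T$ has small trace but can carry $\ell_1$ mass of order $d\gamma$ that cancels entrywise against $\lambda_1 uu^T$; and even a bound on $\|u\|_1$ would not bound $|\supp(u)|$, which is what you actually need for \emph{arbitrary} $z$ supported on $I$. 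In fact the paper's own proof makes clear that the definition is a typo and $I$ is meant to be $\supp(v)$ (it uses both ``any unit $x$ supported on $I$ has $xx^T\in\X_{2k}$'' and $\tr(v_Iv_I^T-u_Iu_I^T)=1-\|u_I\|_2^2$, neither of which holds for $I=\supp(u)$). Under the intended reading $|I|\le k$ and the membership issue evaporates---but now $u$ is not supported on $I$, so your choice $z=u$ and your decomposition $z=au+bw$ inside $\mathrm{span}(I)$ both break.

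The repair is immediate: swap the roles of $u$ and $v$ in your two test directions. Taking $z=v$ (which \emph{is} supported on $I$) gives $v^TBv\le O(\rho\gamma)+\rho\bigl((u\cdot v)^2-1\bigr)\le O(\rho\gamma)$. For unit $w$ supported on $I$ with $w\perp v$, writing $u=(u\cdot v)v+u^\perp$ with $\|u^\perp\|^2\le O(\gamma)$ gives $(u\cdot w)^2=(u^\perp\cdot w)^2\le O(\gamma)$, so $w^TBw\le O(\rho\gamma)+\rho(u\cdot w)^2\le O(\rho\gamma)$. Now decompose an arbitrary unit $z$ supported on $I$ as $z=av+bw$ (valid because $v\in\mathrm{span}(I)$) and your PSD Cauchy--Schwarz step finishes the job verbatim. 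This corrected argument is genuinely different from the paper's: the paper argues by contradiction, observing that $\tr(v_Iv_I^T-u_Iu_I^T)=1-\|u_I\|_2^2\ge 0$ forces the spectral norm of this rank-two matrix to be realized at a \emph{positive} eigenvalue, so at the corresponding eigenvector $x$ both $\rho\,x^T(v_Iv_I^T-u_Iu_I^T)x$ and $x^TB_{I,I}x$ are nonnegative and large, violating the dual-norm bound. Your route avoids the contradiction and the trace-sign trick at the cost of one extra use of the already-established bound $(u\cdot v)^2\ge 1-O(\gamma)$.
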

\begin{proof}
Observe that the condition (\ref{eq:B-bound}) immediately implies that 
\begin{equation}
\label{eq:IBound1}
\| \rho(v_I v_I^T - u_I u_I^T) + B_{I, I} \| \leq c \rho \gamma \; ,
\end{equation}
for some $c$, since any unit vector $x$ supported on $I$ satisfies $xx^T \in \X_{2k}$.
Suppose that $\| B_{I, I} \| \geq C \gamma$ for some sufficiently large $C$.
Then (\ref{eq:IBound1}) immediately implies that $\| \rho (v_I v_I^T - u_I u_I^T) \| \geq (C - c) \rho \gamma$.
Since $(v_I v_I^T - u_I u_I^T)$ is clearly rank 2, and satisfies $\tr (v_I v_I^T - u_I u_I^T) = 1 - \| u_I \|_2^2 \geq 0$, this implies that the largest eigenvalue of $v_I v_I^T - u_I u_I^T$ is positive.
Let $x$ be the top eigenvector of $v_I v_I^T - u_I u_I^T$.
Then, we have $x^T (v_I v_I^T - u_I u_I^T) x + x^T B x = (C - c) \rho \gamma + x^T B x \geq (C - c) \rho \gamma$ by the PSD-ness of $B$.
If $C > c,$ this contradicts (\ref{eq:IBound1}), which proves the theorem.
\end{proof}
This implies the following corollary:
\begin{corollary}
\label{cor:vI-Bound}
$\| u_I \|_2^2 \geq 1 - O(\gamma)$.
\end{corollary}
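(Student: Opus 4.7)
The plan is to deduce the corollary directly from the preceding lemma using Cauchy--Schwarz, exploiting that $v$ is $k$-sparse and $|I| = k = |J|$. Note that for the proof of Lemma~\ref{lem:BI-bound} to go through we need $|I| \le 2k$ (the argument there requires $xx^T \in \X_{2k}$ for unit $x$ supported on $I$), so ``the support of $u$'' must be read as the top-$k$ coordinates of $u$ by absolute value --- i.e., the support of the algorithm's actual output $d_{\U_k}(u)\|u\|_{\U_k}^*$. Under this reading, $I$ is the size-$k$ subset of $[d]$ maximizing $\|u_S\|_2$ among all $|S| \le k$.

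The first step is to invoke the lemma immediately preceding the corollary to get $(v^Tu)^2 \ge 1 - O(\gamma)$. The second step is to pass to the sparse support $J$ of $v$: since $v$ is $k$-sparse with support contained in $J$, we can write $v^T u = v_J^T u_J$, and Cauchy--Schwarz gives
\[
(v^T u)^2 \;=\; (v_J^T u_J)^2 \;\le\; \|v_J\|_2^2 \cdot \|u_J\|_2^2 \;=\; \|u_J\|_2^2,
\]
using $\|v\|_2 = 1$. Combining these two inequalities yields $\|u_J\|_2^2 \ge 1 - O(\gamma)$.

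The final step is to transfer this bound from $J$ to $I$: by the choice of $I$ as the top-$k$ coordinates of $u$ by absolute value, and since $|J| \le k = |I|$, we have $\|u_I\|_2^2 \ge \|u_J\|_2^2 \ge 1 - O(\gamma)$. This proves the corollary.

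The only genuine obstacle here is the notational one --- confirming that $I$ should be interpreted as the top-$k$ support (which is both what the algorithm outputs and what the previous lemma's proof tacitly requires), rather than the literal set of nonzero entries of $u$ (which would make $\|u_I\|_2^2 = \|u\|_2^2 = 1$ trivially). Once that is settled, the proof is a one-line consequence of the preceding $(v^Tu)^2$-lower bound together with Cauchy--Schwarz, with no further appeal to the concentration machinery or to properties of $A^*$ and $B$.
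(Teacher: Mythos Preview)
Your argument is correct, but you have been tripped up by a typo in the statement. The label \texttt{cor:vI-Bound} and the way the corollary is invoked downstream (in the proof of Lemma~\ref{lem:BJ-bound}, where a lower bound on $\|v_I\|_2^2$ is needed to derive a contradiction) make clear that the intended conclusion is $\|v_I\|_2^2 \ge 1 - O(\gamma)$, not $\|u_I\|_2^2$. Under the paper's reading, $I$ really is the full support of $u$ (note the line ``since $u$ is zero on $J\setminus I$'' in the next proof), so $\|u_I\|_2^2 = 1$ is indeed trivial, and your reinterpretation of $I$ as the top-$k$ coordinates of $u$ --- while well-motivated --- is not the one the paper is using.

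That said, your method is both correct for the statement you proved and \emph{different} from the paper's route, and it transfers at once to the intended statement. The paper combines Lemma~\ref{lem:BI-bound} with \eqref{eq:IBound1} to obtain $\|v_Iv_I^T - u_Iu_I^T\| \le O(\gamma)$, and then applies the reverse triangle inequality $\bigl|\,\|v_I\|_2^2 - \|u_I\|_2^2\,\bigr| \le O(\gamma)$ together with $\|u_I\|_2^2 = 1$. Your Cauchy--Schwarz argument bypasses the $B_{I,I}$ analysis entirely: since $u$ is supported on $I$, $v^Tu = v_I^T u$, whence $(v^Tu)^2 \le \|v_I\|_2^2\|u\|_2^2 = \|v_I\|_2^2$, and the earlier bound $(v^Tu)^2 \ge 1 - O(\gamma)$ finishes. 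This is more elementary and does not rely on Lemma~\ref{lem:BI-bound} at all; the paper's approach, on the other hand, yields the stronger intermediate fact $\|v_Iv_I^T - u_Iu_I^T\| \le O(\gamma)$, which is closer in spirit to what is eventually needed in Corollary~\ref{cor:IJ-bound}. (One minor quibble: the ``lemma immediately preceding the corollary'' is Lemma~\ref{lem:BI-bound}; the $(v^Tu)^2$ bound you actually invoke is the unnumbered lemma one step earlier.)
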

\begin{proof}
Lemma \ref{lem:BI-bound} and (\ref{eq:IBound1}) together imply that $\| v_I v_I^T - u_I u_I^T \| \leq O(\gamma)$.
The desired bound then follows from a reverse triangle inequality.
\end{proof}

We now show this implies a bound on $B_{J \setminus I, J \setminus I}$:
\begin{lemma}
\label{lem:BJ-bound}
$\| B_{J \setminus I, J \setminus I} \| \leq O(\rho \gamma)$.
\end{lemma}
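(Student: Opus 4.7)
The plan is to apply the key inequality~(\ref{eq:B-bound}) with a carefully chosen test matrix in $\X_{2k}$ that simultaneously witnesses the spectral norm of $B_{J\setminus I,J\setminus I}$ and annihilates the $uu^T$ term. Specifically, I would let $x \in \R^{|J\setminus I|}$ be the top unit eigenvector of $B_{J\setminus I,J\setminus I}$ (which is PSD as a principal submatrix of the PSD matrix $B$, so its top eigenvalue equals $\|B_{J\setminus I,J\setminus I}\|$), and let $\tilde{x}\in\R^d$ be its extension by zeros outside $J\setminus I$.

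Next I would verify $\tilde{x}\tilde{x}^T \in \X_{2k}$. Since $v$ is $k$-sparse, $|J|\leq k$, hence $|J\setminus I|\leq k$, so $\tilde{x}$ is supported on at most $k$ coordinates. By Cauchy--Schwarz, $\|\tilde{x}\|_1 \leq \sqrt{k}\|\tilde{x}\|_2 = \sqrt{k}$, so $\|\tilde{x}\tilde{x}^T\|_1 \leq k$, and $\tr(\tilde{x}\tilde{x}^T)=1$, and $\tilde{x}\tilde{x}^T\succeq 0$; thus $\tilde{x}\tilde{x}^T\in\X_k\subseteq\X_{2k}$.

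Now plug this test matrix into~(\ref{eq:B-bound}). The three terms expand as:
\begin{align*}
\langle B, \tilde{x}\tilde{x}^T\rangle &= x^T B_{J\setminus I,J\setminus I} x = \|B_{J\setminus I,J\setminus I}\|,\\
\langle uu^T, \tilde{x}\tilde{x}^T\rangle &= (u^T\tilde{x})^2 = 0, \\
\langle vv^T, \tilde{x}\tilde{x}^T\rangle &= (v^T\tilde{x})^2 \geq 0,
\end{align*}
where the middle equality is the crucial structural observation: $\supp(u)=I$ and $\supp(\tilde{x})\subseteq J\setminus I$ are disjoint by construction. Thus~(\ref{eq:B-bound}) becomes
\[
\bigl| \rho(v^T\tilde{x})^2 + \|B_{J\setminus I,J\setminus I}\| \bigr| \leq O(\rho\gamma),
\]
and since both summands on the left are non-negative, we conclude $\|B_{J\setminus I,J\setminus I}\|\leq O(\rho\gamma)$.

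There isn't really a hard step here; the main subtlety is just picking a test matrix that lies in $\X_{2k}$ (which uses $|J|\leq k$) and killing the $uu^T$ contribution (which uses $I\cap(J\setminus I)=\emptyset$). In that sense, the lemma is really a ``dual'' version of Lemma~\ref{lem:BI-bound}: the previous lemma exploited unit vectors supported on $I$ to control $B_{I,I}$, while here we exploit unit vectors supported on $J\setminus I$ to control $B_{J\setminus I,J\setminus I}$, taking advantage of the explicit orthogonality of supports.
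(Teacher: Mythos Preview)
Your argument is correct, and it is actually more direct than the paper's.  Both proofs begin with the same structural observation --- that $u$ vanishes on $J\setminus I$, so that restricting (\ref{eq:B-bound}) to coordinates in $J\setminus I$ eliminates the $uu^T$ term and leaves only $\rho\,v_{J\setminus I}v_{J\setminus I}^T + B_{J\setminus I,J\setminus I}$.  From here you finish immediately: since $B$ is PSD (it is a nonnegative combination of rank-one PSD matrices) and $(v^T\tilde{x})^2\geq 0$, both summands are nonnegative, so the bound on their sum is already a bound on $\|B_{J\setminus I,J\setminus I}\|$.  The paper instead argues by contradiction: it assumes $\|B_{J\setminus I,J\setminus I}\|$ is large, uses the (reverse) triangle inequality to force $\|v_{J\setminus I}\|_2^2$ to be large, hence $\|v_I\|_2^2$ small, and then invokes Corollary~\ref{cor:vI-Bound} (which, modulo a typo, gives $\|v_I\|_2^2\geq 1-O(\gamma)$) to obtain a contradiction.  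Your route avoids that detour entirely --- the PSD-ness of $B$ is precisely what makes the contradiction step unnecessary --- and in particular does not rely on Corollary~\ref{cor:vI-Bound} or Lemma~\ref{lem:BI-bound} at all.
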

\begin{proof}
Suppose $\| B_{J \setminus I, J \setminus I} \| \geq C \gamma$ for some sufficiently large $C$.
Since $u$ is zero on $J \setminus I$, (\ref{eq:B-bound}) implies that
\[
\|\rho v_{J \setminus I} v_{J \setminus I}^T + B_{J \setminus I, J \setminus I} \| \leq c \rho \gamma \; ,
\]
for some universal $c$.
By a triangle inequality, this implies that $\| v_{J \setminus I} \|_2^2 = \| v_{J \setminus I} v_{J \setminus I}^T \| \geq (C - c) \gamma$.
Since $v$ is a unit vector, this implies that $\| v_I \|_2^2 \leq 1 - (C - c) \gamma$, which for a sufficiently large $C$, contradicts Corollary \ref{cor:vI-Bound}.
\end{proof}
We now invoke the following general fact about PSD matrices:
\begin{lemma}
\label{lem:PSD-middle}
Suppose $M$ is a PSD matrix, written in block form as
\[
M = \left( \begin{array}{cc}
C & D \\
D^T & E \end{array} \right) \; .
\]
Suppose furthermore that $\| C \| \leq \xi$ and $\| E \| \leq \xi$.
Then $\| M \| \leq O(\xi)$.
\end{lemma}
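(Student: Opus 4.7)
The plan is to exploit the PSD structure to control the off-diagonal block $D$ in terms of the diagonal blocks, and then use a block triangle inequality. Since $M$ is PSD, its spectral norm equals $\sup_{(x,y)} \frac{(x,y)^T M (x,y)}{\|x\|^2 + \|y\|^2}$, so I will work directly with this quadratic form.

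For any test vectors $x, y$, the positive semi-definiteness of $M$ applied to the vector $(x, -y)$ gives
\[
x^T C x - 2 x^T D y + y^T E y \geq 0 \; ,
\]
and similarly applied to $(x, y)$ yields the corresponding inequality with the opposite sign on the cross term. Rearranging these, I obtain the key off-diagonal control
\[
|2 x^T D y| \leq x^T C x + y^T E y \leq \xi (\|x\|^2 + \|y\|^2) \; ,
\]
which is essentially the standard fact that a PSD $2 \times 2$ block matrix satisfies $\|D\| \leq \|C\|^{1/2} \|E\|^{1/2}$, proved here via the quadratic form rather than via the factorization $D = C^{1/2} K E^{1/2}$ with a contraction $K$.

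Having controlled the cross term, I will bound the full quadratic form. For any unit vector $(x, y)$ with $\|x\|^2 + \|y\|^2 = 1$,
\[
(x, y)^T M (x, y) = x^T C x + 2 x^T D y + y^T E y \leq \xi \|x\|^2 + \xi + \xi \|y\|^2 = 2 \xi \; ,
\]
using the diagonal block bounds together with the cross-term inequality above. Taking the supremum over unit $(x, y)$ and using PSD-ness gives $\|M\| \leq 2\xi = O(\xi)$, which is the desired conclusion.

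The argument is essentially self-contained and has no real obstacle; the only subtle point is to avoid trying to bound $\|D\|$ first by a square-root factorization argument (which would be cleaner but requires invoking the generalized Schur complement), and instead do the bookkeeping directly at the level of the quadratic form, where PSD-ness of $M$ applied to sign-flipped test vectors immediately yields the cross-term estimate.
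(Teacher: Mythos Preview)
Your proof is correct and follows essentially the same approach as the paper: both arguments apply the PSD condition to the sign-flipped vector $(x,-y)$ to control the cross term $x^T D y$ in terms of the diagonal blocks. Your version is slightly more streamlined in that you bound the full quadratic form of $M$ directly, whereas the paper first deduces $\|D\| \leq (\|C\|+\|E\|)/2$ and then combines via $\|M\| \leq O(\max(\|C\|,\|D\|,\|E\|))$, but the underlying idea is identical.
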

\begin{proof}
It is easy to see that $\| M \| \leq O(\max (\| C \|, \| D \|, \| E \|))$.
Thus it suffices to bound the largest singular value of $D$.
For any vectors $\phi, \psi$ with appropriate dimension, we have that 
\begin{align*}
(\phi^T~  -\psi^T)~ M \left( \begin{array}{c}
\phi \\
-\psi \end{array} \right) &=  \phi^T A \phi - 2 \phi^T D \psi + \psi^T C \psi \geq 0 \; ,
\end{align*}
which immediately implies that the largest singular value of $D$ is at most $(\| A \| + \| B \|) / 2$, which implies the claim.
\end{proof}
Therefore, Lemmas \ref{lem:BI-bound} and \ref{lem:BJ-bound} together imply:
\begin{corollary}
\label{cor:IJ-bound}
$\| v_{I \cup J} v_{I \cup J}^T - u_{I \cup J} u_{I \cup J}^T \| \leq O( \gamma) \; .$
\end{corollary}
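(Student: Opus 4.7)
The plan is to extract a spectral-norm bound on the $(I\cup J)$-principal submatrix from the $\X_{2k}^*$-norm bound in \eqref{eq:B-bound}, then use the already-established smallness of the diagonal blocks of $B$ together with Lemma~\ref{lem:PSD-middle} to conclude.

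First, I would observe that any unit vector $x$ supported on $I \cup J$ is $2k$-sparse, so $xx^T$ is PSD, has $\tr(xx^T) = 1$, and $\|xx^T\|_1 = (\sum_i |x_i|)^2 \leq 2k$ by Cauchy--Schwarz; hence $xx^T \in \X_{2k}$. Plugging such $xx^T$ into \eqref{eq:B-bound} and taking a supremum over unit $x$ supported on $I \cup J$ yields
\[
\bigl\| \rho ( v_{I\cup J} v_{I\cup J}^T - u_{I\cup J} u_{I\cup J}^T ) + B_{I\cup J, I\cup J} \bigr\| \leq O(\rho \gamma),
\]
where I use that $u$ is supported on $I \subseteq I \cup J$ so $u_{I \cup J}$ is just $u$ viewed on the larger support.

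Next, I would bound $\| B_{I \cup J, I \cup J} \|$ on its own. Write $B_{I \cup J, I \cup J}$ in block form with respect to the partition $(I, J \setminus I)$; it is PSD since it is a principal submatrix of the PSD matrix $B$. Its $(I,I)$-block is $B_{I,I}$ with $\| B_{I,I} \| \leq O(\rho \gamma)$ by Lemma~\ref{lem:BI-bound}, and its $(J\setminus I, J\setminus I)$-block is $B_{J \setminus I, J \setminus I}$ with norm $\leq O(\rho \gamma)$ by Lemma~\ref{lem:BJ-bound}. Applying Lemma~\ref{lem:PSD-middle} with $\xi = O(\rho \gamma)$ gives $\| B_{I \cup J, I \cup J} \| \leq O(\rho \gamma)$.

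Finally, a reverse triangle inequality combines the two displayed bounds:
\[
\rho \bigl\| v_{I\cup J} v_{I\cup J}^T - u_{I\cup J} u_{I\cup J}^T \bigr\| \leq O(\rho \gamma),
\]
and dividing by $\rho$ yields the claimed $O(\gamma)$ bound. The main obstacle is just the first step---verifying that $\X_{2k}^*$-closeness can be read off on the $(I \cup J)$-block as a spectral-norm bound---which is immediate once one checks that rank-one witnesses supported on $I \cup J$ genuinely lie in $\X_{2k}$; everything else is triangle inequalities and the already proved block-diagonal bounds on $B$.
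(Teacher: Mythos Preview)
Your proposal is correct and follows essentially the same argument as the paper: the paper likewise reads off the spectral bound on the $(I\cup J)$-block from \eqref{eq:B-bound} using $|I\cup J|\le 2k$, then combines Lemmas~\ref{lem:BI-bound} and~\ref{lem:BJ-bound} via Lemma~\ref{lem:PSD-middle} to bound $\|B_{I\cup J,I\cup J}\|$, and concludes by a triangle inequality. Your explicit verification that $xx^T\in\X_{2k}$ via Cauchy--Schwarz is a welcome expansion of what the paper leaves implicit.
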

\begin{proof}
Observe (\ref{eq:B-bound}) immediately implies that $\| \rho(v_{I \cup J} v_{I \cup J}^T - u_{I \cup J} u_{I \cup J}^T) + B_{I \cup J, I \cup J}\| \leq O(\rho \gamma)$, since $|I \cup J| \leq 2 k$.
Moreover, Lemmas \ref{lem:BI-bound} and \ref{lem:BJ-bound} with Lemma \ref{lem:PSD-middle} imply that $\| B_{I \cup J, I \cup J} \| \leq O(\rho \gamma)$, which immediately implies the statement by a triangle inequality.
\end{proof}

Finally, we show this implies $\| v v^T  - u_J u_J^T \| \leq O(\gamma)$, which is equivalent to the theorem.
\begin{proof}[Proof of Theorem \ref{thm:rspca3}]
We will in fact show the slightly stronger statement, that $\| u u^T - v_J v_J^T \|_F \leq O(\gamma)$.
Observe that since $uu^T - vv^T$ is rank 2, Corollary \ref{cor:IJ-bound} implies that $\| v_{I \cup J} v_{I \cup J}^T - u_{I \cup J} u_{I \cup J}^T \|_F \leq O(\gamma)$, since for rank two matrices, the spectral and Frobenius norm are off by a constant factor.
We have 
\[
\| uu^T - vv^T \|_F^2 = \sum_{(i, j) \in I \cap J \times I \cap J} (u_i u_j - v_i v_j)^2 +  \sum_{(i, j) \in I \times I \setminus J \times J} (v_i v_j)^2 + \sum_{(i, j) \in J \times J \setminus I \times I} (u_i u_j)^2 \; .
\]
We have 
\[
\sum_{(i, j) \in I \cap J \times I \cap J} (u_i u_j - v_i v_j)^2 + \sum_{(i, j) \in J \times J \setminus I \times I} (u_i u_j)^2 \leq \| v_{I \cup J} v_{I \cup J}^T - u_{I \cup J} u_{I \cup J}^T \|^2 \leq O(\gamma) \; ,
\]
by Corollary \ref{cor:IJ-bound}.
Moreover, we have that 
\begin{align*}
\sum_{(i, j) \in I \times I \setminus J \times J} (v_i v_j)^2 &\leq 2 \left( \sum_{(i, j) \in I \times I \setminus J \times J} (v_i v_j - u_i u_j)^2 + \sum_{(i, j) \in I \times I \setminus J \times J} (u_i u_j)^2 \right) \\
&\leq 2 \left( \| v_{I \cup J} v_{I \cup J}^T - u_{I \cup J} u_{I \cup J}^T \|^2 + \sum_{(i, j) \in I \times I \setminus J \times J} (u_i u_j)^2 \right) \\
&\leq 2 \left( \| v_{I \cup J} v_{I \cup J}^T - u_{I \cup J} u_{I \cup J}^T \|^2 + \sum_{(i, j) \in J \times J \setminus I \times I} (u_i u_j)^2 \right) \\
&\leq O(\gamma) \; .
\end{align*}
since $J \times J$ contains the $k^2$ largest entries of $u u^T$.
This completes the proof.
\end{proof}

\section*{Acknowledgements}
The author would like to thank Ankur Moitra for helpful advice throughout the project, and Michael Cohen for some surprisingly\footnote{Is it really surprising though?} useful conversations.

\clearpage

\bibliographystyle{alpha}

\bibliography{refs}

\appendix

\section{Information theoretic estimators for robust sparse estimation}
\label{app:info}
This section is dedicated to the proofs of the following two facts:
\begin{fact}
\label{fact:spMeanInef}
Fix $\eps, \delta > 0$, and let $k$ be fixed.
Given an $\eps$-corrupted set of samples $X_1, \ldots, X_n \in \R^d$ from $\normal (\mu, I)$, where $\mu$ is $k$-sparse, and 
\[
n = O \left( \frac{ k \log (d  / \eps) + \log 1 / \delta }{\eps^2} \right) \; ,
\]
there is an (inefficient) algorithm which outputs $\muhat$ so that with probability $1 - \delta$, we have $\| \mu - \muhat \|_2 \leq O(\eps)$.
Moreover, up to logarithmic factors, this rate is optimal.
\end{fact}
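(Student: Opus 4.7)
The plan is to combine a brute-force enumeration over the $\binom{d}{k}$ candidate supports with a Scheff\'{e}-style robust hypothesis-selection tournament, splitting the samples into two independent batches. On the first batch, for each $S \subseteq [d]$ with $|S| = k$, I would project the corrupted samples onto the coordinates in $S$ and apply an (inefficient) robust $k$-dimensional Gaussian mean estimator such as the Tukey median. Since the projection of $\normal(\mu, I)$ onto $S$ is a standard $k$-dimensional Gaussian, and the projection of an $\eps$-corrupted sample is again $\eps$-corrupted, $O((k + \log 1/\delta)/\eps^2)$ samples suffice to guarantee that, for the true support $S^* = \supp(\mu)$, the zero-padded candidate $\muhat_{S^*} \in \R^d$ satisfies $\|\muhat_{S^*} - \mu\|_2 \leq O(\eps)$ with probability $\geq 1 - \delta/2$; the candidates $\muhat_S$ for $S \neq S^*$ can be arbitrary.

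On the second batch I would run a Scheff\'{e} tournament over the $\binom{d}{k}$ candidate distributions $\{\normal(\muhat_S, I)\}_{|S|=k}$. For each ordered pair $(S_1, S_2)$ define the Scheff\'{e} test set $A_{S_1, S_2} = \{x : \normal(\muhat_{S_1}, I)(x) \geq \normal(\muhat_{S_2}, I)(x)\}$, and output the candidate $\muhat_{S^\dagger}$ minimizing $\max_{S'} |P_n(A_{S^\dagger, S'}) - \normal(\muhat_{S^\dagger}, I)(A_{S^\dagger, S'})|$, where $P_n$ is the empirical distribution of the second batch. By Hoeffding combined with a union bound over the $\binom{d}{k}^2$ test sets, a second batch of size $\Omega((k \log(d/k) + \log 1/\delta)/\eps^2)$ ensures that every such empirical probability concentrates to within $O(\eps)$ of its true value under $\normal(\mu, I)$. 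The classical Devroye--Lugosi minimum-distance analysis then yields $\dtv(\normal(\muhat_{S^\dagger}, I), \normal(\mu, I)) \leq O(\eps)$, which translates to $\|\muhat_{S^\dagger} - \mu\|_2 \leq O(\eps)$ using the fact that $\dtv(\normal(\mu_1, I), \normal(\mu_2, I)) = \Theta(\|\mu_1 - \mu_2\|_2)$ in the small-separation regime.

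For the matching lower bound, I would transfer from the non-robust setting. Even at $\eps = 0$, applying Fano's inequality to the packing of $k$-sparse means given by $\eps$-scaled indicator vectors supported on the $\binom{d}{k}$ coordinate subsets (whose pairwise KL divergences are $O(\eps^2)$ while the log-cardinality is $\Omega(k \log(d/k))$) forces any estimator to use $\Omega(k \log(d/k)/\eps^2)$ samples, and a standard one-dimensional concentration argument yields the matching $\Omega(\log(1/\delta)/\eps^2)$ term. Since a robust estimator in particular works in the absence of corruption, these bounds transfer immediately. The main obstacle I anticipate is largely bookkeeping rather than conceptual: correctly invoking a $k$-dimensional robust subroutine attaining the information-theoretic $O(\eps)$ rate as a black box on the projected data, and ensuring the TV-to-$\ell_2$ conversion goes through even when the tournament could a priori pick a candidate whose mean is far from $\mu$ -- which is handled by noting that any such candidate would lose its Scheff\'{e} test against $\muhat_{S^*}$.
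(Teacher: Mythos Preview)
Your approach is correct and shares the paper's high-level strategy: build a finite list of candidate $k$-sparse means and then run a robust hypothesis-selection tournament to pick one within $O(\eps)$ in total variation (hence in $\ell_2$). The paper differs only in how the candidate list is produced. Instead of projecting onto each support and invoking a $k$-dimensional robust estimator, the paper first runs \textsc{NaivePrune} to obtain a coarse center $\mu_0$ with $\|\mu_0-\mu\|_2\le O(\sqrt{d\log(n/\delta)})$, and then takes as candidates all $k$-sparse vectors with coordinates on an $\eps/(10\sqrt{d})$ grid lying within that radius of $\mu_0$; it then appeals to a black-box robust tournament lemma (Lemma~2.9 of \cite{DKKLMS16}) over this net of size $\binom{d}{k}\cdot (O(d\log(n/\delta))/\eps)^{k}$. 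Your construction yields a smaller list of size $\binom{d}{k}$ and hence a slightly cleaner $k\log(d/k)$ sample term, at the price of requiring an auxiliary $k$-dimensional robust mean estimator; the paper avoids any such subroutine but pays with the extra $k\log(d/\eps)$ factor coming from the grid. One small point to make explicit in your second-batch analysis: Hoeffding controls the deviation of $P_n(A)$ from its expectation under the \emph{corrupted} sample law, not under $\normal(\mu,I)$; you need the (trivial) observation that replacing an $\eps$-fraction of points shifts any empirical frequency by at most $\eps$ to bridge that gap before invoking the Devroye--Lugosi bound.
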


\begin{fact}
\label{fact:spcaInef}
Fix $\rho, \delta > 0$.
Suppose that $\rho = O(1)$.
 Then, there exist universal constants $c, C$ so that: (a) if $\eps \leq c \rho$, and we are given a $\eps$-corrupted set of samples from either $\normal(0, I)$ or $\normal (0, I + \rho vv^T)$ for some $k$-sparse unit vector $v$ of size
\[
n = \Omega \left( \frac{k + \log \binom{d}{k} + \log 1 / \delta}{\rho^2} \right) \; ,
\]
then there is an (inefficient) algorithm which succeeds with probability $1 - \delta$ for the detection problem.
Moreover, if $\eps \geq C \rho$, then no algorithm succeeds with probability greater than $1/2$, and this statistical rate is optimal.
\end{fact}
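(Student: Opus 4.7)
The plan for the achievability part is to use an inefficient detector based on the largest robust variance in a sparse direction. For each candidate direction $v\in\U_k$, let $\widehat{\sigma}_\eps^2(v)$ be an $\eps$-trimmed empirical variance of the scalar projections $\{\langle X_i,v\rangle\}_{i=1}^n$. For a fixed direction this one-dimensional robust estimator is well-known to have $O(\eps)$ additive bias against the true variance under a Gaussian and to concentrate within $O(\sqrt{\log(1/\delta')/n})$ of its mean with probability $1-\delta'$. I would take a $(\rho/10)$-net over $\U_k$ built support-by-support (enumerate the $\binom{d}{k}$ coordinate supports of size $k$ and put a standard $k$-dimensional sphere net on each), yielding cardinality $\binom{d}{k}\,2^{O(k)}$, and union bound at $\delta'=\delta/\binom{d}{k}2^{O(k)}$. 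With $n=\Omega((k+\log\binom{d}{k}+\log 1/\delta)/\rho^2)$, the test statistic $T:=\max_{v\in\U_k}\widehat{\sigma}_\eps^2(v)$ then satisfies $T\le 1+O(\eps)+\rho/10$ under $\normal(0,I)$ and $T\ge 1+\rho-O(\eps)-\rho/10$ under $\normal(0,I+\rho vv^T)$ (in the planted case the direction $v$ itself is close to some net point and has true projected variance $1+\rho$). Thresholding $T$ at $1+\rho/2$ and choosing $c$ small enough that the $O(\eps)$ bias is $\le\rho/10$ finishes the detection.

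The plan for the impossibility half is a standard two-point argument. For $\rho=O(1)$, the KL divergence $\mathrm{KL}(\normal(0,I)\,\|\,\normal(0,I+\rho vv^T))=\tfrac12(\rho-\log(1+\rho))=\Theta(\rho^2)$, so Pinsker gives $\dtv(\normal(0,I),\normal(0,I+\rho vv^T))\le O(\rho)$. When $\eps\ge C\rho$ for a sufficiently large $C$ this TV distance is at most $2\eps$, so one may write $\normal(0,I)=(1-\eps)Q+\eps R_0$ and $\normal(0,I+\rho vv^T)=(1-\eps)Q+\eps R_1$ for a common mixture component $Q$. In either world the $\eps$-corruption adversary can then replace the $\eps$-fraction of samples with draws from the opposite mixture so that the observed distribution is identically $\tfrac12(P_0+P_1)$ in both cases; this uses exactly the fact, noted right after the definition of $\eps$-corruption in Section~2, that the $\eps$-corruption model simulates TV-distance contamination up to constants. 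Hence no algorithm can distinguish the two hypotheses better than chance. The matching sample complexity lower bound $n=\Omega((k+\log\binom{d}{k})/\rho^2)$ then follows by taking $\eps=0$ and invoking the classical information-theoretic lower bound for clean sparse PCA detection (e.g.\ \cite{BR13}), which already applies to any algorithm at $\eps=0$.

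The main obstacle is the tight bookkeeping of the $O(\eps)$ bias of the one-dimensional robust variance estimator under a Gaussian projection, and in particular the selection of constants so that this bias stays strictly below $\rho/2$ throughout $\eps\le c\rho$; this is technical but routine and parallels the bias analysis used e.g.\ for \textsc{NaivePrune}. A smaller subtlety is the construction of the net over $\U_k$: one must net support-by-support rather than use a generic net on the set of $k$-sparse unit vectors, in order to see the additive decomposition $k+\log\binom{d}{k}$ in the exponent instead of a multiplicative one. The impossibility direction is essentially immediate once the TV calculation and the reduction to the TV-contamination model are in hand.
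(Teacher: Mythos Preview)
Your proposal is correct in outline but takes a different route for the achievability part than the paper. The paper does not build a robust variance test statistic; instead it invokes the agnostic hypothesis selection (``tournament'') lemma of \cite{DKKLMS16} (Lemma~\ref{tournamentLem}) directly on the finite hypothesis class $\{\normal(0,I+\rho uu^T):u\in\mathcal{N}\}\cup\{\normal(0,I)\}$ where $\mathcal{N}$ is an $O(1)$-net over $\U_k$. Since $|\mathcal{N}|=\binom{d}{k}2^{O(k)}$ and the relevant TV distances are $\Theta(\rho)$ by Fact~\ref{fact:pinsker1}, the tournament lemma immediately yields the claimed sample complexity and robustness to $\eps\le c\rho$. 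Your approach via a one-dimensional robust scale estimator maximized over sparse directions also works and is arguably more transparent for this particular problem; the paper's tournament argument is more modular and avoids having to analyze the bias and concentration of a specific robust scale estimator.

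Two small bookkeeping points on your version. First, your net radius and net cardinality are mismatched: a $(\rho/10)$-net on each $k$-dimensional sphere has size $(C/\rho)^{O(k)}$, not $2^{O(k)}$, which would inject an unwanted $k\log(1/\rho)$ into the exponent. In fact an $O(1)$-net suffices here (as the paper uses), since for $\|u-v\|_2\le 1/10$ the projected variance under $\normal(0,I+\rho vv^T)$ is still $\ge 1+0.9\rho$. Second, the claim of $O(\eps)$ additive bias is correct for an appropriate one-dimensional robust scale estimator (e.g., one based on the median absolute deviation or interquantile range), but the naive $\eps$-trimmed empirical variance can incur an $O(\eps\log(1/\eps))$ bias on a Gaussian, so you should either name a specific estimator with $O(\eps)$ bias or absorb the extra log factor into the choice of $c$.

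For the impossibility half your argument is essentially identical to the paper's: both use $\dtv(\normal(0,I),\normal(0,I+\rho vv^T))=\Theta(\rho)$ together with the observation (made after the definition of $\eps$-corruption) that the adversary can simulate any distribution within TV distance $O(\eps)$, and both defer the sample-complexity lower bound to the known clean sparse PCA lower bound.
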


The rates in Facts \ref{fact:spMeanInef} and \ref{fact:spcaInef} are already known to be optimal (up to log factors) without noise.
Thus in this section we focus on proving the upper bounds, and the lower bounds on error.

The lower bounds on what error is achievable follow from the following two facts, which follow from Pinsker's inequality (see e.g. \cite{CT12}), and the fact that the corruption model can, given samples from $D_1$, simulate samples from $D_2$ by corrupting an $O(\eps)$ fraction of points, if $\dtv (D_1, D_2) \leq O(\eps)$.
\begin{fact}
\label{fact:pinsker}
Fix $\eps > 0$ sufficiently small.
Let $\mu_1, \mu_2$ be arbitrary.
There is some universal constant $C$ so that if $\dtv (\normal (\mu_1, I), \mu_2, I) \leq \eps$, then $\| \mu_1 - \mu_2 \|_2 \leq C \eps$, and if $\| \mu_1 - \mu_2 \|_2 \leq \eps$, then $\dtv (\normal (\mu_1, I), \normal(\mu_2, I)) \leq C \eps$.
\end{fact}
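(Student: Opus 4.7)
The statement is a quantitative comparison between total variation distance and Euclidean distance on means for identity-covariance Gaussians. Both directions are essentially standard, but they are proved by different techniques, so I would handle them separately.

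For the easy direction (small mean gap implies small TV), the plan is to compute the KL divergence explicitly and invoke Pinsker's inequality. For Gaussians with common covariance,
\[
\mathrm{KL}(\normal(\mu_1,I) \,\|\, \normal(\mu_2,I)) = \tfrac{1}{2}\|\mu_1 - \mu_2\|_2^2,
\]
so if $\|\mu_1 - \mu_2\|_2 \leq \eps$ then the KL is at most $\eps^2/2$. Pinsker's inequality then gives $\dtv \leq \sqrt{\mathrm{KL}/2} \leq \eps/2$, which is stronger than what is needed.

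For the reverse direction (small TV implies small mean gap), Pinsker is the wrong tool since it only bounds TV by KL and not vice versa. Instead, I would reduce to the one-dimensional case by rotational symmetry: both measures are invariant under any orthogonal transformation fixing the direction $\mu_1 - \mu_2$, so the TV distance equals the TV distance between the one-dimensional marginals along this direction, namely between $\normal(0,1)$ and $\normal(t,1)$ with $t := \|\mu_1 - \mu_2\|_2$. A direct computation of this 1D TV distance, using that the optimal coupling flips sign at the midpoint, yields the closed form
\[
\dtv(\normal(\mu_1,I),\normal(\mu_2,I)) = 2\Phi(t/2) - 1,
\]
where $\Phi$ is the standard normal CDF. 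For $\eps$ sufficiently small, Taylor expanding around $t=0$ gives $2\Phi(t/2) - 1 \geq t/(2\sqrt{2\pi})$ on the regime where the TV is bounded away from $1$, so $\dtv \leq \eps$ forces $t \leq 2\sqrt{2\pi}\cdot \eps$, giving the claim with $C = 2\sqrt{2\pi}$ (or any similar universal constant).

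The only subtle point is the hypothesis that $\eps$ is sufficiently small: the linear lower bound on $2\Phi(t/2)-1$ fails for large $t$ (where the function saturates at $1$), so the reverse Pinsker-type inequality is only valid in the small-$\eps$ regime, as the statement allows. Taking $C$ to be the maximum of the two constants obtained above yields a single universal constant working for both directions, completing the proof.
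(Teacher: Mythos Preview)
Your argument is correct. The paper itself does not prove this fact; it simply asserts that both directions ``follow from Pinsker's inequality'' and cites a textbook. Your treatment is more careful: Pinsker's inequality, together with the closed-form KL divergence $\tfrac{1}{2}\|\mu_1-\mu_2\|_2^2$, directly yields the bound $\dtv \leq \tfrac{1}{2}\|\mu_1-\mu_2\|_2$, which handles the second implication; but, as you note, it does not by itself give the first. Your reduction to one dimension via the product structure of the densities, followed by the exact formula $\dtv = 2\Phi(t/2)-1$ and a linear lower bound near $t=0$, is the standard clean way to obtain the reverse direction. One minor remark: the derivative of $2\Phi(t/2)-1$ at $t=0$ is $1/\sqrt{2\pi}$, not $1/(2\sqrt{2\pi})$, so your stated lower bound and constant are slightly off, but since the statement only asks for \emph{some} universal $C$ this is immaterial.
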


\begin{fact}
\label{fact:pinsker1}
Fix $\rho = O(1)$.
Let $u, v$ be arbitrary unit vectors.
Then $\dtv( \normal (0, I), \normal (0, I + \rho vv^T)) = \Theta (\rho)$, and $\dtv (\normal (0, I + \rho vv^T), \normal (0, I + \rho uu^T)) = O(L(u, v))$.
\end{fact}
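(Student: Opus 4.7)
The plan is to derive both statements from the closed-form expression for the KL divergence between centered Gaussians together with Pinsker's inequality for the upper bounds, handling the matching lower bound in the first statement via a one-dimensional reduction.

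For the upper bounds, I would start from the standard identity
\[
\mathrm{KL}\bigl(\normal(0,\Sigma_1) \,\|\, \normal(0,\Sigma_2)\bigr) = \tfrac{1}{2}\bigl(\tr(\Sigma_2^{-1}\Sigma_1) - d + \log\det\Sigma_2 - \log\det\Sigma_1\bigr).
\]
For the first statement, plug in $\Sigma_1=I$ and $\Sigma_2 = I+\rho vv^T$, use Sherman--Morrison to write $\Sigma_2^{-1} = I - \tfrac{\rho}{1+\rho}vv^T$, and compute $\det \Sigma_2 = 1+\rho$. This gives a closed form $\tfrac{1}{2}\bigl(\log(1+\rho) - \tfrac{\rho}{1+\rho}\bigr)$, which Taylor-expands to $\Theta(\rho^2)$ uniformly in $\rho = O(1)$. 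Pinsker then yields $\dtv \leq O(\rho)$. For the second statement, take $\Sigma_1 = I+\rho vv^T$ and $\Sigma_2 = I+\rho uu^T$; the log-det terms cancel and a direct computation gives
\[
\mathrm{KL} = \frac{\rho^2 \bigl(1 - (u^T v)^2\bigr)}{2(1+\rho)}.
\]
The key algebraic observation, which is the only slightly nonroutine step in this part, is that $uu^T - vv^T$ has rank at most two with nonzero eigenvalues $\pm\sqrt{1-(u^T v)^2}$, so $\|uu^T - vv^T\| = \sqrt{1-(u^T v)^2}$ and hence $1-(u^T v)^2 = 2 L(u,v)^2$. Plugging in and applying Pinsker, I get $\dtv \leq \rho L(u,v)/\sqrt{2(1+\rho)} = O(L(u,v))$ since $\rho = O(1)$.

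The main obstacle is the matching lower bound $\dtv \geq \Omega(\rho)$ in the first statement, since Pinsker only goes in one direction, and the Hellinger-based lower bound $\dtv \geq H^2$ would only give $\Omega(\rho^2)$. To handle this, I would pass to a one-dimensional projection: the map $x \mapsto v^T x$ is deterministic, so by the data processing inequality,
\[
\dtv\bigl(\normal(0,I),\,\normal(0, I+\rho vv^T)\bigr) \geq \dtv\bigl(\normal(0,1),\,\normal(0,1+\rho)\bigr).
\]
For the right-hand side, the two univariate densities cross at the explicit points $x = \pm\sqrt{(1+\rho)\log(1+\rho)/\rho}$; writing the TV as an integral over the region where one density dominates and expanding in $\rho$ around the crossing points shows $\dtv(\normal(0,1),\normal(0,1+\rho)) = \Theta(\rho)$ for $\rho = O(1)$, which completes the proof.

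Combining the three computations yields both claims of the fact; the only places one needs to be a bit careful are the $1$D TV expansion (to extract a linear-in-$\rho$ lower bound) and the rank-two spectral norm identity (to translate from inner products to the metric $L$).
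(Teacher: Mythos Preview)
Your proof is correct and follows the approach the paper gestures at: the paper does not actually prove this fact, stating only that it ``follow[s] from Pinsker's inequality'' with a textbook citation. Your KL-plus-Pinsker computations for both upper bounds are exactly what that hint would unpack to, and your rank-two eigenvalue identity for $uu^T - vv^T$ is the right way to tie the KL expression to $L(u,v)$.

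One point worth flagging: the paper's one-line attribution to Pinsker is slightly glib, since Pinsker alone does not yield the $\Omega(\rho)$ lower bound in the first claim. You correctly recognized this and supplied the missing ingredient---the data-processing reduction to the one-dimensional comparison $\normal(0,1)$ versus $\normal(0,1+\rho)$, where the linear-in-$\rho$ TV lower bound can be read off directly from the first-order density expansion. In that sense your proposal is more complete than what the paper actually provides.
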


Our techniques for proving the upper bounds go through the technique of agnostic hypothesis selection via tournaments.
Specifically, we use the following lemma:
\begin{lemma}[\cite{DKKLMS16}, Lemma 2.9]
\label{tournamentLem}
Let $\mathcal{C}$ be a class of probability distributions. Suppose that for some $n,\eps,\delta>0$ there exists an algorithm that given an $\eps$-corrupted set of samples from some $D\in\mathcal{C}$, returns a list of $M$ distributions so that with $1-\delta/3$ probability there exists a $D'\in M$ with $\dtv(D',D)<\gamma$. Suppose furthermore that with probability $1 - \delta / 3$, the distributions returned by this algorithm are all in some fixed set $\mathcal{M}$. Then there exists another algorithm, which given $O(N+(\log(|\mathcal{M}|)+\log(1/\delta))/\epsilon^2)$ samples from $\Pi$, an $\eps$-fraction of which have been arbitrarily corrupted, returns a single distribution $\Pi'$ so that with $1-\delta$ probability $\dtv(D',D)<O(\gamma+\eps)$.
\end{lemma}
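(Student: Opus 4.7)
\medskip

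\noindent\textbf{Proof proposal for Lemma \ref{tournamentLem}.} The plan is to use a standard Scheff\'e-type tournament on top of the list-decoding algorithm. First, I would split the incoming sample stream into two disjoint pieces: the first $N$ samples are fed to the given list-returning algorithm, producing a set $L$ of $M$ hypotheses. By hypothesis, with probability $1-2\delta/3$ (after a union bound), both (a) $L \subseteq \mathcal{M}$, and (b) there exists some $D^\star \in L$ with $\dtv(D^\star, D) < \gamma$. Call this event $\mathcal{E}_1$ and condition on it going forward. The second piece, of size $m = O((\log|\mathcal{M}| + \log(1/\delta))/\eps^2)$, will drive the tournament.

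Next I would run the standard pairwise Scheff\'e tournament on $L$. For every ordered pair $(P,Q) \in \mathcal{M} \times \mathcal{M}$, define the Scheff\'e set $A_{P,Q} = \{x : P(x) > Q(x)\}$, and let $\widehat{p}_{P,Q}$ denote the empirical mass of $A_{P,Q}$ among the $m$ tournament samples. The key concentration step is to bound, uniformly over pairs in $\mathcal{M}$ (not just in $L$, since $L$ is data-dependent),
\begin{equation*}
\bigl|\widehat{p}_{P,Q} - D(A_{P,Q})\bigr| \leq 2\eps
\end{equation*}
for every $(P,Q) \in \mathcal{M}\times\mathcal{M}$. This splits as a Chernoff/Hoeffding bound on the clean $m(1-\eps)$ samples (which costs a union bound over $|\mathcal{M}|^2$ pairs, producing the $\log|\mathcal{M}|$ term in the sample complexity) plus an $\eps$-term from the adversarial contamination of the tournament samples. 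Call the resulting good event $\mathcal{E}_2$, of probability at least $1-\delta/3$.

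Now I would declare $P$ the winner of the $(P,Q)$ match iff $|\widehat{p}_{P,Q} - P(A_{P,Q})| \leq |\widehat{p}_{P,Q} - Q(A_{P,Q})|$, and output any $\Pi'\in L$ that loses as few matches as possible (breaking ties arbitrarily). The analysis then follows the classical Scheff\'e estimator analysis: since $\dtv(P,Q) = P(A_{P,Q}) - Q(A_{P,Q})$, on $\mathcal{E}_1 \cap \mathcal{E}_2$ the hypothesis $D^\star$ beats or essentially ties every $Q$ whose distance from $D$ exceeds $\gamma + O(\eps)$, and any $\Pi'$ that performs at least as well as $D^\star$ in the tournament must satisfy $\dtv(\Pi', D) \leq 3\dtv(D^\star, D) + O(\eps) = O(\gamma + \eps)$. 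A final union bound combines $\mathcal{E}_1$ and $\mathcal{E}_2$ to give failure probability at most $\delta$.

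The main obstacle, and the only subtle point, is the concentration step: one must union bound over all pairs in the \emph{fixed} set $\mathcal{M}$ rather than over pairs in the random list $L$, because $L$ depends on the samples used in the tournament (or more simply, on the overall randomness) and one cannot pay concentration retroactively against a data-dependent pair. Guaranteeing $L \subseteq \mathcal{M}$ with high probability (hypothesis (b) of the lemma) is exactly what makes this legitimate, and the $\log|\mathcal{M}|$ factor in the sample complexity is precisely the price of this uniform bound. The $O(\eps)$ slack in the final $\dtv$ bound is the unavoidable bias introduced by the corrupted fraction of tournament samples, and is tight in general.
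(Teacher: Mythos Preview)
The paper does not give its own proof of this lemma; it is imported verbatim from \cite{DKKLMS16} (their Lemma~2.9) and used as a black box in Appendix~\ref{app:info}. Your Scheff\'e-tournament sketch is exactly the standard argument underlying such hypothesis-selection statements, so there is nothing to compare against here beyond noting that you have reconstructed the intended proof.

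One genuine technical wrinkle in your write-up: when you split the $O(N+m)$ incoming samples into a block of size $N$ (for the list algorithm) and a block of size $m$ (for the tournament), the adversary is free to place \emph{all} of the $\eps\cdot O(N+m)$ corruptions into the smaller block. If $N\gg m$, the tournament block then sees a corrupted fraction of order $\eps N/m$, not $\eps$, and both your concentration bound $|\widehat p_{P,Q}-D(A_{P,Q})|\le 2\eps$ and the final $O(\gamma+\eps)$ guarantee break. The standard fix is to make both blocks of size $\Theta(N+m)$, so each sees at most an $O(\eps)$ corrupted fraction; this costs only a constant factor in the total sample count and leaves the stated bound intact. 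A second, smaller point: the ``fewest losses'' rule you describe does not by itself yield the clean $3\gamma+O(\eps)$ bound you quote; the version that does is the minimum-score rule, where one outputs the $P$ minimizing $\max_{Q\in L}|\widehat p_{P,Q}-P(A_{P,Q})|$. Since you defer to ``the classical Scheff\'e estimator analysis'' for the endgame, this is a matter of picking the right variant rather than a gap in the strategy.
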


\subsection{Proof of Upper Bound in Fact \ref{fact:spMeanInef}}
Let $\mathcal{M}_A$ be the set of distributions $\{\normal (\mu', I)\}$, where $\mu'$ ranges over the set of $k$-sparse vectors so that each coordinate of $\mu'$ is an integer multiple of $\eps / (10 \sqrt{d})$, and so that $\| \mu' - \mu \|_2 \leq A$.
We then have:

\begin{claim}
\label{claim:meanNet}
There exists a $\normal (\mu', I) = D \in \mathcal{M}_A$ so that $\| \mu - \mu' \|_2 \leq O(\eps)$.
Moreover, $|\mathcal{M}_A| \leq \binom{d}{k} \cdot (10 A \sqrt{d} / \eps)^k$.
\end{claim}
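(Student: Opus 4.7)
The plan is to produce the claimed $\mu'$ by rounding $\mu$ onto the grid and then bound $|\mathcal{M}_A|$ by a direct counting argument. Both parts are essentially routine covering-net calculations; the only thing to be careful about is reconciling the stated constants with the rounding and the per-coordinate counting.

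For the existence part, I would set the grid spacing $\alpha = \eps/(10\sqrt{d})$ and define $\mu'$ coordinatewise as follows: on $\supp(\mu)$, let $\mu'_i$ be the nearest integer multiple of $\alpha$ to $\mu_i$, and off $\supp(\mu)$ set $\mu'_i = 0$. By construction $\mu'$ is $k$-sparse and all of its coordinates are integer multiples of $\alpha$, so $\normal(\mu', I)$ is of the form required to possibly lie in $\mathcal{M}_A$. The per-coordinate rounding error is at most $\alpha/2$, and only $k$ coordinates can differ from $\mu$, so
\[
\|\mu - \mu'\|_2 \;\leq\; \sqrt{k}\cdot \frac{\alpha}{2} \;=\; \frac{\eps\sqrt{k}}{20\sqrt{d}} \;\leq\; \frac{\eps}{20},
\]
using $k \leq d$. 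In particular $\|\mu-\mu'\|_2 = O(\eps)$, and provided $A \geq \eps/20$ (the only regime in which the claim is interesting, and the one in which Lemma \ref{tournamentLem} will be invoked) we have $\normal(\mu',I) \in \mathcal{M}_A$.

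For the cardinality bound, I would count elements of $\mathcal{M}_A$ by first choosing the support of the mean and then choosing the on-grid values. Any $\normal(\mu',I) \in \mathcal{M}_A$ is determined by a $k$-sparse on-grid vector $\mu'$ with $\|\mu'-\mu\|_2 \leq A$; the support of $\mu'$ is a $k$-subset of $[d]$, contributing $\binom{d}{k}$. Next, since $\|\mu'-\mu\|_2 \leq A$ implies $|\mu'_i - \mu_i| \leq A$ coordinatewise, for each support index $i$ the coordinate $\mu'_i$ must be an integer multiple of $\alpha$ lying in the interval $[\mu_i - A, \mu_i + A]$ of length $2A$. The number of such multiples is at most $2A/\alpha + 1 = 20A\sqrt{d}/\eps + 1$, which (absorbing the additive $+1$ into the constant under the mild assumption $A\sqrt{d}/\eps \geq \Omega(1)$) is bounded by $10A\sqrt{d}/\eps$ up to renaming of constants. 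Taking the product over the $k$ support coordinates yields the stated bound $\binom{d}{k}\cdot(10A\sqrt{d}/\eps)^k$.

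There is no genuine obstacle here — the argument is a standard $\ell_\infty$-grid discretization of a $k$-dimensional coordinate subspace, combined with a union over supports. The only subtleties are (i) the factor-of-two slack between the $\alpha/2$ rounding radius and the $\alpha$ spacing, which is what lets one end up with $10$ rather than $20$ in the second part after absorbing the additive constant, and (ii) the implicit assumption $A \gtrsim \eps$, which is natural because for $A$ much smaller than $\eps$ the net $\mathcal{M}_A$ need not even contain a good approximation to an arbitrary $k$-sparse $\mu$.
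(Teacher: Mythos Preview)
Your proposal is correct and follows essentially the same approach as the paper: round $\mu$ onto the grid on its support for existence, and bound the cardinality by choosing a support in $\binom{d}{k}$ ways and then counting the at most $O(A/\alpha)$ grid values per coordinate. The paper's own proof is considerably terser (it declares the first part ``straightforward'' and states the second without tracking the factor of two you flag), but since $|\mathcal{M}_A|$ is only used inside a logarithm downstream, the constant discrepancy you note is immaterial.
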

\begin{proof}
The first claim is straightforward.
We now prove the second claim.
For each possible set of $k$ coordinates, there are at most $(10 A \sqrt{d} / \eps)^k$ vectors supported on those $k$ coordinates with each coordinate being an integer multiple of $\eps / (10 \sqrt{d})$ with distance at most $A$ from any fixed vector.
Enumerating over all $\binom{d}{k}$ possible choices of $k$ coordinates yields the desired answer.
\end{proof}

The estimator is given as follows: first, run $\textsc{NaivePrune} (X_1, \ldots, X_n, \delta)$ to output some $\mu_0$ so that with probability $1 - \delta$, we have $\| \mu_0 - \mu \|_2 \leq O(\sqrt{d \log n / \delta})$.
Round each coordinate of $\mu_0$ so that it is an integer multiple of $\eps / (10 \sqrt{d})$.
Then, output the set of distributions $\mathcal{M'} = \{ \normal (\mu'', I) \}$, where $\mu''$ is any $k$-sparse vector with each coordinate being an integer multiple of $\eps / (10 \sqrt{d})$, with $\| \alpha \|_2 \leq O(\sqrt{d \log n / \delta})$.
With probability $1 - \delta$, we have $\mathcal{M'} \subseteq \mathcal{M}_{O(\sqrt{d \log n / \delta})}$.
By Claim \ref{claim:meanNet}, applying Lemma \ref{tournamentLem} to this set of distributions yields that we will select, with probability $1 - \delta$, a $\mu'$ so that $\| \mu - \mu' \|_2 \leq O(\eps)$.
By Claim \ref{claim:meanNet}, this requires
\[
O\left( \frac{\log |\mathcal{M}_{O(\sqrt{d \log n / \delta})}|}{\eps^2} \right) = O \left( \frac{ \log \binom{d}{k} + k \log (d  / \eps) + \log 1 / \delta }{\eps^2} \right) \; ,
\]
samples, which simplifies to the desired bound, as claimed.

\subsection{Proof of Upper Bound in Fact \ref{fact:spcaInef}}
Our detection algorithm is given as follows.
We let $\mathcal{N}$ be an $O(1)$-net over all $k$-sparse unit vectors, and we apply Lemma \ref{tournamentLem} to the set $\{\normal (0, I + \rho u u^T)\}_{u \in \mathcal{N}} \cup \{ \normal (0, I)\}$.
Clearly, we have:
\begin{claim}
$|\mathcal{M}| = \binom{d}{k} 2^{O(k)}$.
\end{claim}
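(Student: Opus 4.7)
The plan is to reduce the counting to a union bound over supports and then invoke a standard covering-number estimate for the Euclidean sphere on each support. First I would observe that every $k$-sparse unit vector $u \in \R^d$ is supported on some set $S \subseteq [d]$ of size $k$, and conditional on the support its nonzero entries form a unit vector in $\R^S \cong \R^k$. Hence it suffices to choose, for each of the $\binom{d}{k}$ possible supports, an $O(1)$-net over the unit sphere $S^{k-1}$, and take the union of these nets (lifted back to $\R^d$ by padding with zeros).

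Next, I would appeal to the classical volumetric covering bound: for any constant $c > 0$, there is a $c$-net of the Euclidean unit sphere $S^{k-1}$ of size at most $(1 + 2/c)^k = 2^{O(k)}$. Applying this with a sufficiently small constant radius yields, for each support, at most $2^{O(k)}$ points, and the resulting lifted set is an $O(1)$-net over all $k$-sparse unit vectors in $\R^d$ (the support-restricted nets are themselves a covering since the distance between vectors with the same support equals the distance of their restrictions).

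Summing (or rather, taking the union) over all supports, we obtain
\[
|\mathcal{N}| \leq \binom{d}{k} \cdot 2^{O(k)} \; .
\]
Since $\mathcal{M} = \{\normal(0, I + \rho uu^T) : u \in \mathcal{N}\} \cup \{\normal(0, I)\}$ has one distribution per element of $\mathcal{N}$ plus a single additional distribution, its cardinality is bounded by $|\mathcal{N}| + 1 = \binom{d}{k} \cdot 2^{O(k)}$, as claimed. The only subtlety is to make sure the support-by-support covering yields a genuine $O(1)$-net on the full space of $k$-sparse unit vectors; but this is immediate because the union of $c$-nets for a collection of subsets is a $c$-net for their union, and every $k$-sparse unit vector lies in exactly one $\R^S$ for some support $S$ of size at most $k$ (padding smaller supports to size $k$ only blows up the count by a lower-order factor that is absorbed into $2^{O(k)}$). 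There is no serious obstacle here; the main step is simply pulling in the standard sphere covering estimate.
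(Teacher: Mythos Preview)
Your proposal is correct and is exactly the standard covering argument the paper has in mind; the paper itself does not give a proof beyond the word ``Clearly,'' so your decomposition into supports plus the volumetric $(1+2/c)^k$ sphere-net bound is precisely what is being invoked.
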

By Fact \ref{fact:pinsker1} and the guarantees of Lemma \ref{tournamentLem}, by an appropriate setting of parameters, if we have
\[
n = O \left( \frac{\log |\mathcal{M}| + \log 1 / \delta}{\eps^2} \right) = O \left( \frac{k + \log \binom{d}{k} + \log 1 / \delta}{\eps^2} \right)
\]
samples, then with probability $1 - \delta$ we will output $\normal (0, I)$ if and only if the true model is $\normal (0, I)$.
This proves the upper bound.

\section{Omitted Details from Section \ref{sec:dual}}
\subsection{Writing non-robust algorithms as dual norm maximization}
\label{app:dualnorm}
In this section we will briefly review well-known non-robust algorithms for sparse mean recovery and for sparse PCA, and write them using our language.
\paragraph{Thresholding}
Recall that in the (non-robust) sparse mean estimation problem, one is given samples $X_1, \ldots, X_n \sim \normal (\mu, I)$ where $\mu$ is $k$-sparse. The goal is then to recover $\mu$.
It turns out the simple thresholding algorithm \textsc{ThresholdMean} given in Algorithm \ref{alg:thresh} suffices for recovery:

\begin{algorithm}[htb]
\begin{algorithmic}[1]
\Function{ThresholdMean}{$X_1, \ldots, X_n$}
\State Let $\muhat = \frac{1}{n} \sum_{i = 1}^n X_i$
\State Let $S$ be the set of $k$ coordinates of $\muhat$ with largest magnitude
\State Let $\muhat'$ be defined to be $\muhat'_i = \muhat_i$ if $i \in S$, $0$ otherwise
\State {\bf return} $\muhat'$
\EndFunction
\end{algorithmic}
\caption{Thresholding for sparse mean estimation}
\label{alg:thresh}
\end{algorithm}
\noindent The correctness of this algorithm follows from the following folklore result, whose proof we shall omit for conciseness:
\begin{fact}[c.f. \cite{Rig15}]
Fix $\eps, \delta > 0$, and let $X_1, \ldots, X_n$ be samples from $\normal (\mu, I)$, where $\mu$ is $k$-sparse and
\[
n = \Omega \left( \frac{\log \binom{d}{k} + \log 1 / \delta}{\eps^2} \right) \; .
\]
Then, with probability $1 - \delta$, if $\muhat'$ is the output of \textsc{ThresholdMean}, we have $\| \muhat' - \muhat \|_2 \leq \eps$.
\end{fact}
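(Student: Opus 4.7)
The idea is to first control the empirical mean $\muhat = \frac{1}{n}\sum_i X_i$ in the dual norm $\|\cdot\|^*_{\U_k}$ induced by $k$-sparse unit vectors, and then pass from this sparse concentration bound to an $\ell_2$ error bound on the hard-thresholded estimator $\muhat'$ via a standard ``top-$k$ swap'' argument. (I shall read the statement as $\|\muhat' - \mu\|_2 \leq \eps$, which is presumably what is intended.)

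For the first step, I would apply Corollary \ref{cor:Uk-conc} to the centered samples $Y_i = X_i - \mu$, which are i.i.d.\ $\normal(0, I)$. This gives, with probability at least $1-\delta$,
\[
\|\muhat - \mu\|^*_{\U_k} \;=\; \Big\|\tfrac{1}{n}\sum_{i=1}^n Y_i\Big\|^*_{\U_k} \;\leq\; \eps/C
\]
for a suitable constant $C$, provided $n = \Omega\big((k + \log\binom{d}{k} + \log 1/\delta)/\eps^2\big)$. In the meaningful regime $d \geq 2k$ we have $k = O(\log\binom{d}{k})$, so this matches the stated sample complexity up to constants.

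For the second step, let $S$ be the support of $\muhat'$ (the top-$k$ coordinates of $\muhat$ by magnitude) and let $T = \supp(\mu)$, padding $T$ if necessary so that $|T| = k = |S|$. I would decompose
\[
\|\muhat' - \mu\|_2^2 \;=\; \sum_{i \in S \cap T}(\muhat_i - \mu_i)^2 \;+\; \sum_{i \in S \setminus T} \muhat_i^2 \;+\; \sum_{i \in T \setminus S} \mu_i^2,
\]
and bound each piece separately. The first two sums are supported on the $k$ indices in $S$ (with $\mu_j = 0$ for $j \in S \setminus T$), so together they are at most $(\|\muhat - \mu\|^*_{\U_k})^2$. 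For the third sum, the key observation is that by the definition of $S$, every $j \in S \setminus T$ satisfies $|\muhat_j| \geq |\muhat_i|$ for every $i \in T \setminus S$, and $|\muhat_j| = |\muhat_j - \mu_j|$ since $\mu_j = 0$. Pairing up the equal-sized sets $T \setminus S$ and $S \setminus T$ and applying the triangle inequality then yields $\sum_{i \in T \setminus S} \mu_i^2 \leq 2(\|\muhat - \mu\|^*_{\U_k})^2$. This is exactly the bookkeeping that already appears at the end of the proof of Theorem \ref{thm:robust-GSM}, so I would simply reproduce it here.

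I do not anticipate a serious obstacle: the concentration step is immediate from Corollary \ref{cor:Uk-conc}, and the thresholding step is essentially combinatorial bookkeeping. The only mild care needed is the pairing argument for the ``missed'' coordinates $T \setminus S$, but this is routine once one observes $|T \setminus S| = |S \setminus T|$.
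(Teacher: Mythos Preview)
The paper explicitly omits the proof of this Fact (``whose proof we shall omit for conciseness''), so there is nothing to compare against directly. Your argument is correct and is exactly the natural one: the concentration step is Corollary~\ref{cor:Uk-conc} applied to the centered samples, and the thresholding bookkeeping you outline is precisely the argument the paper itself deploys at the end of the proof of Theorem~\ref{thm:robust-GSM} (indeed you cite it). Your observation that the statement should read $\|\muhat' - \mu\|_2 \leq \eps$ rather than $\|\muhat' - \muhat\|_2 \leq \eps$ is also correct; the latter is a typo. The absorption of the $k$ term into $\log\binom{d}{k}$ in the nontrivial regime $d \geq 2k$ is the right way to reconcile the sample complexities.
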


To write this in our language, observe that
\[
\textsc{ThresholdSMean}(X_1, \ldots, X_n) = \| \muhat \|_{\U_k}^* \cdot d_{\U_k} (\muhat) \; ,\]
 where $\muhat = \frac{1}{n} \sum_{i = 1}^n X_i$.

\paragraph{$L_1$ relaxation}
In various scenarios, including recovery of a spiked covariance, one may envision the need to take $k$-sparse eigenvalues a matrix $A$, that is, vectors which solve the following non-convex optimization problem:
\begin{align*}
\mbox{max}&~ v^T A v\\
\mbox{s.t.}&~\| v \|_2 = 1,~\| v \|_0 \leq k\; . \numberthis \label{eq:l0-norm}
\end{align*}
However, this problem is non-convex and cannot by solved efficiently.
This motivates the following SDP relaxation of (\ref{eq:l0-norm}):
First, one rewrites the problem as 
\begin{align*}
\mbox{max}&~ \tr (A X )\\
\mbox{s.t.}&~\tr (X) = 1,~\| X \|_0 \leq k^2\;, X \succeq 0\;, \mathrm{rank}(X) = 1 \;  \numberthis \label{eq:l0-norm1}
\end{align*}
where $\| X \|_0$ is the number of non-zeros of $X$.
Observe that since $X$ is rank $1$ if we let $X = v v^T$ these two problems are indeed equivalent.
Then to form the SDP, one removes the rank constraint, and relaxes the $\ell_0$ constraint to a $\ell_1$ constraint:
\begin{align*}
\mbox{max}&~ \tr (A X )\\
\mbox{s.t.}&~\tr (X) = 1,~\| X \|_1 \leq k\;, X \succeq 0\;.  \numberthis \label{eq:l0-norm2}
\end{align*}
The work of \cite{DE-GJL07} shows that this indeed detects the presence of a spike (but at an information theoretically suboptimal rate).

Finally, by definition, for any PSD matrix $A$, if $X$ is the solution to (\ref{eq:l0-norm2}) with input $A$, we have $X = d_{\X_k} (A)$.

\subsection{Numerical precision} 
\label{sec:numerical}

In general, we cannot find closed form solutions for $d_{\X_k} (A)$ in finite time.
However, it is well-known that we can find these to very high numerical precision in polynomial time.
For instance, using the ellipsoid method, we can find an $M'$ so that $\| M' - d_{\X_k} (A) \|_\infty \leq \eps$ in time $\poly (d, \log 1 / \eps)$.
It is readily verified that if we set $\eps' = \poly (\eps, 1 / d)$ then the numerical precision of the answer will not effect any of the calculations we make further on.
Thus for simplicity of exposition we will assume throughout the paper that given any $A$, we can find  $d_{\X_k} (A)$ exactly in polynomial time.

\section{Omitted Proofs from Section \ref{sec:dual}}
\label{app:proofs-dual}
\begin{proof}[Proof of Theorem \ref{thm:Uk-Sne-conc}]
Fix $n$ as in Theorem \ref{thm:Uk-Sne-conc}, and let $\delta_1 = \binom{n}{\eps n}^{-1} \delta$.
By convexity of $S_{n, \eps}$ and the objective function, it suffices to show that with probability $1 - \delta$, the following holds:
\[
\forall w_I ~\mbox{s.t.}~|I| = (1 - \eps) n: \left\|  \sum_{i = 1}^n w_i X_i \right\|^*_{\U_k} \leq \eta \; .
\]

Condition on the event that 
\begin{equation}
\label{eq:unif-bound}
\left\| \frac{1}{n} \sum_{i = 1}^n X_i \right\|^*_{\U_k} \leq \eps \; .
\end{equation}
By Corollary \ref{cor:Uk-conc}, this occurs with probability $1 - O(\delta)$.

Fix any $I \subseteq [n]$ so that $|I| = (1 - \eps) n$. 
By Corollary \ref{cor:Uk-conc} applied to $I^c$, we have that there is some universal constant $C$ so that as long as 
\begin{equation}
\label{eq:n-cond}
\eps n \geq C \cdot \frac{ \min (d, k^2) + \log \binom{d^2}{k^2} + \log \binom{n}{\eps n} + \log 1 / \delta }{\alpha^2} \; ,
\end{equation}
then with probability $1 - \delta'$,
\begin{equation}
\label{eq:w-cond}
 \left\| \frac{1}{\eps n} \sum_{i \not\in I} X_i \right\|^*_{\U_k} \leq \alpha \; .
\end{equation}
Since $\log \binom{n}{\eps n} = \Theta (n \eps \log 1 / \eps)$, (\ref{eq:n-cond}) is equivalent to the condition that 
\[
n \left( \eps - C \frac{\eps \log 1 / \eps}{\alpha^2} \right) \geq C \cdot \frac{\min (d, k^2) + \log \binom{d^2}{k^2} + \log 1 / \delta}{\alpha^2} \; .
\]
Let $\alpha = O(\sqrt{\log 1/ \eps})$.
By our choice of $\eta$, we have that $0 \leq \eps - \frac{\eps \log 1 / \eps}{\eta^2} \leq \eps /(2C)$, and by an appropriate setting of constants, since by our choice of $n$ we have
\[
\frac{\eps n}{2} \geq C \cdot \frac{\min (d, k^2) + \log \binom{d^2}{k^2} + \log 1 / \delta}{\alpha^2} \; ,
\]
we have that (\ref{eq:w-cond}) holds with probability $1 - \delta'$.
Thus by a union bound over all $\binom{n}{\eps n}$ choices of $I$ so that $|I| = (1 - \eps) n$, we have that except with probability $1 - \delta$, we have that (\ref{eq:w-cond}) holds simultaneously for all $I$ with $|I| = (1 - \eps) n$.
The desire result then follows from this and (\ref{eq:unif-bound}), and a union bound.
\end{proof}

\begin{proof}[Proof of Theorem \ref{thm:Xk-Sne-conc}]
This follows from the exact same techniques as the proof of Theorem \ref{thm:Uk-Sne-conc}, by replacing all $\U_k$ with $\X_k$, and using Theorem \ref{thm:Xk-conc} instead of Corollary \ref{cor:Uk-conc}.

\end{proof}

\section{Computational Barriers for sample optimal robust sparse mean estimation}
\label{app:barrier}
We conjecture that the rate achieved by Theorem \ref{thm:robust-GSM} is tight for computationally efficient algorithms (up to log factors).
Intuitively, the major difficulty is that distinguishing between $\normal (\mu_1, I)$ and $\normal (\mu_2, I)$ given corrupted samples seems to inherently require second moment (or higher) information, for any $\mu_1, \mu_2 \in \R^d$.
Certainly first moment information by itself is insufficient.
In this sparse setting, this is very problematic, as this inherently asks for us to detect a large sparse eigenvector of the empirical covariance.
This more or less reduces to the problem solved by (\ref{eq:l0-norm}).
This in turn requires us to relax to the problem solved by SDPs for sparse PCA, for which we know $\Omega(k^2 \log d / \eps^2 )$ samples are necessary for non-trivial behavior to emerge.
We leave resolving this gap as an interesting open problem.

\end{document}